\documentclass{article}
\usepackage{arxiv, times}
\usepackage{natbib}
\usepackage{amsmath,amssymb}
\usepackage{graphicx}
\usepackage{float}
\usepackage{algorithm,algpseudocode}
\usepackage{booktabs}
\usepackage{longtable,array}
\usepackage{titletoc}
\usepackage{hyperref}
\newtheorem{mtheorem}{Theorem}[section]
\newtheorem{mcorollary}[mtheorem]{Corollary}
\newtheorem{mlemma}[mtheorem]{Lemma}
\newtheorem{mcondition}[mtheorem]{Condition}
\newtheorem{massumption}[mtheorem]{Assumption}
\hypersetup{hidelinks,pdfauthor={},pdftitle={Tensor Bandits}}
\titlecontents{appsection}[1.8em]
  {\bfseries}{\contentslabel{1.8em}}{}
  {\titlerule*[.5pc]{.}\contentspage}
\titlecontents{appsubsection}[4em]
  {}{\contentslabel{2.8em}}{}
  {\titlerule*[.5pc]{.}\contentspage}

\newcommand{\R}{\mathbb{R}}

\newcommand{\Regret}{R_T}
\newcommand{\inner}[2]{\left\langle #1,#2\right\rangle}
\newcommand{\norm}[1]{\left\lVert #1\right\rVert}
\newcommand{\entone}[1]{\norm{#1}_{\mathrm{ent},1}}
\newcommand{\maxnorm}[1]{\norm{#1}_{\max}}
\newcommand{\frobnorm}[1]{\norm{#1}_{F}}
\newcommand{\vecop}{\operatorname{vec}}

\title{Low-Rank Single-Index Bandits with\\Unknown Links: From Matrices to Tensors}

\author{  Zhongxuan Liu \\
  Department of Statistics\\
  University of California, Davis\\
  Davis, CA 95616 \\
  \texttt{bkbliu@ucdavis.edu} \\
  \And
  {Yue Kang} \\
  {Microsoft}\\
  {Redmond, WA 98052}\\
  {\texttt{yuekang@microsoft.com}} \\
  \And
  {Thomas C. M. Lee} \\
  {Department of Statistics}\\
  {University of California, Davis}\\
  Davis, CA 95616 \\
  {\texttt{tcmlee@ucdavis.edu}} \\
}

\begin{document}
\maketitle
\begin{abstract}
Low-rank matrix and tensor bandits exploit structured interactions but typically assume a known reward link. Recent single-index bandit methods accommodate unknown links without directly exploiting matrix or tensor rank. We address this gap by studying stochastic matrix and tensor bandits with an unknown shared Lipschitz link and a low-rank index parameter under known regular candidate distributions and finite-variance noise. For monotone links, T-ESTOR combines robust, rank-adaptive Stein estimation with epoch-based greedy selection. Under exact selected-score access and a uniformly positive selected-design Stein signal, it achieves square-root regret with dimension dependence determined by the low-rank structure. For every admissible design, the monotone lower bound matches the rank, dimension, and horizon dependence up to logarithmic factors at large horizons, for fixed menu size and model/design constants. For nonmonotone links under a nonzero base-law Stein signal, T-BSTOR combines structured estimation with robust bin-based learning and attains the optimal $\widetilde{O}(T^{2/3})$ horizon rate for fixed dimensions, menu size, and model/design constants. Synthetic and CCLE-based experiments illustrate the benefits of structured estimation relative to vectorized and competing single-index baseline methods.
\end{abstract}

\section{Introduction}
\label{sec:introduction}

Contextual bandits model decisions with feedback only for the selected action. In recommendation and experimental design, rewards often depend on interactions among users, items, and contexts, or compounds, cell lines, and doses. Matrix and tensor representations expose these interactions, while low rank reduces the number of unknown parameters. Bilinear and generalized low-rank matrix bandits exploit this structure \citep{jun2019bilinear,lu2021lowrank,kang2022efficient}; tensor bandits extend it to additional feature groups \citep{shi2023tensor,yi2024efficient}.

Low rank and knowledge of the reward response are different assumptions: interactions may admit a compact representation while their effect on the outcome is unknown. Single-index bandits model this uncertainty as $f(\inner{X}{\Theta_*})$, with both the index $\Theta_*$ and shared link $f$ unknown. \citet{kang2026single} and \citet{dey2026optimal} develop estimation-based methods for monotone and nonmonotone rewards. Their vector formulations do not directly exploit matrix or tensor rank: a low-rank parameter can be dense, unlike the coordinate-sparse extension of \citet{kang2026single}. Can regret depend on this interaction structure while retaining single-index horizon rates?

We study low-rank matrix and transformed-slice tensor bandits with an unknown Lipschitz link, known regular joint candidate laws, and finite-variance noise. Building on Stein and spectral estimation \citep{pmlr-v70-yang17a,minsker2018sub,kang2022efficient,kang2026single}, paired observations remove the unrestricted reward offset from moment bounds, while nuclear-norm regularization preserves dependence on matrix rank or total transformed-slice rank without knowing the ranks. We bound selected-score second moments uniformly over ranking directions, connecting these estimation guarantees to greedy decisions. The joint-law analysis permits dependence within candidates and among transformed slices, while preserving the stated rank dependence. T-ESTOR attains $\widetilde O(\sqrt T)$ regret for monotone links under exact selected-score access and a uniformly positive selected-design Stein signal. For nonmonotone links, T-BSTOR controls robust scalar-bin estimates under adaptive sampling, attaining $\widetilde O(T^{2/3})$ regret under a nonzero base-law Stein signal and the stated distributional conditions. For each admissible design, the monotone lower bound matches rank, dimension, and horizon dependence up to logarithmic factors at sufficiently large horizons, for fixed menu size and model/design constants uniformly controlled across dimensions. The nonmonotone lower bound fixes the menu size and positive signal floor, establishing horizon optimality for fixed dimensions and model/design constants under its stated signal and noise conditions. Synthetic and CCLE-based experiments compare structured estimation with vectorized and competing single-index methods.

\subsection{Related Work}
\label{sec:related-work}

\noindent\textbf{Low-rank matrix bandits.} \citet{jun2019bilinear} estimate row and column spaces for bilinear rewards before regularized linear-bandit learning. \citet{lu2021lowrank} extend this approach to matrix actions and known generalized-linear links. \citet{kang2022efficient} use Stein-based subspace estimation; their estimator already combines spectral robustification and a nuclear-norm-penalized quadratic objective, with a single-index extension discussed in \citet{kang2024frameworks}. \citet{pmlr-v235-jang24e} account for arm-set geometry through experimental design, and \citet{kang2024heavytailed} handle heavy-tailed linear rewards by truncation and dynamic exploration.

\noindent\textbf{Tensor bandits.} \citet{zhou2024stochastic} select entries or entity combinations in a low-rank reward tensor, rather than dense array actions with an unknown shared link. \citet{shi2023tensor} use Tucker structure and optimistic selection for linear rewards. \citet{yi2024efficient} use transformed t-product rank and nuclear-norm regularization with a specified generalized-linear link. \citet{li2025unified} accommodate several tensor structures in a generalized-linear explore-then-commit framework.

\noindent\textbf{Single-index bandits.} \citet{kang2026single} study fresh menus and a shared unknown link: ESTOR uses Stein estimation and greedy epochs for monotone links under regular designs and finite-variance noise, with a sparse-vector extension; GSTOR uses kernel regression for nonmonotone links under Gaussian designs. \citet{dey2026optimal} combine index estimation and scalar bins for $T^{2/3}$ regret under a positive-signal condition and conditionally sub-Gaussian noise. These are the algorithmic foundations of T-ESTOR and T-BSTOR. The lower-bound construction of \citet{dey2026optimal} uses a growing menu and a decreasing signal floor; Appendix~\ref{app:nonmonotone-lower} fixes both for each admissible design under its stated conditions. \citet{ma2025nonparametric} study arm-specific rewards under monotonicity, smoothness, and margin conditions, with minimax regret and smoothness adaptation under self-similarity. \citet{arya2026batched} study batched learning with a shared index and arm-specific links; \citet{arya2026kernel} develop kernel/Stein learning and inference under adaptive sampling with persistent arms and arm-specific indices and links. These models differ from our shared low-rank index and common link on fresh menus.

\noindent\textbf{Structured single-index estimation.} \citet{plan2016generalized} analyze the generalized Lasso for offline nonlinear observations under Gaussian measurements. For known non-Gaussian designs, Stein methods cover sparse vectors and low-rank matrices \citep{pmlr-v70-yang17a}, and fourth-order tensors via square matricization \citep{yang2018stein}. \citet{fan2023implicit} study implicit regularization for sparse vectors and symmetric low-rank matrices. These offline guarantees require further analysis for bandits: selection changes the observation law, and estimation errors must be converted into sequential regret.
\section{Preliminaries}
\label{sec:preliminaries}

\subsection{Problem Setting}
\label{sec:bandit-model}

We study stochastic bandits with actions in $\R^{d_1\times\cdots\times d_h}$, where $h=2$ for matrices and $h\geq3$ for higher-order tensors. At round $t\in[T]$, $T\geq1$, the learner observes a set $\mathcal X_t=\{X_{t,a}:a\in[K]\}$ of $K\geq3$ candidates drawn iid from a known joint density $p$, independently of the past. Entries within each candidate may be dependent. Using the observed history, $\mathcal X_t$, and private randomness, the learner selects $X_t\in\mathcal X_t$ and observes only its reward, $y_t=f(\inner{X_t}{\Theta_*})+\eta_t.$ The unknown low-rank parameter $\Theta_*$ and unknown Lipschitz link $f$ are shared across actions and rounds; $\eta_t$ is conditionally mean-zero noise with a variance bound $\sigma^2$, drawn independently from a fixed distribution depending only on $X_t$, and $\inner XZ$ is the entrywise inner product. We first consider nondecreasing links, for which maximizing the index maximizes expected reward; Section~\ref{sec:nonmonotone} extends to nonmonotone links. The goal is to minimize expected cumulative regret $\mathbb E[\Regret]$ across $T$ rounds, where $\Regret$ is defined as,
\begin{equation}
  \Regret=\sum_{t=1}^{T}\left[
    \max_{X\in\mathcal X_t}f\!\left(\inner X{\Theta_*}\right)
    -f\!\left(\inner{X_t}{\Theta_*}\right)\right].
  \label{eq:regret}
\end{equation}

\subsection{Low-Rank Structure}
\label{sec:notation-structures}

For a positive integer $d$, write $[d]=\{1,\ldots,d\}$. An order-$h$ tensor with dimensions $d_1,\ldots,d_h$ has $N=\prod_{\ell=1}^h d_\ell$ entries. We use the inner product $\inner{X}{Z}=\vecop(X)^\top\vecop(Z)$, the Frobenius norm $\frobnorm{X}=\sqrt{\inner{X}{X}}$, the entrywise norms, and the $\ell_1$-ball for some $B>0$:
\[
  \maxnorm X=\norm{\vecop(X)}_\infty,\qquad
  \entone X=\norm{\vecop(X)}_1,\qquad
  \mathcal B(B)=\{Z:\entone Z\leq B\}.
\]
Throughout, we assume that $\Theta_*\in\mathcal B(B)$, with the entrywise $\ell_1$ budget imposed in the original coordinates. We next describe the low-rank structure imposed on $\Theta_*$ in the matrix and tensor settings.

\noindent\textbf{Matrices.}
In the matrix setting, $X,\Theta_*\in\R^{d_1\times d_2}$, and we write $d=\max(d_1,d_2)$. The operator norm $\norm{\cdot}_{\mathrm{op}}$ is the largest singular value, while the nuclear norm $\norm{\cdot}_*$ is the sum of the singular values. For an integer $1\leq r\leq\min(d_1,d_2)$, we consider the class of low-rank matrices with rank at most $r$,
\[
 \mathcal C_{\mathrm M}(r,B)
 =\{Z\in\mathcal B(B):\operatorname{rank}(Z)\leq r\},
 \qquad Z\in\R^{d_1\times d_2}.
 \label{eq:matrix-class}
\]
\noindent\textbf{Transformed-slice tensors.}
Following tensor-product factorizations \citep{kilmer2011factorization} and their transform-based extension \citep{kernfeld2015tensor}, we group the last $h-2$ modes into $q=\prod_{\ell=3}^h d_\ell$, giving a $d_1\times d_2\times q$ array. A fixed, known real orthogonal matrix $U\in\R^{q\times q}$ defines a transform along the grouped mode through $(\mathcal T_U X)^{(k)}_{ij}=\sum_{a=1}^q U_{ka}X_{ija}$, where $(k)$ denotes the $k$th frontal slice. Because $U$ is orthogonal, $\mathcal T_U$ preserves inner products and Frobenius norms, and its inverse is $\mathcal T_{U^\top}$. We impose a rank bound on each transformed slice of $\Theta_*$, allowing the bound to differ across slices. For integer caps $0\leq\rho_k\leq\min(d_1,d_2)$, define $\boldsymbol\rho =( \rho_1, \ldots, \rho_q )$, and consider the class of transformed-slice tensors,
\[
 \mathcal C_U(\boldsymbol\rho,B)
 =\{Z\in\mathcal B(B):
   \operatorname{rank}((\mathcal T_U Z)^{(k)})\leq\rho_k
   \text{ for every }k\in[q]\}.
 \label{eq:slice-class}
\]
The $\ell_1$ budget remains in the original coordinates; only rank constraints are imposed after the transform. Write $r_U=\sum_{k=1}^q\rho_k$ for the sum of the rank caps, so $r_U=qr$ when every slice has the same cap $r$. In both settings, the rank restrictions apply only to $\Theta_*$; the actions $X$ need not be low rank.

\subsection{Score Function and Assumptions}
\label{sec:bandit-assumptions}
Let $p:\R^{d_1\times\cdots\times d_h}\to[0,\infty)$ be the joint Lebesgue density of the candidates. The score function $S^p: \R^{d_1\times\cdots\times d_h} \to \R^{d_1\times\cdots\times d_h}$ associated with the density $p$ is defined as: 
\[
    S^p(X)=-\nabla_X\log p(X) =-\nabla_X p(X) / p(X), \qquad p(X)>0.
\]
The gradient acts on all entries and has the same shape as $X$, with the $p$-a.e. weak interpretation in Appendix~\ref{sec:candidate-law}. We impose the following conditions on the candidate density and its score.
\begin{massumption}[Candidate distribution]
\label{ass:candidate-distribution}
There exists some $A > 0$ such that $\maxnorm X < A$ almost surely, and the joint Lebesgue density $p$ is supported in $\{X:\maxnorm X<A\}$. The zero extension of $p$ has integrable weak first derivatives on $\mathbb R^N$. For a known finite upper bound $M>0$,
\[
  \mathbb E_p[\vecop(S^p(X))\vecop(S^p(X))^\top]\preceq MI_N.
  \label{eq:score-moment-bound}
\]
Here $N=\prod_{\ell=1}^h d_\ell$ and $I_N$ is the identity and $\preceq$ is the positive-semidefinite order.
\end{massumption}
This bounded-support model excludes Gaussian candidate laws. Under Assumption~\ref{ass:candidate-distribution}, integration by parts gives Stein's identity, justified in Appendix~\ref{sec:candidate-law}:
\begin{equation}
  \mathbb E_p[g(X)S^p(X)]=\mathbb E_p[\nabla_X g(X)],
  \label{eq:candidate-density}
\end{equation}
for every smooth compactly supported scalar-valued function $g$ on $\mathbb R^{d_1\times\cdots\times d_h}$. The assumed weak regularity of the zero-extended density ensures that no boundary term appears. Appendix~\ref{app:density-examples} gives the product-beta example used in Section~\ref{sec:experiments}.
\begin{massumption}[Link]
\label{ass:boundedness-link}
 The unknown link $f$ is $L_f$-Lipschitz on $[-AB,AB]$ for some $L_f>0$.
\end{massumption}
We assume that the learner knows the candidate law $p$ and valid model bounds $A$, $B$, together with the transform $U$ in the tensor setting. T-ESTOR additionally requires exact evaluation of the score under its selection rule, although its spectral updates do not require the rank caps or the signal lower bound; see Section~\ref{sec:epoch-framework}. T-BSTOR uses the base score $S^p$ and requires a structural bound and a projection certificate for its exploration schedule, as described in Section~\ref{sec:nonmonotone}. Appendix~\ref{app:unknown-law} gives an estimated-score extension for unknown smooth product laws with slower constructive rates; its learned-selected-score epoch analysis requires additional certificates.

%This implies $\frobnorm{\Theta_*}\leq B$. Combined with the action bound $\maxnorm{X}\leq A$, it also gives
%\[
% |\inner X{\Theta_*}|\leq\maxnorm X\,\entone{\Theta_*}\leq AB,
%\]
%Thus, the single index always lies in $[-AB,AB]$, the interval on which we require the link function to be Lipschitz. 
\providecommand{\EnableBibliography}{}
\section{Methods}
\label{sec:methods}

\subsection{Structured Single-Index Estimation}
\label{sec:structured-estimation}
\label{sec:paired-moments}
\label{sec:matrix-estimation}
\label{sec:tensor-estimation}
\label{sec:slice-estimation}

We use Stein's identity to estimate a scalar multiple of $\Theta_*$ without estimating the unknown link $f$. Fix a known sampling density $p$ with the support and zero-extension regularity of Assumption~\ref{ass:candidate-distribution}, and retain the parameter class of Section~\ref{sec:notation-structures} and the link condition of Assumption~\ref{ass:boundedness-link}. Stein's identity gives
\[
 \mathbb E_p[yS^p(X)]=\mu_*\Theta_*,\qquad
 \mu_*=\mathbb E_p[f'(\inner X{\Theta_*})]\in[-L_f,L_f]
 \quad(\Theta_*\ne0).
\]
For $\Theta_*=0$, we set $\mu_*=0$ without evaluating $f'(0)$. Condition~\ref{cond:selected-signal} ensures uniform positivity under selection. Section~\ref{sec:epoch-framework} applies the results conditionally with $p=p_i$.

To estimate this moment, we must account for the unrestricted offset $f(0)$ and potentially heavy-tailed observations. Although $\mathbb E_p S^p(X)=0$ cancels the offset in the mean, it can still affect the second moment. We remove it by differencing independent observations. Specifically, given $2n$ iid observations $\{(X_j,y_j)\}_{j=1}^{2n}$ from the reward model, form $n$ disjoint paired statistics
\begin{equation}
 Z_j=\frac{y_{2j-1}-y_{2j}}2
 [S^p(X_{2j-1})-S^p(X_{2j})],
 \qquad \mathbb E Z_j=\mu_*\Theta_*,\quad j\in[n].
 \label{eq:paired-observation}
\end{equation}
Each pair is unbiased, independent of the others, and differencing cancels the offset $f(0)$. To control their second moments, set $V=2L_f^2A^2B^2+\sigma^2$, and let $w>0$ bound score second moments under the current observation law:
\[
 \max\{\norm{\mathbb E HH^\top}_{\rm op},
        \norm{\mathbb E H^\top H}_{\rm op}\}\leq w,
\]
Here $H=S^p(X)$ in the matrix case, or $H=(\mathcal T_US^p(X))^{(k)}$ for every transformed slice $k$. Lemma~\ref{lem:paired} then bounds both raw left and right second moments of $Z_j$, or each transformed slice of $Z_j$, by $Vw$ in operator norm. This provides the moment control needed for our estimation guarantee. Section~\ref{sec:epoch-framework} derives a uniform choice of $w$ under selection from the candidate-law bound $M$.

\noindent\textbf{Matrix construction.}
We first reduce the influence of large singular values using $\psi(u)=\operatorname{sign}(u)\log(1+|u|+u^2/2)$ \citep{minsker2018sub}. For an SVD $H=P\operatorname{diag}(s_\ell)Q^\top$ and a tuning parameter $\nu>0$, define
\[
 \widetilde\psi_\nu(H)
 =\frac1\nu P\operatorname{diag}(\psi(\nu s_\ell))Q^\top.
\]
We apply this transformation to each pair before averaging, then use singular-value thresholding to encourage low rank. Specifically, $\operatorname{SVT}_\tau$ replaces each singular value $s$ by $(s-\tau)_+$. The threshold $\lambda/2$ corresponds to a nuclear-norm penalty with coefficient $\lambda$. \citet{kang2022efficient,kang2024frameworks} use the same influence function and penalized quadratic objective, including a single-index extension. Here pairing removes the unrestricted offset from the moment bounds.

\noindent\textbf{Transformed slices, and a common estimator.}
For tensors, we apply the matrix construction to each slice under the supplied orthogonal transform $U$. For $q\geq1$, the sum nuclear norm $\norm Z_{*,U}=\sum_k\norm{(\mathcal T_UZ)^{(k)}}_*$ encodes this slice geometry \citep{lu2019low,song2020robust,lund2020tensor}. We first transform each paired statistic, robustify its slices, and average the corresponding slices:
\[
 \widehat Z_{\nu,k}=\frac1n\sum_{j=1}^n
 \widetilde\psi_\nu\bigl((\mathcal T_UZ_j)^{(k)}\bigr).
\]
Then fit
\begin{equation}
 \widehat\Theta=\mathop{\arg\min}_{Z\in\R^{d_1\times d_2\times q}}
 \left\{\sum_{k=1}^q
 \frobnorm{(\mathcal T_UZ)^{(k)}-\widehat Z_{\nu,k}}^2
 +\lambda\norm Z_{*,U}\right\}.
 \label{eq:slice-fit}
\end{equation}
Because both the loss and penalty separate across transformed slices, we solve this problem by thresholding each slice and applying the inverse transform:
\[
 \widehat\Theta=\mathcal T_U^{-1}
       \bigl([\operatorname{SVT}_{\lambda/2}(\widehat Z_{\nu,k})]_{k=1}^q\bigr).
\]

For $q=1$, $U=[1]$, and $r_U=r$, this is exactly the matrix estimator, with $\norm{\cdot}_{*,U}=\norm{\cdot}_*$ and the tuning parameters below agree with those of the matrix case.

\begin{mtheorem}[Structured estimation rate]
\label{thm:structured-estimation}
For a fixed observation law and $2n$ iid observations satisfying the reward/noise model, Stein regularity, and score-moment bound $w$ of Section~\ref{sec:structured-estimation}, with $\Theta_*\in\mathcal C_U(\boldsymbol\rho,B)$ (or $\mathcal C_{\mathrm M}(r,B)$ in the matrix case), $n\geq1$, and $\delta\in(0,1)$, in Equation~\eqref{eq:slice-fit} take
\[
 \nu=\sqrt{\frac{2\log(2q(d_1+d_2)/\delta)}{nVw}},\qquad
 \lambda=2\sqrt{\frac{2Vw\log(2q(d_1+d_2)/\delta)}n}.
\]
Then, with probability at least $1-\delta$,
\[
 \frobnorm{\widehat\Theta-\mu_*\Theta_*}^2
 \leq\frac{16r_UVw\log(2q(d_1+d_2)/\delta)}n = \widetilde O\left(\frac {r_UVw}{n}\right).
\]
\end{mtheorem}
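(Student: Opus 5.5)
The proof has two parts: a deterministic oracle inequality for the slice-separable nuclear-norm fit \eqref{eq:slice-fit}, and a Minsker-type concentration bound for each robustified slice average $\widehat Z_{\nu,k}$. We first work in transformed coordinates. Since $\mathcal T_U$ is orthogonal, $\frobnorm{\widehat\Theta-\mu_*\Theta_*}^2=\sum_k\frobnorm{\widehat\Theta_U^{(k)}-\Theta^{(k)}}^2$, where $\widehat\Theta_U^{(k)}=\operatorname{SVT}_{\lambda/2}(\widehat Z_{\nu,k})$ and $\Theta^{(k)}=(\mathcal T_U\mu_*\Theta_*)^{(k)}$ has rank at most $\rho_k$. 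Linearity of $\mathcal T_U$ and \eqref{eq:paired-observation} give $\mathbb E(\mathcal T_UZ_j)^{(k)}=\Theta^{(k)}$. It therefore suffices to show two things. First, on the event $\mathcal E=\{\norm{\widehat Z_{\nu,k}-\Theta^{(k)}}_{\rm op}\le\lambda/2\ \forall k\}$, each slice satisfies $\frobnorm{\widehat\Theta_U^{(k)}-\Theta^{(k)}}^2\le 4\rho_k\lambda^2\cdot c$ with a constant matching the stated $16$. Second, $\mathbb P(\mathcal E)\ge1-\delta$.

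\textbf{Deterministic step.} Fix $k$ and write $E=\widehat Z_{\nu,k}-\Theta^{(k)}$ and $\widehat D=\widehat\Theta_U^{(k)}-\Theta^{(k)}$. The thresholded matrix minimizes $\frobnorm{Z-\widehat Z_{\nu,k}}^2+\lambda\norm Z_*$. Comparing its value with that of the feasible point $\Theta^{(k)}$ gives $\frobnorm{\widehat D}^2\le 2\inner{E}{\widehat D}+\lambda(\norm{\Theta^{(k)}}_*-\norm{\widehat\Theta_U^{(k)}}_*)$. Using the first-order optimality condition instead yields the sharper $\frobnorm{\widehat D}^2\le\inner{2E}{\widehat D}$ plus the subgradient term. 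We then apply the standard decomposability argument. Let $\mathcal P$ denote projection onto the rank-$\rho_k$ tangent space of $\Theta^{(k)}$. Duality gives $|\inner E{\widehat D}|\le\norm E_{\rm op}\norm{\widehat D}_*\le\frac\lambda2\norm{\widehat D}_*$, and $\norm{\Theta^{(k)}}_*-\norm{\widehat\Theta_U^{(k)}}_*\le\norm{\mathcal P\widehat D}_*-\norm{\mathcal P^\perp\widehat D}_*$. Together these give $\frobnorm{\widehat D}^2\le 2\lambda\norm{\mathcal P\widehat D}_*\le2\lambda\sqrt{2\rho_k}\frobnorm{\widehat D}$. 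Hence $\frobnorm{\widehat D}^2\le 8\rho_k\lambda^2$, up to the constant bookkeeping. A cleaner alternative is the explicit SVT bound $\frobnorm{\operatorname{SVT}_{\lambda/2}(\Theta+E)-\Theta}^2\le c\,\rho\lambda^2$ when $\norm E_{\rm op}\le\lambda/2$. We will use whichever gives exactly $16\rho_k\cdot 2Vw\log(\cdot)/n\cdot\frac12$, i.e.\ $\rho_k\lambda^2\cdot 2$, matching $\lambda^2=8Vw\log/n$. Summing over $k$ produces $r_U=\sum_k\rho_k$, which gives $16r_UVw\log(2q(d_1+d_2)/\delta)/n$.

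\textbf{Probabilistic step.} This is the main obstacle. Apply Minsker's (2018) dimension-free robust matrix mean bound to the iid $d_1\times d_2$ matrices $H_j=(\mathcal T_UZ_j)^{(k)}$, via Hermitian dilation for the rectangular case. If $\max\{\norm{\mathbb E H_jH_j^\top}_{\rm op},\norm{\mathbb E H_j^\top H_j}_{\rm op}\}\le\sigma_H^2$ with raw second moments, then for the stated $\nu=\sqrt{2\log(2(d_1+d_2)/\delta')/(n\sigma_H^2)}$ we get $\norm{\frac1n\sum_j\widetilde\psi_\nu(H_j)-\mathbb EH}_{\rm op}\le\sqrt{2\sigma_H^2\log(2(d_1+d_2)/\delta')/n}$ with probability $1-\delta'$. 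Lemma~\ref{lem:paired} supplies $\sigma_H^2=Vw$. The pairing matters here: the offset $f(0)$ cancels and $|y_{2j-1}-y_{2j}|^2/4$ is controlled by $2L_f^2A^2B^2+\sigma^2$, because $|\inner X{\Theta_*}|\le AB$. The score differences contribute the factor $w$ through independence of the two draws. With $\delta'=\delta/q$, the deviation equals $\lambda/2$ exactly. A union bound over $k\in[q]$ gives $\mathbb P(\mathcal E)\ge1-\delta$.

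The delicate points are twofold. First, we must check that Minsker's estimator applies to rectangular, non-centered matrices using raw (uncentered) second moments. We will use the dilation together with his Theorem~3.1 for the mean with $\nu$ tuned as stated. Second, the constants must align so that the oracle step yields exactly $16$.
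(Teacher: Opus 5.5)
Your overall architecture matches the paper's. You establish the event $\max_k\norm{\widehat Z_{\nu,k}-(\mathcal T_U\mu_*\Theta_*)^{(k)}}_{\rm op}\le\lambda/2$ through the Hermitian dilation and the influence bound \eqref{eq:matrix-influence-tail}, with raw second moments $Vw$ from Lemma~\ref{lem:paired}. You then bound each thresholded slice and sum using the Frobenius isometry of $\mathcal T_U$. Your union bound over $k$ at level $\delta/q$ is equivalent to the paper's single block-diagonal dilation of dimension $q(d_1+d_2)$: it gives the same $\nu$ and a deviation of exactly $\lambda/2$. The raw-moment concern you raise is harmless. In the trace-exponential argument the deterministic shift $-n\nu\,\mathbb E A_1$ stays inside the exponential, so only $\norm{\mathbb E A_j^2}_{\rm op}$ enters and no centering is needed.

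The gap is in the deterministic step. That step fixes the constant $16$, and you leave it at ``up to the constant bookkeeping.'' The chain you actually write starts from the basic inequality $\frobnorm{\widehat D}^2\le2\inner E{\widehat D}+\lambda(\norm{\Theta^{(k)}}_*-\norm{\widehat\Theta_U^{(k)}}_*)$. It gives $\frobnorm{\widehat D}\le2\sqrt{2\rho_k}\,\lambda$, that is, $\frobnorm{\widehat D}^2\le8\rho_k\lambda^2=64\rho_kVw\log(2q(d_1+d_2)/\delta)/n$, which is four times the claim. Your ``sharper'' first-order inequality is misstated: as written, it coincides with the basic inequality.

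The correct version comes from the optimality condition $2(\widehat\Theta_U^{(k)}-\widehat Z_{\nu,k})+\lambda G=0$ with $G\in\partial\norm{\widehat\Theta_U^{(k)}}_*$. Pairing it with $\widehat D$ and using the subgradient inequality gives
\[
 \frobnorm{\widehat D}^2\le\inner E{\widehat D}+\frac\lambda2\bigl(\norm{\Theta^{(k)}}_*-\norm{\widehat\Theta_U^{(k)}}_*\bigr).
\]
Your duality and decomposability step then yields $\frobnorm{\widehat D}^2\le\lambda\norm{\mathcal P\widehat D}_*\le\lambda\sqrt{2\rho_k}\,\frobnorm{\widehat D}$. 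Hence $\frobnorm{\widehat D}^2\le2\rho_k\lambda^2=16\rho_kVw\log(2q(d_1+d_2)/\delta)/n$, exactly as stated.

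The paper instead takes the ``cleaner alternative'' you only mention, and it needs no optimality conditions:
\begin{itemize}
\item By Weyl, $s_{\rho_k+1}(\widehat Z_{\nu,k})\le\lambda/2$, so $\operatorname{SVT}_{\lambda/2}$ returns a matrix of rank at most $\rho_k$.
\item Soft thresholding moves $\widehat Z_{\nu,k}$ by at most $\lambda/2$ in operator norm, so $\norm{\widehat D}_{\rm op}\le\lambda$.
\item Since $\operatorname{rank}\widehat D\le2\rho_k$, this gives $\frobnorm{\widehat D}^2\le2\rho_k\lambda^2$ directly.
\end{itemize}
Either repair completes the proof. As submitted, you establish the rate but not the stated constant.
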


\subsection{T-ESTOR with Structured Updates}
\label{sec:epoch-framework}

For T-ESTOR and its regret analysis, we assume that $f$ is nondecreasing. This is without loss of generality within the monotone class: replacing $(f,\Theta_*)$ by $(u\mapsto f(-u),-\Theta_*)$ preserves the reward model and parameter constraints.

T-ESTOR applies the preceding estimators within the ESTOR epoch scheme \citep{kang2026single}. Epoch $i=0,1,\ldots$ has $2n_i=2^{i+1}$ rounds, with $n_i=2^i$. Starting from $\widehat\Theta_{-1}=0$, the learner fixes the direction at the beginning of epoch $i$:

\begin{equation}
 v_i=\begin{cases}
 \widehat\Theta_{i-1}/\frobnorm{\widehat\Theta_{i-1}},&\widehat\Theta_{i-1}\ne0,\\
 0,&\widehat\Theta_{i-1}=0.
 \end{cases}
 \label{eq:epoch-direction}
\end{equation}

Throughout the epoch, it selects a candidate maximizing $\inner{v_i}X$, choosing uniformly when $v_i=0$. After completing the epoch, it computes $\widehat\Theta_i$ using only that epoch's observations. Conditional on the epoch-start history, the fixed ranking rule, iid candidate sets, and fresh draws from the fixed conditional noise law yield iid selected action--reward pairs.

Greedy selection changes the action distribution, so the update must use the score of the selected law. For $v_i\ne0$, let $F_i$ be the CDF of $\inner{v_i}X$ under the candidate law $p$. Selecting the largest projection among $K$ candidates gives the density $p_i$; differentiating its log density gives the corresponding score:
\begin{equation}
 \begin{aligned}
 p_i(X)&=Kp(X)F_i(\inner{v_i}X)^{K-1},\\
 S^{p_i}(X)&=S^p(X)-(K-1)\frac{F_i'(\inner{v_i}X)}{F_i(\inner{v_i}X)}v_i
                    \quad(p_i\text{-a.s.}).
 \end{aligned}
 \label{eq:selected-score}
\end{equation}

The factor $F_i^{K-1}$ is the probability that the other $K-1$ candidates have smaller projections, while the factor $K$ accounts for the possible selected indices. See Appendix~\ref{app:stein-pair} for the derivation. When $v_i=0$, we draw a candidate index uniformly using fresh randomness, giving $p_i=p$ and $S^{p_i}=S^p$. We assume exact access to the selected scores in~\eqref{eq:selected-score}. To tune the epoch updates, we need a score second-moment bound that holds uniformly over the ranking directions. For $K\geq3$, Appendix~\ref{app:score-moments} provides
\[
 w=MG_K(d),\qquad
 G_K(d)=2Kd+\frac{K(K-1)^2}{2(K-2)}.
\]

This bound applies to both matrix and transformed-slice scores, including uniform selection when $v_i=0$. Let $Z_{i,j}$ denote the paired statistics in \eqref{eq:paired-observation} formed from epoch $i$'s observations using $S^{p_i}$. Theorem~\ref{thm:structured-estimation} then gives a high-probability bound on $\frobnorm{\widehat\Theta_i-\mu_i\Theta_*}$, where $\mu_i=\mathbb E_{p_i}[f'(\inner X{\Theta_*})]$. After each completed epoch, we compute $\widehat\Theta_i$ from these pairs using Theorem~\ref{thm:structured-estimation}, with $n=n_i$, $\delta=\delta_E=(T+1)^{-2}$, and the bound $w$ above. Algorithm~\ref{alg:T-ESTOR} summarizes the procedure.

\begin{algorithm}[!ht]
\caption{Tensor Epoched Stein's Oracle Single Index Bandit (T-ESTOR)}
\label{alg:T-ESTOR}
\begin{algorithmic}[1]
\algrenewcommand{\algorithmicrequire}{\textbf{Input:}}
\Require Dimensions, $T,K,A,B,L_f,\sigma^2,M,p$;
selected scores~\eqref{eq:selected-score}; $U$ for tensors.
\State $t\leftarrow1$, $i\leftarrow0$, $\widehat\Theta_{-1}\leftarrow0$,
       $\delta_E\leftarrow(T+1)^{-2}$
\While{$t\le T$}
  \State $n_i\leftarrow2^i$, $m_i\leftarrow\min\{2n_i,T-t+1\}$; reset the epoch buffer
  \State Fix $(v_i,p_i,S^{p_i})$ by \eqref{eq:epoch-direction}--\eqref{eq:selected-score}
  \For{$j=1,\ldots,m_i$}
    \State Observe the arm set $\mathcal X_t$
    \If{$v_i=0$}
      \State Draw $a\sim\operatorname{Unif}([K])$;
    \Else
      \State $a\longleftarrow\arg\max_{b\in[K]}\inner{v_i}{X_{t,b}}$;
    \EndIf
    \State $X_t\leftarrow X_{t,a}$; Play $X_t$, observe $y_t$ and append $(y_t,S^{p_i}(X_t))$ to the epoch buffer
    \State $t\leftarrow t+1$
  \EndFor
  \State \textbf{if} $m_i<2n_i$ \textbf{then} \Return \Comment{No partial-epoch update}
  \State Form $Z_{i,1},\ldots,Z_{i,n_i}$ by \eqref{eq:paired-observation} from this epoch's buffer
  \State Fit $\widehat\Theta_i$ by~\eqref{eq:slice-fit} with Theorem~\ref{thm:structured-estimation}'s tuning; Epoch $i\leftarrow i+1$
\EndWhile
\end{algorithmic}
\end{algorithm}

For $\Theta_*\ne0$, the estimate $\widehat\Theta_i$ targets $\mu_i\Theta_*$, where $\mu_i=\mathbb E_{p_i}[f'(\inner X{\Theta_{*}})]$. This multiplier varies with the sampling law. Since $\widehat\Theta_i$ determines the rankings in epoch $i+1$, we require a uniform positive lower bound to translate estimation accuracy into a regret guarantee.

\begin{mcondition}[Selected-design signal]
\label{cond:selected-signal}
For $\Theta_*\ne0$, there exists $\mu_{\min}>0$, possibly depending on $K$, such that $\mu_i\geq\mu_{\min}$ for every sampling law in~\eqref{eq:selected-score} induced by a unit direction $v_i$, and for $p_i=p$ under uniform selection.
\end{mcondition}
For example, the condition holds with $\mu_{\min}=c$ if $f'(u)\geq c>0$ almost everywhere on $[-AB,AB]$. The algorithm does not require knowledge of $\mu_i$ or $\mu_{\min}$.

\subsection{Regret Analysis}
\label{sec:regret-results}

We now derive regret bounds from estimation error under the nondecreasing-link assumption of Section~\ref{sec:epoch-framework} and Condition~\ref{cond:selected-signal}. For $\Theta_*\ne0$, the estimate $\widehat\Theta_{i-1}$ used in epoch $i\geq1$ targets $\mu_{i-1}\Theta_*$, where $\mu_{i-1}\geq\mu_{\min}>0$. Centering candidates at their common mean preserves rankings and allows us to control error projections through the covariance $\Sigma_X=\operatorname{Cov}_p(\vecop(X))$. Lemma~\ref{lem:transfer} then gives
\begin{equation}
 \mathbb E[r_t\mid\mathcal F_{t-1}]
 \leq\frac{2L_f}{\mu_{\min}}\sqrt{K\norm{\Sigma_X}_{\rm op}}\,
                 \frobnorm{\widehat\Theta_{i-1}-\mu_{i-1}\Theta_*}.
 \label{eq:transfer}
\end{equation}

Here $r_t$ is the instantaneous regret in~\eqref{eq:regret}, and $\mathcal F_{t-1}$ is the observed history. The bound also covers uniform selection when $\widehat\Theta_{i-1}=0$; when $\Theta_*=0$, regret is identically zero. Centering is used only in the proof: the algorithm operates on the original candidates and requires no covariance input.

\begin{mtheorem}[Structured epoch regret]
\label{thm:regret}
Assume the sampling and noise model of Section~\ref{sec:bandit-model}, Assumptions~\ref{ass:candidate-distribution} and~\ref{ass:boundedness-link}, and Condition~\ref{cond:selected-signal}, with $f$ nondecreasing. Let $K\geq3$, $q\geq1$, and $\Theta_*\in\mathcal C_U(\boldsymbol\rho,B)$, and write $r_U=\sum_{k=1}^q\rho_k$ and $d=\max(d_1,d_2)$. Run Algorithm~\ref{alg:T-ESTOR} with exact selected scores and the tuning of Theorem~\ref{thm:structured-estimation}, using $\delta_E=(T+1)^{-2}$ in each epoch. If $\Theta_*\ne0$, then
\[
 \mathbb E R_T\leq C\,\frac{L_fK}{\mu_{\min}}
 \sqrt{VM\norm{\Sigma_X}_{\rm op}\,
       r_U(d+K)T\log\!\bigl(2q(d_1+d_2)T\bigr)} = \widetilde O(\sqrt{dr_UT}), 
 \label{eq:structured-regret}
\]
where $C>0$ is universal. The matrix case follows by taking $q=1$, $U=[1]$, and $\boldsymbol\rho=(r)$, giving $r_U=r$. When $\Theta_*=0$, regret is zero. Every policy also satisfies $\mathbb E R_T\leq2L_fABT$.
\end{mtheorem}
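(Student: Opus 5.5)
The plan is to decompose the regret by epochs and control each epoch by combining the per-round transfer bound~\eqref{eq:transfer} with the estimation guarantee of Theorem~\ref{thm:structured-estimation}, applied conditionally on the epoch-start history. The trivial cases come first. If $\Theta_*=0$, every candidate has the same mean reward, so regret vanishes. For any policy, $|\inner X{\Theta_*}|\le\maxnorm X\entone{\Theta_*}\le AB$, and the Lipschitz property on $[-AB,AB]$ gives $r_t\le 2L_fAB$; summing over rounds yields $\mathbb E R_T\le 2L_fABT$.

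Now assume $\Theta_*\neq0$. Epoch $0$ uses uniform selection and lasts $2$ rounds, so its regret is at most $4L_fAB$. For epoch $i\ge1$, condition on $\mathcal F$ at the start of epoch $i$. By the remark after Algorithm~\ref{alg:T-ESTOR}, the $2n_{i-1}$ pairs of epoch $i-1$ were iid from $p_{i-1}$ given the earlier history. The selected score $S^{p_{i-1}}$ is exact, and Appendix~\ref{app:score-moments} gives the uniform bound $w=MG_K(d)$. Theorem~\ref{thm:structured-estimation}, with $\delta_E=(T+1)^{-2}$, then yields a good event $\mathcal E_{i-1}$ of conditional probability at least $1-\delta_E$ on which
\[
\frobnorm{\widehat\Theta_{i-1}-\mu_{i-1}\Theta_*}\le 4\sqrt{r_UVMG_K(d)\log(2q(d_1+d_2)/\delta_E)/n_{i-1}} .
\]
Plugging this into~\eqref{eq:transfer} with Condition~\ref{cond:selected-signal} bounds the per-round conditional regret in epoch $i$ by
\[
\frac{8L_f}{\mu_{\min}}\sqrt{K\norm{\Sigma_X}_{\rm op} r_UVMG_K(d)\log(\cdot)/2^{i-1}} .
\]
Epoch $i$ has at most $2^{i+1}$ rounds, so its contribution is of order $\sqrt{2^{i}}$ times the constant above. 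Summing the geometric series over the $O(\log T)$ epochs up to $2^{i}\lesssim T$ gives a total of order $\sqrt T$ times the same constant.

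On the complement of the good events, we use the crude bound $r_t\le2L_fAB$. A union bound over at most $\log_2 T+1$ epochs shows the failure probability is at most $(\log_2T+1)(T+1)^{-2}$, which contributes $O(L_fAB)$ and is absorbed into the leading term. Two pieces remain:
\begin{itemize}
\item The logarithm simplifies as $\log(2q(d_1+d_2)(T+1)^2)\le 3\log(2q(d_1+d_2)T)$ up to constants, say for $T\ge2$; small $T$ is handled by the trivial bound.
\item The factor $KG_K(d)=2K^2d+K^2(K-1)^2/(2(K-2))$ must be at most $CK^2(d+K)$. This holds because $(K-1)^2/(K-2)\le 4K$ for $K\ge3$. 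Taking the square root gives $K\sqrt{d+K}$, which produces the stated constant $L_fK/\mu_{\min}$.
\end{itemize}

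The main obstacle is applying Theorem~\ref{thm:structured-estimation} legitimately under adaptivity, and absorbing the low-probability and additive terms. Conditional iid-ness within an epoch is stated in the text, so the theorem applies with the fixed law $p_{i-1}$. The truly delicate points are the following:
\begin{itemize}
\item \eqref{eq:transfer} must hold for the epoch-$i$ law conditional on $\widehat\Theta_{i-1}$, which Lemma~\ref{lem:transfer} supplies.
\item The failure terms and the epoch-$0$ term must be $O(L_fAB\log T)$.
\item That additive term must be dominated by the $\sqrt{T}$ term. This needs a comparison such as $AB\lesssim\sqrt{VM\norm{\Sigma_X}_{\rm op}}$ times the logarithmic factor. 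It is easiest to take $\min$ with the trivial bound $2L_fABT$. Alternatively, one notes that $V\ge 2L_f^2A^2B^2$, and that $M\norm{\Sigma_X}_{\rm op}\ge1$ by the Cramér–Rao-type inequality $\mathbb E[SS^\top]\succeq\Sigma_X^{-1}$, which follows from Stein's identity with $g(X)=\vecop(X)$. This gives $L_fAB\le L_f\sqrt{VM\norm{\Sigma_X}_{\rm op}}$, which, since $\mu_{\min}\le L_f$, is at most the multiplying constant $(L_fK/\mu_{\min})\sqrt{VM\norm{\Sigma_X}_{\rm op}}$ of the $\sqrt T$ term, so the additive term is absorbed into the universal constant.
\end{itemize}
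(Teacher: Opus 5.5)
Your proposal is correct and follows essentially the same route as the paper: conditionally iid epochs, Theorem~\ref{thm:structured-estimation} with $w=MG_K(d)$, Lemma~\ref{lem:transfer}, doubling-epoch summation, and absorption of the initialization and failure terms using $V\geq2L_f^2A^2B^2$, $\mu_{\min}\leq L_f$, and Stein-derived lower bounds on $M$. The differences are cosmetic: the paper charges each fit's failure only to the following epoch, using the conditional-mean cap $2L_fB\sqrt{K\norm{\Sigma_X}_{\rm op}}$, so it needs no union bound over epochs; and your intermediate inequality should read $L_fAB\leq\sqrt{VM\norm{\Sigma_X}_{\rm op}}$ without the extra factor $L_f$, which still gives the absorption you state.
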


\textbf{Proof idea.} Conditional on the history at the start of each epoch, the frozen policy produces iid observations. The uniform selected-score moment bound $w=MG_K(d)$ from Appendix~\ref{app:score-moments} therefore lets us apply Theorem~\ref{thm:structured-estimation} in every epoch. Condition~\ref{cond:selected-signal} and~\eqref{eq:transfer} convert the resulting estimation error into regret during the next epoch. Summing over doubling epochs yields the $\sqrt T$ dependence; the full proof appears in Appendix~\ref{app:regret-proof}.

For fixed $K$ and model constants, the rates are $\widetilde O(\sqrt{drT})$ for matrices and $\widetilde O(\sqrt{dr_UT})$ for tensors, without rank inputs. Appendix~\ref{app:structural-comparisons} compares these upper bounds with vectorization and unfolding: dense rank-one-slice tensors can give an order-$\sqrt d$ advantage over mode unfoldings for $K=O(d)$ under the same law and a linear link.

Worst-case reward and selected-score moment bounds can make the tuning conservative at finite horizons. Section~\ref{sec:experiments} evaluates tuned implementations documented in Appendix~\ref{app:experiments}.

\subsection{Lower Bounds}
\label{sec:lower-bounds}

We next ask whether Theorem~\ref{thm:regret}'s rank and dimension dependence can be improved. A known linear link isolates the difficulty of learning the structured parameter. Fix any candidate law $p$ satisfying Assumption~\ref{ass:candidate-distribution}, retain the fresh iid menus of Section~\ref{sec:bandit-model}, and take $f(z)=\beta z$ with $0<\mu_{\min}\leq\beta\leq L_f$ and independent $\mathcal N(0,\sigma^2)$ noise, $\sigma^2>0$. For every such $p$ this submodel satisfies Condition~\ref{cond:selected-signal} with $\mu_{\min}=\beta$, so it lies in the model of Theorem~\ref{thm:regret} for the same $p$.

Consider $d=\max(d_1,d_2)\geq2$ with a positive rank cap, or total slice rank cap, so $s_{\max}\geq2$. For the admissible ranks in Section~\ref{sec:notation-structures}, set $s_{\max}=dr$ and $\chi=1$ in the matrix case. In the tensor case, assume $r_U\geq1$ and set $s_{\max}=dr_U$. If all slice rank caps equal some $r\geq1$, take $\chi=1$ for any supplied orthogonal transform $U$; otherwise, take $\chi=\max_{k:\rho_k>0}\sum_{\ell=1}^{q}|U_{k\ell}|$. Appendix~\ref{app:lower-proof} builds $s_{\rm lb}$ rank-feasible directions orthogonal to $\mathbb E_pX$, with $s_{\rm lb}\geq\lfloor s_{\max}/2\rfloor\geq s_{\max}/3$ and $s_{\rm lb}=s_{\max}$ for centered designs.

\begin{mtheorem}[Large-horizon lower bound for every design]
\label{cor:lower-large}
Under the model above, let $\mathcal C$ be $\mathcal C_{\mathrm M}(r,B)$ or $\mathcal C_U(\boldsymbol\rho,B)$ with structural scale $s_{\max}\geq2$ as defined above. There is a universal $c>0$ such that, for every admissible candidate law $p$ and every integer $T\geq\max\{1,T_+(p)\}$,
\[
 \inf_\pi\sup_{\Theta_*\in\mathcal C}
       \mathbb E_{\Theta_*,\pi}R_T
 \geq\frac{c\,\sigma}{K\,M\norm{\Sigma_X}_{\rm op}}\sqrt{s_{\max}\,T}.
\]
Here $T_+(p)$ is given in Equation~\eqref{eq:lower-threshold-general}. The infimum ranges over all randomized adaptive policies, which may know $p$, the linear link, and the structural subspace used in the proof.
\end{mtheorem}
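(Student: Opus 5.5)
We prove the bound by a standard Assouad-type, hypercube-plus-KL argument for linear contextual bandits, adapted to fresh iid menus drawn from the fixed law $p$. Throughout, the link $f(z)=\beta z$ and the Gaussian noise are fixed.

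\textbf{Hypotheses.} First we take the $s_{\rm lb}$ rank-feasible directions $E_1,\ldots,E_{s_{\rm lb}}$ promised in the discussion preceding the statement (Appendix~\ref{app:lower-proof}). They are orthonormal in Frobenius norm and orthogonal to $\mathbb E_pX$. They are built from rank-one blocks $e_ae_b^\top$ (or transformed-slice analogues) so that every signed combination $\Theta_\xi=\epsilon\sum_j\xi_jE_j$, $\xi\in\{\pm1\}^{s_{\rm lb}}$, lies in $\mathcal C$. Concretely, we split the $d$ rows into blocks so that each of the $r$ (resp.\ $\rho_k$) rank-one components carries roughly $d$ free signs. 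The entrywise budget $\entone{\Theta_\xi}\leq B$ is met once $\epsilon\lesssim B\chi^{-1}/\sqrt{s_{\max}}$ up to dimension factors; the factor $\chi$ handles the transform back to original coordinates. This constraint is where the threshold $T_+(p)$ enters: the optimal $\epsilon$ below decays like $T^{-1/2}$, so for $T\geq T_+(p)$ it is admissible.

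\textbf{Regret per coordinate.} Since $f$ is linear, the instantaneous regret is $\beta[\max_a\inner{X_{t,a}}{\Theta_\xi}-\inner{X_t}{\Theta_\xi}]$. Centering candidates leaves this unchanged, because the $E_j$ are orthogonal to the mean. We lower bound it coordinatewise. If the policy's choice disagrees in sign with $\xi_j$ on the projection $\inner{X_t}{E_j}$, it loses a constant fraction of $\epsilon\,\mathbb E|\inner{X-X'}{E_j}|$-type gaps. The key quantity is a lower bound on the expected gain of the best of $K$ candidates along each $E_j$. We obtain it from the Stein/score structure: $\mathbb E_p[\inner{X}{E_j}\inner{S^p(X)}{E_j}]=1$ together with the score bound $M$ gives $\operatorname{Var}\inner X{E_j}\geq 1/M$. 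The paired gap between two candidates is then controlled below through a Paley--Zygmund step that uses boundedness ($\le A$) and $\norm{\Sigma_X}_{\rm op}$. These steps produce the factors $1/(KM\norm{\Sigma_X}_{\rm op})$ in the final constant.

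\textbf{Information bound and conclusion.} For neighbouring hypotheses $\xi,\xi^{(j)}$, the divergence decomposition for adaptive policies with Gaussian noise gives
\[
\mathrm{KL}=\frac{2\beta^2\epsilon^2}{\sigma^2}\sum_t\mathbb E\inner{X_t}{E_j}^2.
\]
Summing over $j$ gives at most $2\beta^2\epsilon^2\sigma^{-2}\sum_t\mathbb E\frobnorm{P X_t}^2$, where $P$ is the projection onto the span of the $E_j$. The selected action can have larger projections than a typical candidate, so we bound this by $K\norm{\Sigma_X}_{\rm op}$ per round via $\max_a\le\sum_a$.

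Assouad's lemma then yields a lower bound of order
\[
\sum_j\beta\epsilon g\,T\bigl(1-\sqrt{\mathrm{KL}_j}\bigr),
\]
where $g$ is the per-coordinate gap. Averaging the KL over $j$ and choosing $\epsilon\asymp\sigma/(\beta\sqrt{TK\norm{\Sigma_X}_{\rm op}})$ gives $\gtrsim \sigma s_{\rm lb}\,g\sqrt{T/(K\norm{\Sigma_X}_{\rm op})}/\sqrt{s_{\rm lb}}$ after normalization. Using $s_{\rm lb}\geq s_{\max}/3$ then gives the stated $\sqrt{s_{\max}T}$ rate, with the $\beta$ factors cancelling.

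\textbf{Main obstacle.} The hardest step is the per-coordinate regret decomposition. Regret is a maximum over a menu, not a sum over coordinates, so disagreement on one sign does not obviously cost $\epsilon$ per coordinate. The first approach is to compare against the oracle that picks the best candidate along $\Theta_\xi$. We would write the loss as $\epsilon\sum_j\xi_j\inner{X^\star-X_t}{E_j}$ and use the symmetry of the flipped hypotheses to extract a coordinatewise lower bound that holds in expectation over fresh menus. We would verify the resulting gap constant $g\gtrsim 1/(M\norm{\Sigma_X}_{\rm op})$ through the variance lower bound $1/M$ noted above.
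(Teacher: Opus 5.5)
\textbf{There is a genuine gap.} It sits exactly where you place the ``main obstacle'', and the step cannot be repaired coordinatewise in the form you state. A fresh menu of $K$ random candidates has no product structure, so no inequality of the type $r_t\gtrsim\beta\epsilon g\sum_j\mathbf 1\{\text{wrong sign on }E_j\}$ is available.

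Such per-coordinate losses are in fact impossible. Any policy's expected per-round regret is at most $\beta[g_K^c(\Theta_\xi)+g_K^c(-\Theta_\xi)]\leq2\beta\sqrt{K\norm{\Sigma_X}_{\rm op}}\,\epsilon\sqrt s$, by the upper half of the menu-comparison lemma. Losses of order $\epsilon g$ on each of $s$ coordinates would instead total order $s\epsilon g$, too large by a factor of order $\sqrt s$ for fixed $K$ and design constants. Your unexplained division by $\sqrt{s_{\rm lb}}$ ``after normalization'' is precisely this discrepancy.

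\textbf{The missing idea is the paper's posterior-mean argument.} It never decomposes the regret.
\begin{itemize}
\item Under the fair-sign prior, the current menu is independent of $\mathcal F_{t-1}$, and $\inner m{u_j}=0$. So any policy's expected reward is $\beta\,\mathbb E\inner{X_t-m}{\overline\Theta_{t-1}}\leq\beta\,\mathbb E\,g_K^c(\overline\Theta_{t-1})\leq\beta\sqrt{K\norm{\Sigma_X}_{\rm op}}\,\mathbb E\frobnorm{\overline\Theta_{t-1}}$, where $\overline\Theta_{t-1}$ is the posterior mean.
\item The oracle earns $\beta\,g_K^c(\Theta_\omega)\geq\beta\epsilon\sqrt s/(8\sqrt M)$.
\item Your KL computation is the right input, but it must be converted through Pinsker's inequality and $I(\omega_j;\mathcal F_t)\leq I(\omega_j;\mathcal F_t\mid\omega_{-j})$ into $\mathbb E\bar\omega_j^2\leq2K\beta^2\epsilon^2t\norm{\Sigma_X}_{\rm op}/\sigma^2$. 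That bounds $\mathbb E\frobnorm{\overline\Theta_t}$.
\item The choice $\epsilon=\sigma/(16\sqrt2K\beta\norm{\Sigma_X}_{\rm op}\sqrt{MT})$ then leaves per-round regret at least $\beta\epsilon\sqrt s/(16\sqrt M)$.
\end{itemize}

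\textbf{Two further steps would not yield the stated form even with the decomposition fixed.}

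First, the anti-concentration step. Combining $\operatorname{Var}\inner Xu\geq1/M$ with Paley--Zygmund and $\maxnorm X<A$ gives at best $\mathbb E|\inner{X-X'}u|\geq1/(MA\chi_{\rm lb})$ for unit $u$. This depends on $A$. Since $A$ is only a support bound, $\sqrt MA$ can be arbitrarily large, so no universal $c$ results. The paper instead uses the projection-density bound $\norm{q_v}_\infty\leq\sqrt M/(2\norm v_2)$, which follows from the joint score certificate via $\norm{F'}_\infty\leq\tfrac12\int|F''|\leq\sqrt M/2$. It gives $\mathbb P(|W|\leq\delta)\leq\delta\sqrt M/\norm v_2$ for $W=\inner v{X_1-X_2}$, hence $g_K^c(v)\geq\norm v_2/(8\sqrt M)$, with no independence or centering needed.

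Second, the budget constraint. It reads $\entone{\Theta_\xi}\leq\chi_{\rm lb}s\epsilon\leq B$, which is linear in $s$, not square-root. This linearity is why $T_+(p)$ scales like $s_{\rm lb}^2$. Your choice of $\epsilon$ is larger than the paper's by a factor of order $\sqrt{KM\norm{\Sigma_X}_{\rm op}}$. As a result, your construction is feasible only for horizons larger than $T_+(p)$ by a factor of order $KM\norm{\Sigma_X}_{\rm op}$.
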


For fixed $K$ and model/design constants, uniformly controlled across dimensions, the $\sqrt{s_{\max}T}$ dependence matches Theorem~\ref{thm:regret} up to logarithms for $T\geq\max\{1,T_+(p)\}$. $T_+(p)$ marks a construction branch, not algorithmic burn-in or an established minimax transition.

\subsection{T-BSTOR for arbitrary Links}
\label{sec:nonmonotone}

For nonmonotone links, a larger index need not yield a higher reward, so estimating the parameter direction alone is insufficient for action selection. T-BSTOR first estimates the index up to an unknown scale using uniformly selected candidates, then learns rewards along the fitted index through the estimate-then-bin approach of \citet{dey2026optimal}.

We retain the parameter classes, candidate-law regularity, and finite-variance noise model of Section~\ref{sec:preliminaries}, with $f$ now any $L_f$-Lipschitz link on $[-AB,AB]$. For $\Theta_*\ne 0$, assume $|\mu_*|\geq\gamma>0$, where $\mu_*=\mathbb E_p[f'(\inner X{\Theta_*})]$. This signal may have either sign. Uniform exploration uses only the base score $S^p$, so Condition~\ref{cond:selected-signal} is unnecessary. To control prediction error from the fitted index, write $m=\mathbb E_pX$ and let $\kappa_X\geq1$ satisfy
\begin{equation}
 \mathbb E_p e^{\lambda\inner{X-m}Z}
 \leq\exp\!\left(\frac{\lambda^2A^2\kappa_X^2\frobnorm Z^2}{2}\right)
 \quad\text{for every }Z\text{ and }\lambda\in\mathbb R.
 \label{eq:nonmonotone-joint-concentration}
\end{equation}
Independent bounded entries admit $\kappa_X=1$, and every bounded-entry law admits $\kappa_X=\sqrt N$; Appendix~\ref{app:nonmonotone-model} discusses dependent designs.

For an even exploration length $n_0<T$, select candidates uniformly during the first $n_0$ rounds and fit the structured estimator using $n_0/2$ disjoint observation pairs. Project the fit in Frobenius norm onto the entrywise ball $\mathcal B(L_fB)$ and hold it fixed. This keeps every fitted index in $[-AL_fB,AL_fB]$, which we partition into bins of width at most $w_{\rm bin}$. On each subsequent round, T-BSTOR chooses among bins containing current candidates: it prioritizes bins with too few observations and otherwise uses median-of-means upper confidence bounds to select a bin and plays a candidate in it. Algorithm~\ref{alg:nonmonotone} gives the exact policy and tuning. The concentration bound above converts estimation error into prediction error, while Lipschitz continuity controls reward variation within each bin; Appendix~\ref{app:nonmonotone-algorithm} makes this approximation precise.

\begin{samepage}
\begin{mtheorem}[Horizon rate for arbitrary links]
\label{thm:nonmonotone-main}
Under these local assumptions, run Algorithm~\ref{alg:nonmonotone} with a supplied bound $D\geq1$, $D\geq dr$ (matrices) or $D\geq dr_U$ (transformed slices), the confidence allocation~\eqref{eq:nonmonotone-schedule}, and
\[
 n_0=\min\!\left\{T,\;2\left\lceil\frac{(\kappa_X^2D)^{1/3}T^{2/3}}2\right\rceil\right\},\qquad
 w_{\rm bin}=2AL_fB\,T^{-1/3}.
\]
If $n_0=T$, select uniformly throughout; otherwise, run the estimation and binning steps above. Then
\[
 \mathbb E[R_T]=\widetilde O\!\left(
 \big[(\kappa_X^2D)^{1/3}+1\big]T^{2/3}\right).
\]
Moreover, $R_T\leq2L_fABT$. Here $A,B,L_f,M,\gamma,\sigma,K,h$ are held fixed, $D$ and $\kappa_X$ remain explicit, and only logarithms in dimensions and $T$ are suppressed.
\end{mtheorem}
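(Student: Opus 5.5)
The plan splits the regret into three pieces: an exploration phase, a prediction error coming from the fitted index, and a binned bandit learning along that index. The trivial bound $R_T\le 2L_fABT$ follows because every index lies in $[-AB,AB]$ and $f$ is $L_f$-Lipschitz there. If $n_0=T$, this already gives $T\le(\kappa_X^2D)^{1/3}T^{2/3}+2$, up to the constant $2L_fAB$, so only the case $n_0<T$ needs work. Exploration costs at most $2L_fAB\,n_0=\widetilde O((\kappa_X^2D)^{1/3}T^{2/3}+1)$.

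\emph{Estimation step.} Under uniform selection the observation law is $p$. Assumption~\ref{ass:candidate-distribution} gives a score-moment bound $w\le M\cdot\max(d_1,d_2)$ for both $\mathbb E HH^\top$ and $\mathbb E H^\top H$, for the matrix score and for each transformed slice, since $U$ is orthogonal. Apply Theorem~\ref{thm:structured-estimation} with $n=n_0/2$ and $\delta=T^{-2}$. This yields $\frobnorm{\widehat\Theta-\mu_*\Theta_*}^2=\widetilde O(Dw'/n_0)$ with $w'=O(M)$ after absorbing $d$, because $r_UVMd\le VM D$. Let $\Pi$ denote the Frobenius projection onto the convex set $\mathcal B(L_fB)$. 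Since $\mu_*\Theta_*\in\mathcal B(L_fB)$ (as $|\mu_*|\le L_f$), $\Pi$ is nonexpansive toward it, so the same bound holds for $\widetilde\Theta=\Pi\widehat\Theta$. Next write $g(u)=f(u/\mu_*)$. This is $L_f/\gamma$-Lipschitz, and $f(\inner X{\Theta_*})=g(\inner X{\mu_*\Theta_*})$, so the sign of $\mu_*$ is harmless. The reward of a candidate is then $g(\inner X{\widetilde\Theta})$ plus an error $(L_f/\gamma)|\inner X{\widetilde\Theta-\mu_*\Theta_*}|$. The mean part $\inner m{\cdot}$ is common to all candidates in expectation but not pointwise. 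I will therefore bound the absolute error by $|\inner{m}{\Delta}|+|\inner{X-m}{\Delta}|$, with $\Delta=\widetilde\Theta-\mu_*\Theta_*$.

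\emph{Key obstacle: the mean term.} The $|\inner m\Delta|$ piece is a constant shift of the argument of $g$ shared by all candidates. I absorb it by redefining the scalar target as $\tilde g(u)=g(u-\inner m\Delta)$, which is a fixed function on the exploitation phase. The remaining error is only $\inner{X-m}\Delta$. By \eqref{eq:nonmonotone-joint-concentration} and a union bound over $K$ candidates and $T$ rounds, this is at most $A\kappa_X\frobnorm\Delta\sqrt{2\log(2KT^2)}$ with probability $1-1/T$. Consequently the per-round misspecification is $\epsilon=\widetilde O(\kappa_X\sqrt{D/n_0})$, and summing over rounds gives $T\epsilon=\widetilde O((\kappa_X^2D)^{1/3}T^{2/3})$ by the choice of $n_0$. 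The delicate part here is verifying that the bin rewards remain well defined and stationary after conditioning on the frozen $\widetilde\Theta$. This holds because $\widetilde\Theta$ depends only on exploration data and later menus are fresh and iid.

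\emph{Binning step.} There are $J=O(T^{1/3})$ bins of width $w_{\rm bin}$. Within a bin, $\tilde g$ varies by at most $(L_f/\gamma)w_{\rm bin}$, which costs $O(T^{2/3})$ in total. I then follow the analysis of \citet{dey2026optimal} with median-of-means UCBs, which need only finite variance $V$ (bounded link range plus $\sigma^2$), under adaptive sampling: compare the chosen bin with the bin containing the best candidate, using the confidence allocation~\eqref{eq:nonmonotone-schedule}. The forced-exploration rule for undersampled bins and the UCB regret sum to $\widetilde O(\sqrt{JT}+J)=\widetilde O(T^{2/3})$. Failure events have probability $O(1/T)$ and cost $O(T)$. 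Adding the four contributions gives $\widetilde O([(\kappa_X^2D)^{1/3}+1]T^{2/3})$.
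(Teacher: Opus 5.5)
Your decomposition is the one the paper uses (Theorem~\ref{thm:nonmonotone-upper-full} followed by the schedule calculation). It has the same pieces: exploration cost $2L_fAB\,n_0$; Theorem~\ref{thm:structured-estimation} with $n=n_0/2$ and $w=dM$; nonexpansiveness of the projection onto $\mathcal B(L_fB)$; absorbing $\inner m\Delta$ into a horizontal shift of the scalar link, which is exactly the paper's $g_E$ with $c_E=\inner mE$; a union bound through \eqref{eq:nonmonotone-joint-concentration}; the bin-width term; and a scalar UCB term. Your rate arithmetic, including the $n_0=T$ branch, is correct. One small repair: $f$ is only Lipschitz on $[-AB,AB]$, while fitted indices range over $[-AL_fB,AL_fB]$. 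The shifted link must therefore clip its argument to $[-AB,AB]$, as the paper does; nonexpansiveness of clipping around the true index then gives your error bound.

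You depart from the paper at the adaptive bin-confidence step (Lemma~\ref{lem:nonmonotone-mom}), and there your sketch is thinner than it looks.

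\textbf{The paper's route.} It never uses a population bin mean. It fixes the reference $m_b=g_E(c_b)$ and writes each bin reward as $m_b+d_{b,a}+\eta_{b,a}$, with $|d_{b,a}|\le\epsilon_{\rm bin}$ on the realized-candidate approximation event. It then runs median-of-means on the noise alone, via stopped martingales, padding, and conditional Chebyshev over dependent blocks. As a result its radius carries $\sigma$, and the argument tolerates any within-bin selection rule.

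\textbf{Your stationarity claim.} You instead assert that bin rewards are stationary. This is true for Algorithm~\ref{alg:nonmonotone}, but fresh iid menus are not the reason. You need that both the bin choice and the within-bin index (first candidate in the bin) depend on the current menu only through its bin labels. Then, given the past and the labels, the played array has law $p(\cdot\mid\inner X{\widetilde\Theta}\in I_b)$. You also need an optional-skipping (reward-tape) argument to obtain an iid sequence per bin at random pull times.

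\textbf{Population versus realized means.} Granting stationarity, median-of-means with variance at most $V$ is standard. But it concentrates around $\bar m_b=\mathbb E[f(\inner X{\Theta_*})\mid\inner X{\widetilde\Theta}\in I_b]$, whereas your misspecification bound holds only for realized candidates. Bridging the two requires controlling $\mathbb E[|\inner{X-m}\Delta|\mid\text{bin }b]$. For a bin of probability $\pi_b$, this is only bounded by order $A\kappa_X\frobnorm\Delta\sqrt{\log(2/\pi_b)}$. You must either discard bins too rare ever to be available, or bound the bridging term in expectation, using that a bin is available with probability at most $K\pi_b$.

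\textbf{The citation.} \citet{dey2026optimal} use sample means under sub-Gaussian noise, so citing them does not supply the finite-variance step.

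With these additions your route works, and it replaces the martingale machinery by iid tapes. The paper's route avoids population bin means altogether and is robust to feature-dependent within-bin choices.
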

\end{samepage}
Only the exploration schedule needs the supplied $D$ and $\kappa_X$; spectral fitting needs no rank caps. Under the same projection certificate and comparable model constants, the structured fit replaces the vectorized leading upper-bound factor $(\kappa_X^2N)^{1/3}$ by $(\kappa_X^2D)^{1/3}$, improving the bound when $D<N$.

For each admissible candidate law, Theorem~\ref{thm:nonmonotone-lower} gives an $\Omega(T^{2/3})$ lower bound at large horizons with fixed $K\geq3$ and positive signal floor, even when a nonzero structured parameter is disclosed. Dey et al.'s lower-bound construction instead grows the menu and decreases the signal floor \citep{dey2026optimal}. Thus the horizon exponent is optimal for fixed dimensions, $K\geq3$, and model/design constants, with $\sigma>0$ and $0<\gamma<L_f$. This comparison concerns only the horizon rate; its constants depend on the design. Links with zero base signal remain outside the guarantee, while a zero parameter has zero regret.

\begin{figure}[!t]
\centering
\includegraphics[width=.90\linewidth]{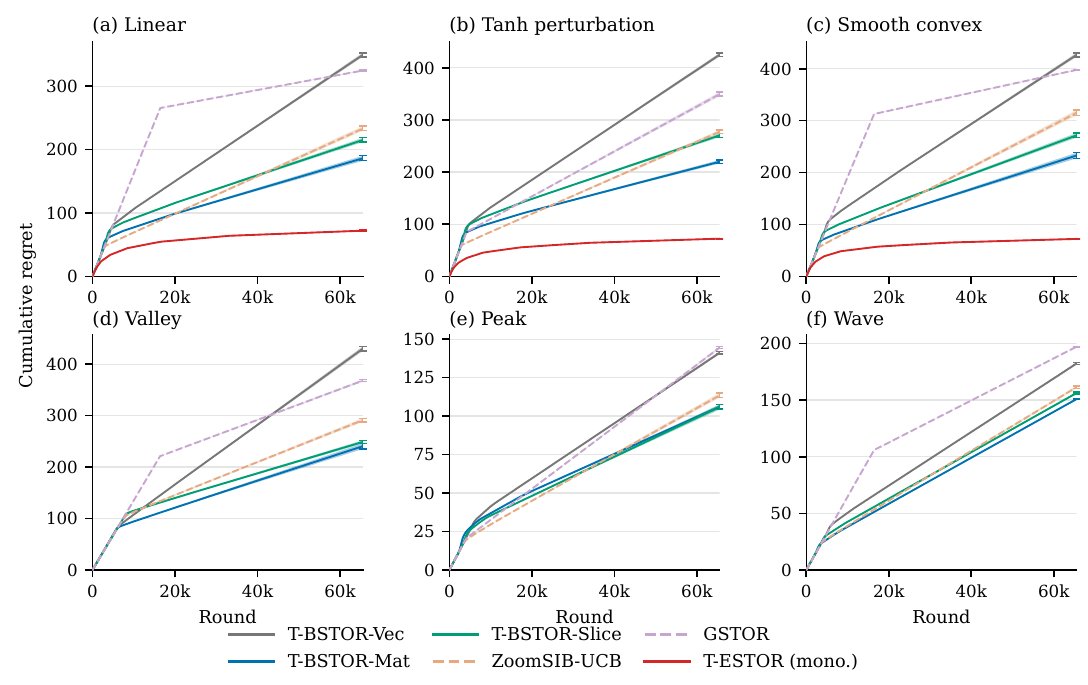}
\caption{Mean cumulative regret over 50 paired seeds, with pointwise 95\% bootstrap bands and endpoint whiskers. Top: monotone links; bottom: nonmonotone. Solid: proposed methods; dashed: external baselines.}
\label{fig:comparison-synthetic}
\end{figure}

\section{Experiments}
\label{sec:experiments}

\textbf{Design.} We use a shared dense $16\times16\times4$ parameter with an entrywise $\ell_1$ budget of one. Its $16\times64$ matrix representation and each frontal slice under the identity transform have rank two, giving both representations a structural scale of 128, compared with an ambient dimension of 1024. We consider six links, three monotone and three nonmonotone, with $K=10$ candidates per round. Candidate entries are drawn independently from $2\operatorname{Beta}(5,5)-1$, and rewards are corrupted by independent scaled Student-$t_3$ noise. Appendix~\ref{app:experiments} provides parameters, links, and noise scaling.

\textbf{Methods and evaluation.}
T-BSTOR-Vec, T-BSTOR-Mat, and T-BSTOR-Slice combine vector, matrix, and transformed-slice estimation, respectively, with the same robust scalar-bin learner. We compare these methods with GSTOR and ZoomSIB-UCB, and evaluate T-ESTOR, which exploits monotonicity, only on the monotone links. We tune influence and threshold multipliers; T-BSTOR also uses adaptive exploration \citep{dey2026optimal}, normalized-index bins, and tuned confidence multipliers (Appendix~\ref{app:experiments}). The guarantees concern the stated theoretical policies and tuning. T-ESTOR uses numerical selected scores; Appendix~\ref{app:numerical-oracle} reports accuracy and computational cost. Tuning, validation, and testing use separate seeds. With configurations frozen, we evaluate 50 fresh paired seeds through $T=65{,}534$. Methods share instances but observe only their chosen rewards. Regret uses each menu's best action and includes all exploration, initialization, and fallback rounds.

\textbf{Results.}  Both structured T-BSTOR variants achieve lower mean final regret than T-BSTOR-Vec, GSTOR, and ZoomSIB-UCB on all six links (Figure~\ref{fig:comparison-synthetic}). All reported unadjusted paired 95\% intervals for the prespecified Slice comparisons favor Slice (Table~\ref{tab:comparison-v2-synthetic-paired}). Mean direction errors are 0.41--0.63 for structured fits and 0.66--0.80 for vectorization (Table~\ref{tab:comparison-v2-synthetic-fit-diagnostics}). Exploration lengths and tuning also differ, so regret gains cannot be attributed to structure alone. On monotone links, T-ESTOR's mean final regret is approximately 72, versus 190--230 for the best binning variant.

\textbf{CCLE-based evaluation.}
We use Cancer Cell Line Encyclopedia (CCLE) drug-response data to construct a coefficient for simulated dense actions and feedback. On Linear, T-ESTOR improves on ESTOR and achieves mean regret comparable to G-LowTESTR despite not knowing the link. On Wave, T-BSTOR-Slice-DCT has lower mean regret than vectorization, GSTOR, and ZoomSIB-UCB; G-LowTESTR and the Cell and Dose unfoldings have lower means than Slice-DCT. These comparisons are descriptive. Appendix~\ref{app:ccle} gives the setup and detailed results.

%We also evaluate the methods using the Cancer Cell Line Encyclopedia (CCLE) data, which provides molecular profiles and drug-response measurements for human cancer cell lines. We construct a $16\times24\times8$ coefficient block from its response data without projecting it onto an exact low-rank model, and simulate dense actions and feedback as detailed in Appendix~\ref{app:ccle}. At $T=20{,}000$, mean regret under the Linear link is 57.1 for T-ESTOR, 59.2 for G-LowTESTR with a correctly specified linear link, and 90.6 for ESTOR. Thus, despite not knowing the reward link, T-ESTOR attains similar empirical performance to a baseline supplied with the correct linear link. T-ESTOR and ESTOR are evaluated only on Linear. Under the nonmonotone Wave link, T-BSTOR-Slice-DCT also outperforms vectorization, GSTOR, and ZoomSIB-UCB in mean regret, extending the favorable comparison with these baselines beyond monotone rewards. However, G-LowTESTR achieve lower means than T-BSTOR-Slice-DCT, even though G-LowTESTR assumes an identity link that is misspecified for Wave.
\section{Conclusion}

We studied low-rank matrix and tensor bandits with an unknown shared Lipschitz link and finite-variance noise. Under the stated assumptions, T-ESTOR achieves square-root regret for monotone links with dimension dependence determined by low-rank structure. For every admissible design, the monotone lower bound matches rank, dimension, and horizon dependence up to logarithms at large horizons, for fixed menu size and model/design constants. T-BSTOR attains the optimal $\widetilde O(T^{2/3})$ nonmonotone horizon rate for fixed dimensions, menu size, and model/design constants. Synthetic and CCLE-based experiments illustrate the benefits of structured estimation.

\paragraph{Limitations and future work.} The main-policy guarantees rely on known regular candidate distributions, supplied tensor transforms, and the stated Stein-signal conditions; T-ESTOR additionally requires exact selected scores. Future work includes certifying learned selected scores, handling zero first-order signals, and establishing guarantees for the adaptive exploration used in our experiments.

%\clearpage
%\input{sections/statements}
%\ifdefined\EnableBibliography
\bibliography{references}
\bibliographystyle{unsrtnat}
%\fi
\clearpage
\appendix
% The named partial contents starts here, independently of the last main section.
\startcontents[appendix]
\section*{Appendix Contents}
\printcontents[appendix]{app}{1}[2]{\setlength{\parskip}{0pt}}

\section*{Roadmap}
Appendix~\ref{app:notation-guide} collects notation. Appendix~\ref{app:candidate-distribution} establishes the Stein identities and admissible designs. Appendix~\ref{app:unknown-law} develops the estimated-score interface, a constructive extension to unknown product candidate laws, and a conditional learned-score epoch analysis. Appendix~\ref{app:method-proofs} proves the structured estimation guarantees; Appendices~\ref{app:stein-pair}--\ref{app:regret-proof} establish selected-score bounds, monotone regret, and structural comparisons. Appendix~\ref{app:lower-proof} proves the large-horizon lower bound for every admissible design, with a sharp-constant product-beta corollary. Appendix~\ref{app:nonmonotone} gives the nonmonotone analysis, bin-confidence bounds, exploration-schedule sensitivity, and horizon lower bound for fixed dimensions, menu size, and model/design constants. Appendices~\ref{app:experiments}--\ref{app:ccle} describe the synthetic and CCLE-derived experiments.

\clearpage
\section{Notation}
\label{app:notation-guide}

Principal notation used in the main text.

% Keep this single table on one page. Include only important main-text
% symbols, group related uses, and keep each meaning to one short sentence.
\begingroup
\raggedbottom
\renewcommand{\arraystretch}{1.15}
\setlength{\LTleft}{0pt}
\setlength{\LTright}{0pt plus 1fill}
\begin{longtable}{@{}>{\raggedright\arraybackslash}p{0.31\linewidth}
  >{\raggedright\arraybackslash}p{\dimexpr0.69\linewidth-2\tabcolsep\relax}@{}}
\toprule
\textbf{Symbol} & \textbf{Meaning}\\
\midrule
\endfirsthead
\toprule
\textbf{Symbol} & \textbf{Meaning (continued)}\\
\midrule
\endhead
\bottomrule
\endlastfoot
$T,\ t,\ K$ & Time horizon, round index, and candidates per round.\\
$\mathcal X_t,\ X_{t,a},\ X_t$ & Candidate menu, candidate $a$, and chosen action at round $t$.\\
$\Theta_*,\ \widehat\Theta$ & True parameter and structured fit targeting $\mu_*\Theta_*$.\\
$f,\ L_f$ & Reward link and its Lipschitz bound.\\
$y_t,\ \eta_t,\ \sigma^2$ & Reward, noise, and conditional noise-variance bound.\\
$R_T,\ r_t$ & Cumulative regret and its round-$t$ summand.\\
$A,\ B$ & Action entry bound and parameter entrywise-$\ell_1$ budget.\\
$h,\ d_\ell,\ N$ & Array order, mode sizes, and ambient size $N=\prod_\ell d_\ell$.\\
$d_1,\ d_2,\ d,\ q$ & Leading dimensions, $d=\max(d_1,d_2)$, and slice count.\\
$r,\ \rho_k,\ r_U$ & Rank cap, slice rank caps, and their sum $r_U=\sum_k\rho_k$.\\
$U,\ \mathcal T_U,\ X^{(k)}$ & Orthogonal transform, its tensor action, and frontal slice $k$.\\
$\mathcal C_{\mathrm M},\ \mathcal C_U$ & Matrix and transformed-slice parameter classes.\\
$\mathcal B(B)$ & Entrywise-$\ell_1$ ball of radius $B$.\\
$p,\ p_i$ & Base or locally fixed sampling density, and epoch sampling density.\\
$S^p,\ S^{p_i}$ & Full joint scores of these densities, in array shape.\\
$M,\ \Sigma_X$ & Known joint score-moment bound and covariance $\Sigma_X=\operatorname{Cov}_p(\vecop(X))$.\\
$m,\ \kappa_X$ & Candidate mean $\mathbb E_pX$ and centered joint-projection concentration certificate.\\
$\mu_*,\ \mu_i$ & Actual Stein multipliers for a fixed sampling law and epoch $i$.\\
$\mu_{\min},\ \gamma$ & Uniform positive selected-signal and absolute base-signal lower bounds.\\
$n,\ j,\ Z_j$ & Disjoint pair count ($2n$ observations), pair index, and paired reward--score statistic.\\
$V,\ w$ & Paired-reward factor $2L_f^2A^2B^2+\sigma^2$ and score second-moment bound.\\
$\widehat Z_\nu,\ \widehat Z_{\nu,k}$ & Robust empirical Stein moment and its transformed-slice counterpart.\\
$\psi,\ \widetilde\psi_\nu$ & Scalar influence and its spectral extension including the factor $1/\nu$.\\
$\nu,\ \lambda$ & Spectral influence tuning and nuclear-penalty coefficient.\\
$\delta,\ \delta_E$ & Failure probability and per-epoch choice $\delta_E=(T+1)^{-2}$.\\
$i,\ n_i,\ v_i,\ \widehat\Theta_i$ & Epoch, $n_i=2^i$ pairs, frozen direction, and epoch-$i$ fit targeting $\mu_i\Theta_*$ for epoch $i+1$.\\
$F_i,\ F_i',\ G_K(d)$ & Projection CDF and density; selected-score inflation factor.\\
$\mathcal C,\ s_{\max}$ & Chosen lower-bound parameter class and hard-subspace dimension.\\
$c_X,\ M_0$ & Exact beta-design covariance and score moments $A^2/11$ and $12/A^2$ (Appendix~\ref{app:density-examples}, Corollary~\ref{cor:lower-beta}).\\
$\chi,\ T_+(p)$ & Entrywise cost of the hard subspace and large-horizon threshold.\\
$s_{\rm lb},\ \chi_{\rm lb}$ & Number and entrywise cost of the mean-orthogonal hard directions.\\
$\beta$ & Disclosed linear-link slope in the lower bound.\\
$n_0,\ w_{\rm bin},\ D$ & Actual exploration count (even when $n_0<T$), bin width, and a supplied bound on $dr$ or $dr_U$.\\
\end{longtable}
\endgroup

\clearpage
\section{Candidate Distribution and Stein Identities}
\label{app:candidate-distribution}

\subsection{Joint Regularity and the Base Identity}
\label{sec:candidate-law}
Write $x=\vecop(X)$ and extend the density $p$ by zero to all of $\mathbb R^N$. The space $W^{1,1}(\mathbb R^N)$ consists of integrable functions whose first distributional derivatives are integrable. The zero-extension condition and weak score in Assumption~\ref{ass:candidate-distribution} mean that
\[
 p\in W^{1,1}(\mathbb R^N),\qquad
 \nabla_x p(x)=-p(x)\,\vecop(S^p(X))\quad\text{almost everywhere}.
\]
Under score integrability, these properties are equivalent to the compact-test Stein identity in Equation~\eqref{eq:candidate-density}. Indeed, for every $g\in C_c^\infty(\mathbb R^N)$, the identity says $\int g p\,\vecop(S^p)=\int p\nabla g$. By the definition of a distributional derivative this is precisely $\nabla p=-p\vecop(S^p)$. The right-hand side is integrable, so $p$ belongs to $W^{1,1}$. Conversely, this weak derivative equation gives the compact-test identity by integration by parts. The full joint second-moment bound implies the required score integrability by Cauchy--Schwarz. Because the derivative identity holds on all of $\mathbb R^N$, it includes the support boundary and permits no additional boundary measure. On $\{p>0\}$ it identifies the weak score with $-\nabla_X\log p$; the weak derivative of a nonnegative Sobolev function is zero almost everywhere on its zero set. A smooth cutoff equal to one on the bounded support then gives $\mathbb E_p\vecop(S^p(X))=-\int\nabla p=0$.

For $\Theta_*\ne0$, orthogonal coordinates and Fubini show that $\inner X{\Theta_*}$ has a density: its law is a scaled one-dimensional marginal of the joint density. Extend $f$ as a Lipschitz function outside $[-A B,A B]$ and apply weak integration by parts to its composition with this index, multiplied by the same cutoff. The Sobolev chain rule gives gradient $f'(\inner X{\Theta_*})\vecop(\Theta_*)$ almost everywhere. The index density ensures that the exceptional set where $f'$ is undefined has probability zero. The index is bounded, and the finite score second moment makes every product integrable for each finite, unrestricted value of $f(0)$. Conditional mean-zero noise contributes zero, proving the base identity
\[
 \mathbb E_{p}[y S^p(X)]=\mu_*\Theta_*,\qquad
 \mu_*=\mathbb E_{p}
 [f'(\inner X{\Theta_*})]\in[-L_f,L_f].
 \label{eq:population-stein}
\]
When $\Theta_*=0$, the signal part is constant and the moment vanishes by score and noise centering, without evaluating $f'(0)$.

\subsection{Explicit Admissible Densities}
\label{app:density-examples}
\noindent\textbf{Product-beta design.}
\label{ex:beta-design}
With $\mathcal I=[d_1]\times\cdots\times[d_h]$, a concrete admissible density is
\begin{equation}
  p_\beta(X)=\prod_{\boldsymbol i\in\mathcal I}p_{A}(X_{\boldsymbol i}),\qquad
  p_{A}(u)=\frac{315}{256A}
    \left(1-\frac{u^2}{A^2}\right)^4\mathbf1\{|u|<A\}.
  \label{eq:beta-density}
\end{equation}
Write $x=\vecop(X)$. The score and moments are
\begin{equation*}
 \begin{aligned}
 (S^{p_\beta}(X))_{\boldsymbol i}
   &=\frac{8X_{\boldsymbol i}}{A^2-X_{\boldsymbol i}^2},
 &c_X&=A^2/11,\quad M_0=12/A^2,\\
 \operatorname{Cov}_{p_\beta}(x)&=c_XI_N,
 &\mathbb E_{p_\beta}[\vecop(S^{p_\beta}(X)) \vecop(S^{p_\beta}(X))^\top]&=M_0I_N.
 \end{aligned}
 \label{eq:beta-covariance}
\end{equation*}
The constants $c_X$ and $M_0$ refer only to this beta design, including Corollary~\ref{cor:lower-beta}, the product-beta remark in Appendix~\ref{app:nonmonotone-lower}, and the experiments. With $u=x_i/A$, normalization and the moments follow from $\int_{-1}^1u^{2j}(1-u^2)^a\,du=\mathrm B(j+1/2,a+1)$; the fourth moment is $\mathbb E x_i^4=3A^4/(11\cdot13)$. The fourth-order taper has zero boundary trace and integrable first derivative, so the zero-extended product density is in $W^{1,1}$. Symmetry centers coordinates and scores; independence gives the exact joint covariance $c_XI_N$ and score second moment $M_0I_N$. For this example only and any nonzero $v$, the density of $\inner vX$ is the convolution over nonzero coordinates $v_i$ of $|v_i|^{-1}p_{A}(u/v_i)$. It is positive in the interior of $[-A\entone v,A\entone v]$ and vanishes at its endpoints. Taking $\norm{\Sigma_X}_{\rm op}=c_X$ and $M=M_0$ in Theorem~\ref{thm:regret} recovers the beta-design guarantee, subject to its nondecreasing-link, selected-score access, and signal assumptions.

\noindent\textbf{Correlated, noncentered designs.}
\label{ex:dependent-design}
Let $z$ have the product-beta law~\eqref{eq:beta-density} with $A_0$ in place of $A$, and set $x=\vecop(X)=b+Hz$ for known $b$ and invertible $H$ with $\norm b_\infty+A_0\norm H_{\infty\to\infty}\leq A$. Translation and an invertible linear change of variables preserve the zero-extension $W^{1,1}$ regularity, and
\[
 p_X(x)=\frac{p_\beta(H^{-1}(x-b))}{|\det H|},\qquad
 \vecop(S^{p_X}(X))=H^{-\top}\vecop(S^{p_\beta}(Z)),
\]
where $\vecop(Z)=H^{-1}(x-b)$. Hence $\Sigma_X=(A_0^2/11)HH^\top$ and $M=(12/A_0^2)\norm{H^{-1}}_{\rm op}^2$, so $\norm{\Sigma_X}_{\rm op}M=(12/11)\operatorname{cond}_2(H)^2$. Hoeffding's lemma applied to $z$ gives \eqref{eq:nonmonotone-joint-concentration} with $\kappa_X=\max\{1,(A_0/A)\norm H_{\rm op}\}$. For example, take block-diagonal $H$ with blocks
\[
 \frac1{1+\rho}\begin{pmatrix}1&\rho\\\rho&1\end{pmatrix},
 \qquad 0<\rho<1,
\]
with at least one two-coordinate block and an identity block for an unmatched coordinate. Its row-sum and operator norms are one, and its condition number is $(1+\rho)/(1-\rho)$, independent of $N$. With $A_0=A/2$, any $\norm b_\infty\leq A/2$ is allowed; paired coordinates have covariance $2\rho A_0^2/\{11(1+\rho)^2\}>0$, and $\kappa_X=1$ with a dimension-free $M$ for fixed $\rho$. The structured parameter is not transformed; selected-score access and the signal conditions remain assumptions.

\section{Unknown Distributions and Estimated Scores}
\label{app:unknown-law}

\subsection{Scope and Overview}
\label{sec:unknown-law}

This appendix treats an unknown product candidate law on $(-A,A)^N$ with zero-boundary, absolutely continuous marginal densities and square-integrable scores. Retain the support, entrywise budgets, and absolutely continuous $L_f$-Lipschitz link regularity of Section~\ref{sec:preliminaries}, together with the fixed conditional noise kernel used in Section~\ref{sec:epoch-framework}. For the monotone regret results below, impose the nondecreasing orientation of Section~\ref{sec:epoch-framework} and require $\mu_*:=\mathbb E_p f'(\inner X{\Theta_*})\geq\mu_{\min}>0$ for $\Theta_*\ne0$. This is a positive \emph{base-law} signal condition. The learner sees all menu candidates but knows neither $p$ nor its score.

Independent unlabeled calibration yields score-error and fitted-score covariance certificates. Theorem~\ref{thm:estimated-score-error} transfers them to matrix and transformed-slice estimation, and Sections~\ref{app:explore-commit}--\ref{app:calibration-rates} give a calibration--exploration--commit policy with a finite score-matching construction~\citep{hyvarinen2005estimation} on an explicit smooth product class. For fixed class, model and structural quantities, its regret is
\begin{equation}
 \mathbb E R_T=
 \widetilde O\!\left(T^{2/3}+T^{(2s+14)/(3s+14)}\right).
 \label{eq:unknown-smooth-rate}
\end{equation}
An exactly specified fixed-dimensional score family gives $\widetilde O(T^{2/3})$. These are constructive, not asserted optimal, unknown-density rates. They concern the separate calibration--exploration--commit policy; extending greedy T-ESTOR also requires direction-valid selected-score certificates (Section~\ref{app:unknown-greedy}). A zero parameter has zero regret without evaluating $f'(0)$.

\subsection{Calibration Interface and Perturbed Structured Estimation}
\label{app:score-perturbation}

Let $p$ be a fixed observation density whose zero extension lies in $W^{1,1}(\mathbb R^N)$ with square-integrable score, as in Appendix~\ref{sec:candidate-law}. Retain the support, Lipschitz regularity, budget and noise assumptions above, but impose neither monotonicity nor a nonzero Stein signal for the estimation result; here $p$ need not be a product law and may be a frozen selected law. Conditional on a calibration history $\mathcal G$, $p$ and $\widehat S$ are fixed, and subsequent action--reward observations are iid with that law and reuse no calibration observations. On a calibration event of probability at least $1-\delta_{\rm cal}$, suppose deterministic certificates $\varepsilon\geq0$, $\widehat J>0$ satisfy
\begin{equation}
 \operatorname{Cov}_p(\vecop(\widehat S-S^p))\preceq\varepsilon^2I_N,
 \qquad
 \operatorname{Cov}_p(\vecop\widehat S)\preceq\widehat JI_N.
 \label{eq:score-certificates}
\end{equation}
All expectations below condition on $\mathcal G$. Write $V=2L_f^2A^2B^2+\sigma^2$ and, for $\Theta_*\ne0$, put $\mu_*=\mathbb E_p f'(\inner X{\Theta_*})\in[-L_f,L_f]$; otherwise set $\mu_*=0$. The estimation bound allows signed or zero multipliers.

\begin{mtheorem}[Structured error with estimated scores]
\label{thm:estimated-score-error}
Conditional on successful calibration, form $\widehat Z_j=\tfrac12(y_{2j-1}-y_{2j}) [\widehat S(X_{2j-1})-\widehat S(X_{2j})]$ from $n\geq1$ fresh pairs. For $\Theta_*\in\mathcal C_{\mathrm M}(r,B)$ or $\mathcal C_U(\boldsymbol\rho,B)$, apply the corresponding spectral estimator with the tuning in Equation~\eqref{eq:matrix-tuning} or~\eqref{eq:slice-tuning}, using $w=d\widehat J$ and $d=\max(d_1,d_2)$. For $\delta\in(0,1)$, with conditional probability at least $1-\delta$, respectively,
\begin{equation*}
 \frobnorm{\widehat\Theta-\mu_*\Theta_*}
 \leq
 \begin{cases}
 4\sqrt{Vrd\widehat J\log(2(d_1+d_2)/\delta)/n}
       +L_fAB\varepsilon,&\text{matrix},\\[2pt]
 4\sqrt{Vr_Ud\widehat J\log(2q(d_1+d_2)/\delta)/n}
       +L_fAB\varepsilon,&\text{transformed slices}.
 \end{cases}
 \label{eq:estimated-score-radius}
\end{equation*}
\end{mtheorem}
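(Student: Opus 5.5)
The plan is to split the error into a bias term from score estimation and a stochastic term handled by the machinery of Theorem~\ref{thm:structured-estimation}. Write $\widehat S=S^p+E$ with $E=\widehat S-S^p$. Conditional on $\mathcal G$, the pairs $\widehat Z_j$ are iid with common mean
\[
 \bar\Theta:=\mathbb E\widehat Z_j=\mathbb E_p[y\,\widehat S(X)]-\mathbb E y\,\mathbb E_p\widehat S(X)=\operatorname{Cov}(y,\vecop\widehat S(X)),
\]
in array shape. Pairing gives exactly this covariance form, so constant shifts of $\widehat S$ do not matter. By the base identity of Appendix~\ref{sec:candidate-law}, $\operatorname{Cov}(y,S^p)=\mu_*\Theta_*$. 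Hence $\bar\Theta-\mu_*\Theta_*=\operatorname{Cov}(f(\inner X{\Theta_*}),E(X))$, because the noise is conditionally mean zero given $X$.

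\textbf{Bias step.} For a unit Frobenius direction $Z$, Cauchy--Schwarz gives
\[
 |\inner{Z}{\bar\Theta-\mu_*\Theta_*}|\leq\sqrt{\operatorname{Var}f(\inner X{\Theta_*})}\,\sqrt{\operatorname{Var}\inner ZE}.
\]
The first factor is at most $L_fAB$, since the index lies in $[-AB,AB]$ and $f$ is $L_f$-Lipschitz there; any value of $f(0)$ cancels in the variance. The second factor is at most $\varepsilon$ by \eqref{eq:score-certificates}. Taking the supremum over $Z$ gives $\frobnorm{\bar\Theta-\mu_*\Theta_*}\leq L_fAB\varepsilon$.

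\textbf{Stochastic step.} The plan is to rerun the proof of Theorem~\ref{thm:structured-estimation} with target $\bar\Theta$ in place of $\mu_*\Theta_*$. The Minsker-type concentration of $\widehat Z_{\nu,k}$ around $(\mathcal T_U\bar\Theta)^{(k)}$ in operator norm needs only independence and the second-moment bound $Vw$. That bound comes from Lemma~\ref{lem:paired}, which I would apply with $\widehat S$ in place of $S^p$, since it uses only that the centered pair difference has covariance controlled by $w$. Here $\operatorname{Cov}(\vecop\widehat S)\preceq\widehat JI_N$ gives $\norm{\mathbb E HH^\top}_{\rm op}\leq d\widehat J$ for the centered slice $H$, because each such matrix is a sum of at most $d$ covariance blocks. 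This holds for transformed slices as well, since $\mathcal T_U$ is orthogonal. So $w=d\widehat J$.

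\textbf{Combining, and the main obstacle.} The obstacle is that $\bar\Theta$ is no longer low rank, so the SVT oracle argument must be run against the low-rank point $\mu_*\Theta_*$ while the data concentrate around $\bar\Theta$. I would handle this by a triangle inequality at the level of the proximal map. On the concentration event, $\norm{\widehat Z_{\nu,k}-(\mathcal T_U\bar\Theta)^{(k)}}_{\rm op}\leq\lambda/2$. SVT is $1$-Lipschitz (nonexpansive) in Frobenius norm, so
\[
 \frobnorm{\widehat\Theta-\mu_*\Theta_*}\leq\frobnorm{\operatorname{SVT}(\widehat Z)-\operatorname{SVT}(\bar Z+\Delta)}+\cdots.
\]
The cleaner route is to decompose $\widehat Z_{\nu,k}=(\mathcal T_U\mu_*\Theta_*)^{(k)}+G_k+\Delta_k$, where $G_k$ is the operator-small noise and $\Delta_k$ is the slice of the bias. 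Let $\widetilde\Theta$ be the SVT of $\mu_*\Theta_*+G$. The deterministic argument of Theorem~\ref{thm:structured-estimation} gives $\frobnorm{\widetilde\Theta-\mu_*\Theta_*}\leq4\sqrt{r_U Vw\log(\cdot)/n}$. Nonexpansiveness of SVT then adds at most $\frobnorm{\Delta}\leq L_fAB\varepsilon$. Together these yield the stated bound.
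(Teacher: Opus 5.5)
Your proposal is correct and follows essentially the same route as the paper's proof. The paper also writes the pair mean as $\operatorname{Cov}(f(\inner X{\Theta_*}),\widehat S)=\mu_*\Theta_*+B_p$, bounds the bias $\frobnorm{B_p}\leq L_fAB\varepsilon$ by Cauchy--Schwarz, and uses the paired second-moment bound $V\operatorname{Cov}_p(\vecop\widehat S)$ to get $w=d\widehat J$. It then subtracts the bias only for analysis, applies the thresholding argument to the low-rank target, and restores the bias via Frobenius nonexpansiveness of the nuclear-norm proximal map; this matches the decomposition you settle on, and the aborted first display in your combining step is not needed.
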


\noindent\textbf{Proof.}
Write $F(X)=f(\inner X{\Theta_*})$ and $e=\widehat S-S^p$. Conditional mean-zero noise and Stein integration by parts give $\mathbb E\widehat Z_j=\operatorname{Cov}_p(F,\widehat S) =\mu_*\Theta_*+B_p$, where $B_p=\operatorname{Cov}_p(F,e)$. For a unit array $D$, Cauchy--Schwarz and the range bound for $F$ imply
\[
 |\inner{B_p}D|\leq
 \sqrt{\operatorname{Var}_p F}\sqrt{\operatorname{Var}_p\inner eD}
 \leq L_fAB\varepsilon.
\]
Thus $\frobnorm{B_p}\leq L_fAB\varepsilon$ without a dimension factor. An independent score difference has second moment twice its covariance; Lemma~\ref{lem:paired}'s paired-moment proof therefore gives
\[
 \mathbb E[\vecop(\widehat Z_j)\vecop(\widehat Z_j)^\top]
 \preceq V\operatorname{Cov}_p(\vecop\widehat S)
 \preceq V\widehat JI_N.
 \label{eq:approximate-pair-second}
\]
The left and right raw moments of each matrix or transformed slice are bounded by $dV\widehat J$. Spectral concentration controls the influenced average about $\mu_*\Theta_*+B_p$. Subtract $B_p$ from that average only for analysis and apply the thresholding argument of Appendices~\ref{app:matrix-proof}--\ref{app:slice-proof} to the structured target. Frobenius nonexpansiveness of the nuclear-norm proximal map bounds the change on restoring $B_p$ by $\frobnorm{B_p}$, without a rank assumption on the bias. This comparison is analytical: the learner never subtracts $B_p$. Pairing cancels constant reward or score offsets.

\subsection{Calibration--Exploration--Commit}
\label{app:explore-commit}

Return to the nondecreasing link, the unknown product \emph{base} law $p$, and its positive signal $\mu_*\geq\mu_{\min}$ for $\Theta_*\ne0$; write $\Sigma_X=\operatorname{Cov}_p(\vecop X)$, used only in the regret analysis. Choose deterministic integers $b,n\geq1$ with $b+2n\leq T$. During $b$ uniform-selection calibration rounds, discard rewards and fit scores from all menus. For the next $2n$ rounds, choose uniform indices independently of menus and past, giving $n$ disjoint iid pairs under $p$ independent of calibration. Fit once by Theorem~\ref{thm:estimated-score-error}, then commit greedily to this fixed fit for $H_T=T-b-2n$ rounds, selecting uniformly if the fit is zero. Commit rewards are not reused.

For this subsection and Section~\ref{app:unknown-greedy}, let $e_{\rm fit}(n,\delta,J,\varepsilon)$ be the deterministic right-hand side of the norm bound in Theorem~\ref{thm:estimated-score-error} for the chosen fit, with $J$ in place of $\widehat J$. It includes both the statistical term and its stated bias constant. Put $\Delta=2L_fAB$. If calibration certifies $(\varepsilon_b,\widehat J)$ with failure probability $\delta_{\rm cal}$, and the reward fit uses confidence $\delta_{\rm rew}$, then
\begin{equation}
 \begin{aligned}
 \mathbb E R_T\leq\min\biggl\{\Delta T,\;&\Delta(b+2n)
  +\Delta H_T(\delta_{\rm cal}+\delta_{\rm rew})\\
 &+\frac{2L_f\sqrt{K\norm{\Sigma_X}_{\rm op}}}{\mu_{\min}}H_T
     e_{\rm fit}(n,\delta_{\rm rew},\widehat J,\varepsilon_b)\biggr\}.
 \end{aligned}
 \label{eq:unknown-regret}
\end{equation}
The certificates are deterministic bounds valid on the calibration event.

To prove the bound, center a fresh menu by $\overline X=\mathbb E_pX$ in the transfer argument. This preserves fitted rankings and true index differences, giving
\[
 \mathbb E[r_t\mid\mathcal F_{t-1}]
 \leq\frac{2L_f}{\mu_{\min}}\sqrt{K\norm{\Sigma_X}_{\rm op}}\,
       \frobnorm{\widehat\Theta-\mu_*\Theta_*}.
\]
The transfer uses the nondecreasing link and positive base multiplier; no selected-law signal condition is needed because the commit-stage fit is not updated. The learner need not know or subtract $\overline X$. The first $b+2n$ rounds cost at most $\Delta$ each. Conditioning on calibration and the reward fit bounds failure by $\delta_{\rm cal}+\delta_{\rm rew}$. Apply the norm bound on success and $\Delta$ on failure in each commit round; the deterministic cap $\Delta T$ supplies the minimum.

\subsection{An Explicit Smooth Product Class}
\label{app:smooth-class}

Fix known constants $0<c\leq1\leq C$, $s>0$, and $R\geq0$. Let $\phi_k$, $k=0,1,\ldots$, be the Jacobi $(4,4)$ polynomials normalized to an orthonormal basis in $L^2(\omega)$ and positive at $1$, where $\omega(u)=(315/256)(1-u^2)^4\mathbf1\{|u|<1\}$. The unknown marginal $j$ has density
\[
 p_j(x)=A^{-1}\omega(x/A)r_j(x/A),\quad
 c\leq r_j\leq C,\quad \int_{-1}^1\omega r_j=1,
\]
with $\log r_j$ locally absolutely continuous and
\begin{equation}
 h_j=-(\log r_j)'=\sum_{k\geq0}a_{jk}\phi_k\quad\text{in }L^2(\omega),
 \qquad \sum_{k\geq0}(1+k)^{2s}a_{jk}^2\leq R^2.
 \label{eq:jacobi-smoothness}
\end{equation}
Equation~\eqref{eq:jacobi-smoothness} specifies the approximation class; the positive lower bound is on the density ratio, not the density at the boundary. For $c<1<C$ and $R>0$ the class is infinite-dimensional: sufficiently small, rapidly decaying coefficient sequences give $r(u)\propto\exp(-\int_0^u h(v)\,dv)$ with the required ratio bounds.

Put $s_{\rm ref}(u)=8u/(1-u^2)$. The $j$th coordinate of the full score $S^p$ is the scalar function $s_j(x)=A^{-1}(s_{\rm ref}(x/A)+h_j(x/A))$. Known certificates are
\[
 \norm{\Sigma_X}_{\rm op}\leq\min(A^2,CA^2/11),\qquad
 \mathbb E s_j(x_j)^2\leq\frac C{A^2}(\sqrt{12}+R)^2.
\]
Reference moments, Minkowski's inequality and independence give these bounds and the joint score certificate $M=C(\sqrt{12}+R)^2/A^2$. The ratio bound makes the boundary trace zero, and the score bound makes $p_j'=-s_jp_j$ integrable. Thus the zero extension is absolutely continuous, the score is centered and Stein integration by parts is valid, including for distinct marginals with nonzero means.

\noindent\textbf{Known basis bounds.}
Writing $P_k=P_k^{(4,4)}$, the Rodrigues norm and endpoint value give
\[
 \int P_k^2\omega=\frac{630}{2k+9}\frac{(k+4)!^2}{k!(k+8)!},\qquad
 \norm{\phi_k}_\infty^2=\frac{(2k+9)\prod_{i=1}^8(k+i)}{9!}.
\]
The endpoint maximum follows from the beta-moment identity
\[
 \frac{P_k^{(a,a)}(u)}{P_k^{(a,a)}(1)}
 =\mathbb E(u+i\sqrt{1-u^2}Z_a)^k,
\]
where $Z_a$ has density proportional to $(1-z^2)^{a-1/2}$ for $a>0$. The complex quantity has modulus at most one. Using the derivative identity for Jacobi polynomials then gives $\norm{\phi_k'}_\infty\leq k(k+9)\norm{\phi_k}_\infty/10$. For a dictionary size $d_{\rm cal}\geq1$, set $H_{d_{\rm cal}}=\max_{k<d_{\rm cal}}\norm{\phi_k}_\infty$; thus
\[
 H_{d_{\rm cal}}^2\leq d_{\rm cal}^9,\qquad
 \max_{k<d_{\rm cal}}\norm{\phi_k'}_\infty
 \leq d_{\rm cal}^2H_{d_{\rm cal}}.
\]
The dictionary size is distinct from the matrix side length $d$ and the lower-bound dimension $s_{\max}$ in Section~\ref{sec:lower-bounds}.

\subsection{A Finite Calibration Algorithm and Its Proof}
\label{app:calibration}

For one marginal, write $U_0=X/A$, $\Phi=(\phi_0,\ldots,\phi_{d_{\rm cal}-1})^\top$, and define known moment integrands
\[
 G=\mathbb E[\Phi(U_0)\Phi(U_0)^\top],\qquad
 (\boldsymbol b)_a=\mathbb E[\phi_a'(U_0)-s_{\rm ref}(U_0)\phi_a(U_0)].
\]
Integration of $(\phi_a \omega r)'$ shows $\boldsymbol b=\mathbb E[h(U_0)\Phi(U_0)]$, while $cI\preceq G\preceq CI$. Therefore $\beta^*=G^{-1}\boldsymbol b$ is the $L^2(\omega r)$ projection coefficient of $h$. With $Q=\sqrt{C/c}\,R$,
\[
 \norm{\beta^*}_2\leq Q,\qquad
 \norm{h-\Phi^\top\beta^*}_{L^2(\omega r)}
 \leq\sqrt C R\,d_{\rm cal}^{-s}.
\]
Projection and the lower Gram eigenvalue give the first bound; comparison with the first $d_{\rm cal}$ reference coefficients gives the second.

Given $m_{\rm cal}$ iid scalar samples, choose an odd $k_{\rm blk}$ with $m_{\rm cal}\geq2k_{\rm blk}$ and split the first $k_{\rm blk}\lfloor m_{\rm cal}/k_{\rm blk}\rfloor$ samples into equal blocks. Estimate every upper-triangular entry of $G$ and every coordinate of $\boldsymbol b$ by the median of its block means, then reflect the Gram entries to obtain a symmetric $\widehat G$. If $\lambda_{\min}(\widehat G)\geq c/2$, set $\widetilde\beta=\widehat G^{-1}\widehat{\boldsymbol b}$; otherwise set it to zero. Project onto the Euclidean ball of radius $Q$ to obtain $\widehat\beta$, and return
\[
 \widehat s(x)=A^{-1}\bigl(s_{\rm ref}(x/A)+\Phi(x/A)^\top\widehat\beta\bigr).
\]
The eigenvalue check is necessary because entrywise medians need not preserve positive semidefiniteness. This score corresponds to a genuine density proportional to
\[
 \omega(u)\exp\left(-\sum_{a<d_{\rm cal}}\widehat\beta_a\int_0^u\phi_a(v)\,dv\right).
\]
Its normalizing constant is not needed for scoring.

\begin{mtheorem}[Finite-sample calibration]
\label{thm:calibration}
Fit $J_{\rm marg}$ marginals, with $m_{\rm cal}$ iid samples for each. For odd $k_{\rm blk}\geq 8\log(2J_{\rm marg}(d_{\rm cal}+1)^2/\delta_{\rm cal})$, assume
\[
 m_{\rm cal}\geq(32C/c^2)k_{\rm blk}d_{\rm cal}^{11}.
\]
Then, with probability at least $1-\delta_{\rm cal}$, simultaneously,
\begin{equation}
 \begin{aligned}
 \norm{\widehat s_j-s_j}_{L^2(p_j)}
 &\leq e_{m_{\rm cal},d_{\rm cal}}\\
 &:=\frac1A\left[
 \sqrt C R d_{\rm cal}^{-s}
 +\frac{2C}{c}\sqrt{\frac{8k_{\rm blk}}{m_{\rm cal}}}
 H_{d_{\rm cal}}\!
 \left\{\sqrt{2d_{\rm cal}(d_{\rm cal}^4+12)}+d_{\rm cal}Q\right\}\right]\\
 &\leq\frac1A\left[
 \sqrt C R d_{\rm cal}^{-s}
       +D d_{\rm cal}^7\sqrt{k_{\rm blk}/m_{\rm cal}}\right],
 \qquad D=\frac{2C}{c}\sqrt8(\sqrt{26}+Q).
 \end{aligned}
 \label{eq:calibration-error}
\end{equation}
On every calibration history, including failures,
\begin{equation}
 \mathbb E_{p_j}\widehat s_j(x_j)^2
 \leq M_{\rm fit}:=\frac C{A^2}(\sqrt{12}+Q)^2.
 \label{eq:fitted-score-moment}
\end{equation}
On the same calibration event of probability at least $1-\delta_{\rm cal}$, stacking the fitted scalar functions into $\widehat S$ gives, for a fresh product array, Equation~\eqref{eq:score-certificates} with $\varepsilon=e_{m_{\rm cal},d_{\rm cal}}$, $\widehat J=M_{\rm fit}$.
\end{mtheorem}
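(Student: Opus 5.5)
The plan is to split the marginal score error into an approximation part and an estimation part, then control the estimation part through median-of-means concentration on the finitely many moment entries. For marginal $j$, write $\widehat s_j-s_j=A^{-1}\bigl(\Phi^\top\widehat\beta-h_j\bigr)(\cdot/A)$. By the triangle inequality in $L^2(p_j)=L^2(\omega r_j)$ after rescaling,
\[
\norm{\widehat s_j-s_j}_{L^2(p_j)}\le A^{-1}\Bigl(\norm{h_j-\Phi^\top\beta^*}_{L^2(\omega r_j)}+\norm{\Phi^\top(\widehat\beta-\beta^*)}_{L^2(\omega r_j)}\Bigr).
\]
The first term is at most $\sqrt CRd_{\rm cal}^{-s}$ by the projection bound already stated. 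The second term equals $\sqrt{(\widehat\beta-\beta^*)^\top G(\widehat\beta-\beta^*)}\le\sqrt C\norm{\widehat\beta-\beta^*}_2$. Since $\norm{\beta^*}_2\le Q$ and projection onto the $Q$-ball is nonexpansive, it suffices to bound $\norm{\widetilde\beta-\beta^*}_2$ on the good event.

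\textbf{Concentration step.} Each entry of $\widehat G$ and $\widehat{\boldsymbol b}$ is a median of block means of an iid scalar with variance at most its second moment. For $G_{ab}$ the second moment is $\le H_{d_{\rm cal}}^2\,\mathbb E\phi_b^2\le H_{d_{\rm cal}}^2C$, using $\int\phi_b^2\omega r\le C$. For $\boldsymbol b_a$ the second moment is $\le 2\norm{\phi_a'}_\infty^2+2H_{d_{\rm cal}}^2\mathbb E s_{\rm ref}^2\le 2H_{d_{\rm cal}}^2C(d_{\rm cal}^4+12)$, using the stated derivative bound and the reference moment $12$. The standard median-of-means bound (Chebyshev on each block, then Hoeffding on the indicator count) gives, for $k_{\rm blk}\ge8\log(2/\delta')$, deviation at most $\sqrt{8\,\mathrm{Var}\,k_{\rm blk}/m_{\rm cal}}$ with probability $1-\delta'$. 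A union bound over the at most $(d_{\rm cal}+1)^2$ entries per marginal and the $J_{\rm marg}$ marginals matches the stated $k_{\rm blk}$ condition. On this event, $\norm{\widehat G-G}_{\rm op}\le\norm{\widehat G-G}_F\le d_{\rm cal}\sqrt{8Ck_{\rm blk}/m_{\rm cal}}\,H_{d_{\rm cal}}$ and $\norm{\widehat{\boldsymbol b}-\boldsymbol b}_2\le\sqrt{16Cd_{\rm cal}(d_{\rm cal}^4+12)k_{\rm blk}/m_{\rm cal}}\,H_{d_{\rm cal}}$. The sample-size condition $m_{\rm cal}\ge(32C/c^2)k_{\rm blk}d_{\rm cal}^{11}$, together with $H^2\le d_{\rm cal}^9$, makes $\norm{\widehat G-G}_{\rm op}\le c/2$. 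Hence $\lambda_{\min}(\widehat G)\ge c/2$, so the eigenvalue check passes.

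\textbf{Perturbation step.} We then use the identity
\[
\widetilde\beta-\beta^*=\widehat G^{-1}\bigl[(\widehat{\boldsymbol b}-\boldsymbol b)-(\widehat G-G)\beta^*\bigr],
\]
which gives $\norm{\widetilde\beta-\beta^*}_2\le(2/c)\bigl(\norm{\widehat{\boldsymbol b}-\boldsymbol b}_2+Q\norm{\widehat G-G}_{\rm op}\bigr)$. Multiplying by $\sqrt C$ recovers the displayed form $\frac{2C}{c}\sqrt{8k_{\rm blk}/m_{\rm cal}}\,H\{\sqrt{2d_{\rm cal}(d_{\rm cal}^4+12)}+d_{\rm cal}Q\}$. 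The cruder form then follows from $H\le d_{\rm cal}^{9/2}$ and $d_{\rm cal}^4+12\le13d_{\rm cal}^4$. Matching the exact constants (factors of $2$ and $8$) is the bookkeeping I expect to be most delicate, though not conceptually hard.

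\textbf{Moment bound and certificates.} Inequality~\eqref{eq:fitted-score-moment} holds on every history: $\norm{\widehat\beta}_2\le Q$ always, so $\norm{\Phi^\top\widehat\beta}_{L^2(\omega r)}\le\sqrt CQ$. Also $\norm{s_{\rm ref}}_{L^2(\omega r)}\le\sqrt{12C}$, and Minkowski's inequality gives the bound. For the certificates, independence of the coordinates of a fresh product array makes both $\operatorname{Cov}_p(\vecop(\widehat S-S^p))$ and $\operatorname{Cov}_p(\vecop\widehat S)$ diagonal. The diagonal entries are variances bounded by second moments, namely $e^2_{m_{\rm cal},d_{\rm cal}}$ and $M_{\rm fit}$ respectively. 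This yields~\eqref{eq:score-certificates} on the calibration event.
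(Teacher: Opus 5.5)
Your proposal is correct and follows the paper's proof essentially step for step. The shared steps are median-of-means with Chebyshev per block and a binomial tail, a union bound over the $J_{\rm marg}(d_{\rm cal}+1)^2$ moment entries, and the same integrand bounds $CH_{d_{\rm cal}}^2$ and $2CH_{d_{\rm cal}}^2(d_{\rm cal}^4+12)$. Beyond these, both arguments use the sample floor to force $\norm{\widehat G-G}_{\rm op}\le c/2$ and hence the inverse branch, the $(2/c)$ perturbation bound with nonexpansive projection, Minkowski for $M_{\rm fit}$, and diagonal covariances from coordinate independence of a fresh array.
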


\noindent\textbf{Proof.}
A block mean of a variable with variance at most $v$ has variance at most $2vk_{\rm blk}/m_{\rm cal}$. Chebyshev bounds the probability of error greater than $\sqrt{8vk_{\rm blk}/m_{\rm cal}}$ by $1/4$. The median exceeds that error only if at least half the independent blocks are bad, with probability at most $e^{-k_{\rm blk}/8}$. Union over at most $J_{\rm marg}(d_{\rm cal}+1)^2$ moment estimates gives simultaneous accuracy.

For $a,b<d_{\rm cal}$, the needed integrand bounds are
\[
 \mathbb E_{\omega r}(\phi_a\phi_b)^2\leq CH_{d_{\rm cal}}^2,\qquad
 \mathbb E_{\omega r}(\phi_a'-s_{\rm ref}\phi_a)^2
 \leq2CH_{d_{\rm cal}}^2(d_{\rm cal}^4+12).
\]
The singular reference score is handled by $\mathbb E_{\omega}s_{\rm ref}^2=12$, not a nonexistent supremum bound. Put $t_m=8k_{\rm blk}/m_{\rm cal}$. On the simultaneous event,
\[
 \begin{aligned}
 \norm{\widehat G-G}_{\rm op}
 &\leq\sqrt{t_m}\,d_{\rm cal}\sqrt C H_{d_{\rm cal}},\\
 \norm{\widehat{\boldsymbol b}-\boldsymbol b}_2
 &\leq\sqrt{2t_mCd_{\rm cal}H_{d_{\rm cal}}^2(d_{\rm cal}^4+12)}.
 \end{aligned}
\]
The sample floor makes the first bound at most $c/2$, so the inverse branch is used, with $\norm{\widetilde\beta-\beta^*}_2 \leq(2/c)(\norm{\widehat{\boldsymbol b}-\boldsymbol b}_2+ Q\norm{\widehat G-G}_{\rm op})$. Projection onto the radius-$Q$ ball cannot increase the error. Multiplying by $\sqrt C/A$ and adding the approximation error proves the first bound in Equation~\eqref{eq:calibration-error}. The inequalities $H_{d_{\rm cal}}\leq d_{\rm cal}^{9/2}$ and $d_{\rm cal}^4+12\leq13d_{\rm cal}^4$ give the second. On every history, $\norm{\widehat\beta}_2\leq Q$ proves Equation~\eqref{eq:fitted-score-moment} by reference orthonormality and Minkowski.

Conditional on calibration, each fitted score and error coordinate is a fixed function of its own fresh independent coordinate. Their covariance matrices are therefore diagonal even when fits share training data; the scalar squared-error and fitted second-moment bounds control the diagonals, proving the joint certificates without a factor $N$.

\subsection{Allocation and the Resulting Rates}
\label{app:calibration-rates}

For distinct marginal laws use $J_{\rm marg}=N$, $m_{\rm cal}=Kb$. Only when all coordinates have an explicitly common marginal may they be pooled, giving $J_{\rm marg}=1$, $m_{\rm cal}=NKb$. Write $m_{\rm cal}=c_{\rm samp} b$, with $c_{\rm samp}=K$ or $NK$ respectively. Product structure alone does not justify the latter sample count.

Set $\delta_{\rm cal}=\delta_{\rm rew}=(T+1)^{-2}$, $d_{\max}=\max(1,\lceil c_{\rm samp} T\rceil)$, and let $k_T$ be the smallest positive odd integer at least $8\log(2J_{\rm marg}(d_{\max}+1)^2/\delta_{\rm cal})$. Define
\[
 \begin{aligned}
 \alpha&=\frac{s}{2s+14},&
 \rho_0&=\max\{2,(32C/c^2)^{(2s+14)/(2s+3)}\},\\
 b_0&=\lceil k_T\rho_0/c_{\rm samp}\rceil,&
 E_s&=2^s\sqrt C R+D.
 \end{aligned}
\]
For $b_0\leq b\leq T$, choose $d_{\rm cal}=\lfloor(c_{\rm samp} b/k_T)^{1/(2s+14)}\rfloor$ and $k_{\rm blk}=k_T$. Then
\begin{equation}
 \varepsilon_b\leq\frac{E_s}{A}
                \left(\frac{k_T}{c_{\rm samp} b}\right)^\alpha,\qquad
 \widehat J=M_{\rm fit}.
 \label{eq:allocated-score-error}
\end{equation}
Indeed, $z=c_{\rm samp} b/k_T\geq\rho_0$ implies $z\geq(32C/c^2)z^{11/(2s+14)} \geq(32C/c^2)d_{\rm cal}^{11}$ and $z\geq2$. Thus the sample floor and block requirements hold. Using $\lfloor u\rfloor\geq u/2$ for $u\geq1$ in Equation~\eqref{eq:calibration-error} proves the bound, hence $\varepsilon_b=\widetilde O(b^{-s/(2s+14)})$ for fixed class quantities.

For $T\geq8$, take
\[
 n=\lfloor T^{2/3}/4\rfloor,\qquad
 b=\max\{b_0,\lfloor T^{1/(1+\alpha)}/4\rfloor\}.
\]
Use this allocation if $b+2n\leq T$; otherwise select uniformly throughout. For $T<8$ also select uniformly. For fixed class and model constants, $b_0=O(\log(T+1))$, so the allocation eventually fits. Substituting Equations~\eqref{eq:allocated-score-error} and \eqref{eq:fitted-score-moment} into Equation~\eqref{eq:unknown-regret} gives exploration costs, a statistical term of order $Tn^{-1/2}$, and a bias term of order $Tb^{-\alpha}$, with the explicit prefactors in Equation~\eqref{eq:unknown-regret}. The failure cost is at most $2\Delta T/(T+1)^2$. This proves Equation~\eqref{eq:unknown-smooth-rate}. The fallback has the linear cap without invoking a calibration certificate whose sample floor fails.

The exponent $14$ comes from squaring the conservative $d_{\rm cal}^7$ estimation bound; it is not asserted optimal. For example, $s=1$ gives $T^{16/17}$. The bias has no additional rank multiplier, while the statistical terms retain their matrix or transformed-slice dimensions.

\noindent\textbf{Exactly specified finite-dimensional families.}
If every $h_j$ belongs exactly to the known span of the first $d_0$ basis functions, with fixed known $d_0$, take $d_{\rm cal}=d_0$. The approximation term is then zero, and the same algorithm gives $\varepsilon_b=O(\sqrt{\log(NT)/(c_{\rm samp} b)})$, subject to the fixed-$d_0$ sample floor and displayed constants. Taking $b,n$ of order $T^{2/3}$ proves $\widetilde O(T^{2/3})$ regret. Choosing to fit $d_0$ coefficients does not itself establish exact inclusion.

For example, $p_{\beta_j,\lambda_j}(x)\propto p_A(x)\exp(\beta_j(x/A)^2/2+\lambda_jx/A)$ has $h_j(u)=-\beta_ju-\lambda_j$ in the span of $\phi_0=1$, $\phi_1=\sqrt{11}u$. Known bounds $|\beta_j|\leq H_0$, $|\lambda_j|\leq H_1$ give $C=c^{-1}=\exp(H_0/2+2H_1)$ and $R^2=H_1^2+2^{2s}H_0^2/11$. Both coefficients are fitted by the specified two-dimensional system. If only the linear tilt is unknown and $\beta_j$ is known, its constant score contribution cancels in uniform pairs, so calibration can be omitted. This cancellation does not identify the greedy projection distribution.

\subsection{Conditional Return to Greedy Epochs}
\label{app:unknown-greedy}

Retain the nondecreasing-link assumption for the greedy regret analysis. Consider deterministic planned epoch lengths and deterministic scored-pair counts $n_i\geq1$. Freeze the direction throughout epoch $i$, including calibration, and construct $\widehat S_i$ before collecting those fresh action--reward pairs from the selected law $p_i$. Conditional on the post-calibration history, these observations are iid and reuse no calibration observations. The completed fit determines the next epoch's direction. Assume Stein regularity and, for $\Theta_*\ne0$, $\mu_i=\mathbb E_{p_i}f'(\inner X{\Theta_*})\geq\mu_{\min}>0$. This selected-law condition is additional to the base-law condition in Sections~\ref{app:explore-commit}--\ref{app:calibration-rates}.

Suppose deterministic certificates $(\varepsilon_i,\widehat J_i)$ satisfy Equation~\eqref{eq:score-certificates} under $p_i$, except on a calibration event of marginal probability at most $\beta_i$, and use conditional reward-fit failure probability $\delta_i$. Recall $e_{\rm fit}$ from Section~\ref{app:explore-commit}, including its score-bias term, and put $\Delta=2L_fAB$. If $h_0$ is the initial planned length and $h_{i+1}$ the actual following epoch length, including calibration and a partial final epoch, then
\[
 \begin{aligned}
 \mathbb E R_T\leq{}&\Delta\min\{T,h_0\}
 +\frac{2L_f\sqrt{K\norm{\Sigma_X}_{\rm op}}}{\mu_{\min}}
       \sum_i h_{i+1}e_{\rm fit}(n_i,\delta_i,\widehat J_i,\varepsilon_i)\\
 &+\Delta\sum_i h_{i+1}(\beta_i+\delta_i).
 \end{aligned}
\]
Here $\Sigma_X$ is the covariance of the base menu law. The sums concern completed fits used in a following epoch. The bound may also be capped by $\Delta T$; $\Theta_*=0$ has zero regret. To prove it, condition on successful calibration and apply Theorem~\ref{thm:estimated-score-error}. The fit event is measurable before the following epoch; Lemma~\ref{lem:transfer} controls regret on success using the preceding target $\mu_i\Theta_*$, and $\Delta$ controls failure.

For doubling epochs, a fixed fraction of each completed epoch allocated to scored pairs, uniformly bounded $\widehat J_i$, and $\beta_i,\delta_i\leq(T+1)^{-2}$, the statistical term is $\widetilde O(\sqrt T)$ for fixed structural and model quantities. A separately established certificate $\varepsilon_i=O(n_i^{-\alpha})$, $0<\alpha<1$, adds $O(T^{1-\alpha})$; $\alpha=1/2$ preserves the square-root horizon rate up to logarithms. These certificates must cover the adaptively chosen direction through a uniform argument or independent calibration after freezing it. The base-score construction of Sections~\ref{app:explore-commit}--\ref{app:calibration-rates} does not itself certify the projection correction $F_i'/F_i$ or the fitted selected-score covariance.

\section{Structured Single-Index Estimation Proofs}
\label{app:method-proofs}

We prove the paired-moment identity and Theorem~\ref{thm:structured-estimation}, first for matrices and then for transformed slices. The result applies conditionally to each fixed-policy T-ESTOR batch with $n=n_i$, $w=MG_K(d)$, and $\mu_*=\mu_i$; the fit from completed epoch $i$ is used in epoch $i+1$.

\subsection{Paired Moments}
\begin{mlemma}[Paired mean and second moment]
\label{lem:paired}
Let $p$ be a fixed known observation density, supported in $\maxnorm X<A$, whose zero extension belongs to $W^{1,1}(\mathbb R^N)$. Write $S^p=-\nabla_X\log p$ and assume its score has a finite joint second moment. Take $2n$ independent action--reward observations with this density, the budget of Section~\ref{sec:notation-structures}, and the link condition of Assumption~\ref{ass:boundedness-link} and the conditional noise bounds of Section~\ref{sec:bandit-model}, where $n\geq1$. The $n$ paired statistics in Equation~\eqref{eq:paired-observation} are iid. For $\Theta_*\ne0$, put $\mu_*=\mathbb E_{p}f'(\inner X{\Theta_*})\in[-L_f,L_f]$. For $\Theta_*=0$, set $\mu_*=0$ without evaluating $f'(0)$. With $V=2L_f^2A^2B^2+\sigma^2$,
\begin{equation*}
 \begin{aligned}
 \mathbb E Z_j&=\mu_*\Theta_*,\\
 \mathbb E[\vecop(Z_j)\vecop(Z_j)^\top]
 &\preceq V\,\mathbb E_{p}
       [\vecop(S^p)\vecop(S^p)^\top].
 \end{aligned}
 \label{eq:paired-moments}
\end{equation*}
For every fixed linear map $\mathcal L$ from arrays to matrices,
\[
 \begin{aligned}
 \mathbb E[\mathcal L(Z_j)\mathcal L(Z_j)^\top]
 &\preceq V\mathbb E[\mathcal L(S^p)
                            \mathcal L(S^p)^\top],\\
 \mathbb E[\mathcal L(Z_j)^\top\mathcal L(Z_j)]
 &\preceq V\mathbb E[\mathcal L(S^p)^\top
                            \mathcal L(S^p)].
 \end{aligned}
\]
These conclusions also hold conditionally on a history fixing the law and score, including each frozen T-ESTOR epoch with $p=p_i$.
\end{mlemma}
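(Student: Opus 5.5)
Write $F_\ell=f(\inner{X_\ell}{\Theta_*})$, $S_\ell=S^p(X_\ell)$ and $D=\tfrac12(y_{2j-1}-y_{2j})=\tfrac12(F_{2j-1}-F_{2j})+\tfrac12(\eta_{2j-1}-\eta_{2j})$, so that $Z_j=D\,(S_{2j-1}-S_{2j})$. The pairs are built from disjoint blocks of iid observations, so the $Z_j$ are iid. For the mean, expand
\[
 Z_j=\tfrac12\bigl(y_{2j-1}S_{2j-1}+y_{2j}S_{2j}\bigr)-\tfrac12\bigl(y_{2j-1}S_{2j}+y_{2j}S_{2j-1}\bigr).
\]
The first bracket has mean $\mu_*\Theta_*$ by the base Stein identity of Appendix~\ref{sec:candidate-law}; that identity already covers $\Theta_*=0$ through score and noise centering. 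The cross terms vanish: the two observations are independent, $y$ is integrable because the index is bounded and the noise has finite variance, and $\mathbb E_pS^p=0$.

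For the second moment, condition on $(X_{2j-1},X_{2j})$. The noises are conditionally independent with mean zero and variance at most $\sigma^2$, so
\[
 \mathbb E[D^2\mid X_{2j-1},X_{2j}]\leq\tfrac14(F_{2j-1}-F_{2j})^2+\tfrac{\sigma^2}{2}.
\]
The Lipschitz condition gives $|F_{2j-1}-F_{2j}|\leq L_f|\inner{X_{2j-1}-X_{2j}}{\Theta_*}|\leq 2L_fAB$, since each index lies in $[-AB,AB]$. Hence this conditional second moment is at most $L_f^2A^2B^2+\sigma^2/2=V/2$. This step is where the unrestricted offset $f(0)$ drops out, since only differences of $f$ appear.

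Now use that $S_{2j-1}-S_{2j}$ is a function of the $X$'s. This gives
\[
 \mathbb E[\vecop Z_j\vecop Z_j^\top]\preceq\tfrac V2\,\mathbb E\bigl[\vecop(S_{2j-1}-S_{2j})\vecop(S_{2j-1}-S_{2j})^\top\bigr].
\]
By independence and $\mathbb E S=0$, the right-hand expectation equals $2\,\mathbb E_p[\vecop S^p\vecop (S^p)^\top]$, which yields the stated bound with factor $V$.

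For a fixed linear map $\mathcal L$, the same conditioning applies with $\mathcal L(Z_j)=D\,\mathcal L(S_{2j-1}-S_{2j})$. We have $\mathbb E[\mathcal L(Z_j)\mathcal L(Z_j)^\top]=\mathbb E\bigl[\mathbb E[D^2\mid X]\,\mathcal L(\Delta S)\mathcal L(\Delta S)^\top\bigr]$ with $\Delta S=S_{2j-1}-S_{2j}$. The matrix $\mathcal L(\Delta S)\mathcal L(\Delta S)^\top$ is PSD and the scalar weight is bounded by $V/2$, so the expectation is $\preceq\tfrac V2\,\mathbb E[\mathcal L(\Delta S)\mathcal L(\Delta S)^\top]$. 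The cross terms vanish by linearity of $\mathcal L$ and centering of the score, leaving $2\,\mathbb E[\mathcal L(S^p)\mathcal L(S^p)^\top]$. The right moment is handled identically.

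The conditional version follows by applying everything under the conditional law given the epoch-start history. There the selected law $p_i$ and its score are fixed, the observations are iid, and $p_i$ has the required $W^{1,1}$ regularity and finite score moments (established in Appendix~\ref{app:stein-pair}). The main subtlety is integrability for the Stein step under weak regularity, which is supplied by Appendix~\ref{sec:candidate-law}.
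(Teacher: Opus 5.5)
Your proof is correct and follows essentially the same route as the paper. Both use the Stein identity plus independence and score centering for the mean, the conditional bound $\mathbb E[(y_1-y_2)^2\mid X_1,X_2]\leq 2V$ (your $\mathbb E[D^2\mid X]\leq V/2$) together with $\mathbb E[\Delta S\,\Delta S^\top]=2\,\mathbb E[SS^\top]$, and the same conditioning for linear maps $\mathcal L$; the only cosmetic difference is that the paper checks the joint bound through quadratic forms $\inner ZD^2$ in unit directions, whereas you weight the PSD matrix directly by the conditional scalar bound.
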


For the appendix proofs, let $J_{\rm obs}>0$ bound the joint score moment by $J_{\rm obs}I_N$. The second-moment bound $w>0$ from Section~\ref{sec:paired-moments} bounds both left and right score moments by $w$ times their identity matrices. The latter applies to the matrix score or, for the supplied transform, each transformed slice of the full score. These bounds need not be equal or minimal. Appendix~\ref{app:score-moments} verifies them for $p=p_i$ with $w=MG_K(d)$ and $J_{\rm obs}=\overline J=MG_K(1)$, including the uniform fallback.

\noindent\textbf{Proof of Lemma~\ref{lem:paired}.}
The weak integration-by-parts argument of Appendix~\ref{sec:candidate-law} applies verbatim to $p$: $\mathbb E S^p=0$ and $\mathbb E[y S^p(X)]=\mu_*\Theta_*$. Its bounded support gives the cutoff, and its finite score second moment ensures integrability. For independent observations $(X_1,y_1),(X_2,y_2)$, put $S_j=S^p(X_j)$. Expanding $(y_1-y_2)(S_1-S_2)/2$ gives $\mathbb E Z=\mathbb E[yS]-\mathbb E y\,\mathbb E S=\mu_*\Theta_*$. Conditional independence of the noise draws gives
\[
 \mathbb E[(y_1-y_2)^2\mid X_1,X_2]
 =(f(\inner{X_1}{\Theta_*})-f(\inner{X_2}{\Theta_*}))^2
   +\mathbb E[\eta_1^2\mid X_1]
   +\mathbb E[\eta_2^2\mid X_2]\leq2V.
\]
For any unit array $D$ this implies $\mathbb E\inner ZD^2\leq(V/2)\mathbb E\inner{S_1-S_2}D^2 =V\mathbb E\inner SD^2$, proving the joint PSD bound. For a matrix-valued linear map, apply the same conditional inequality to $(\mathcal L(S_1)-\mathcal L(S_2)) (\mathcal L(S_1)-\mathcal L(S_2))^\top$; score centering removes its cross terms. The right second moment follows by transposition. These bounds control raw second moments of $Z$, without subtracting its population target. Pairing cancels the unrestricted reward level $f(0)$. The nonzero-parameter index has a density under any such $p$; the zero-parameter case follows directly by centering. The moment statement requires neither monotonicity nor a nonzero Stein signal: $\mu_*$ may be negative or zero.

\subsection{Matrix Estimation Proofs}
\label{app:matrix-proof}

For the matrix case of Theorem~\ref{thm:structured-estimation}, set $q=1$, $U=[1]$, and $r_U=r$ in Equation~\eqref{eq:slice-fit}. Its tuning specializes to
\begin{equation}
 \nu=\sqrt{\frac{2\log(2(d_1+d_2)/\delta)}{nVw}},\qquad
 \lambda=2\sqrt{\frac{2Vw\log(2(d_1+d_2)/\delta)}{n}}.
 \label{eq:matrix-tuning}
\end{equation}
We prove, with probability at least $1-\delta$,
\begin{equation}
 \norm{\widehat Z_\nu-\mu_*\Theta_*}_{\mathrm{op}}\leq\lambda/2,
 \qquad
 \frobnorm{\widehat\Theta-\mu_*\Theta_*}^2
 \leq\frac{16rVw\log(2(d_1+d_2)/\delta)}{n}.
 \label{eq:matrix-error}
\end{equation}
The bound may use the true rank, but does not justify substituting a fitted rank into a confidence bound or assert exact rank recovery.

\noindent\textbf{Dilation and concentration.}
The rectangular influence from Section~\ref{sec:matrix-estimation} is equivalently expressed by the symmetric dilation
\begin{equation}
 \mathcal H(H)=\begin{pmatrix}0&H\\H^\top&0\end{pmatrix},\qquad
 \widetilde\psi_\nu(H)=\nu^{-1}[\psi(\nu\mathcal H(H))]_{12}.
 \label{eq:spectral-influence}
\end{equation}
The subscript $12$ selects the upper-right block, and the symmetric matrix function applies $\psi$ to eigenvalues. Thus the robust average is $\widehat Z_\nu=n^{-1}\sum_{j=1}^n\widetilde\psi_\nu(Z_j)$; the factor $\nu^{-1}$ is already part of the scaled influence map. The scalar influence function defined in Section~\ref{sec:paired-moments} is odd and satisfies
\begin{equation}
 -\log(1-x+x^2/2)\leq\psi(x)\leq\log(1+x+x^2/2).
 \label{eq:influence-sandwich}
\end{equation}
For $x\geq0$ the upper inequality is equality, and the lower follows from $(1+x+x^2/2)(1-x+x^2/2)=1+x^4/4\geq1$; oddness gives $x<0$. In an SVD basis the dilation has eigenvalues $\pm s_\ell$, which proves its off-diagonal influence formula.

For $n$ independent self-adjoint $d_0\times d_0$ matrices $A_j$ with common mean and $\norm{\mathbb E A_j^2}_{\rm op}\leq v_0$, the matrix influence inequality of \citet{minsker2018sub} gives
\begin{equation}
 \mathbb P\left\{\norm{\frac1{n\nu}\sum_{j=1}^{n}\psi(\nu A_j)
                         -\mathbb E A_1}_{\rm op}>\varepsilon\right\}
 \leq2d_0\exp(-n\nu\varepsilon+n\nu^2v_0/2).
 \label{eq:matrix-influence-tail}
\end{equation}
Indeed, functional calculus applies Equation~\eqref{eq:influence-sandwich} to each matrix. The trace exponential bound for independent self-adjoint sums controls the upper tail by $d_0\exp(-n\nu\varepsilon+n\nu^2v_0/2)$; applying it to the negative sum gives the other tail.

Take $A_j=\mathcal H(Z_j)$, where $\mathcal H$ denotes the symmetric dilation in Equation~\eqref{eq:spectral-influence}. Lemma~\ref{lem:paired} gives $v_0=Vw$ and $d_0=d_1+d_2$. With $\varepsilon=\sqrt{2v_0\log(2(d_1+d_2)/\delta)/n}$ and the stated $\nu$, Equation~\eqref{eq:matrix-influence-tail} gives the operator bound in Equation~\eqref{eq:matrix-error}.

For completeness, let $\widehat Z_\nu$ have operator error at most $\varepsilon$ from a target $\mu_*\Theta_*$ of rank at most $r$, and soft threshold it at $\varepsilon$. The $(r+1)$st singular value of $\widehat Z_\nu$ is at most $\varepsilon$, so $\operatorname{rank}(\widehat\Theta)\leq r$ on this event. Soft thresholding changes a matrix by operator norm at most $\varepsilon$, hence $\norm{\widehat\Theta-\mu_*\Theta_*}_{\rm op}\leq2\varepsilon$. The difference has rank at most $2r$, giving $\frobnorm{\widehat\Theta-\mu_*\Theta_*}^2\leq8r\varepsilon^2$. Substitution proves the constant $16$ in Equation~\eqref{eq:matrix-error}. This proof requires neither a lower bound on the nonzero singular values nor knowledge of $r$ in the tuning.

\noindent\textbf{Arithmetic cost.}
Conditional on score access, the robust moment uses one thin SVD per paired $d_1\times d_2$ matrix, followed by one final SVD for soft thresholding. The dense arithmetic cost is $O((n+1)d_1d_2\min(d_1,d_2))$, with $O(d_1d_2)$ accumulation storage apart from the SVD workspace. This count does not include computing the score or claim exact finite-bit evaluation of the logarithmic influence.

\subsection{Transformed-Tensor Estimation Proofs}
\label{app:slice-proof}

For clarity, define the per-pair transformed influence map
\[
 \Psi_\nu^U(Z)=\mathcal T_U^{-1}
 \left(\left[\widetilde\psi_\nu
       \bigl((\mathcal T_U Z)^{(k)}\bigr)\right]_{k=1}^q\right).
\]
The moment in Section~\ref{sec:slice-estimation} is equivalently $\widehat Z_\nu=n^{-1}\sum_{j=1}^{n}\Psi_\nu^U(Z_j)$; linearity of $\mathcal T_U$ gives $(\mathcal T_U\widehat Z_\nu)^{(k)}=\widehat Z_{\nu,k}$ for every $k$.

For the transformed-slice case of Theorem~\ref{thm:structured-estimation}, Equation~\eqref{eq:slice-fit} uses
\begin{equation}
 \nu=\sqrt{\frac{2\log(2q(d_1+d_2)/\delta)}{nVw}},\qquad
 \lambda=2\sqrt{\frac{2Vw\log(2q(d_1+d_2)/\delta)}{n}}.
 \label{eq:slice-tuning}
\end{equation}
We prove the simultaneous bounds, with probability at least $1-\delta$,
\begin{equation}
 \begin{aligned}
 \max_{k\in[q]}\norm{(\mathcal T_U(\widehat Z_\nu-\mu_*\Theta_*))^{(k)}}_{\mathrm{op}}
 &\leq\lambda/2,\\
 \frobnorm{\widehat\Theta-\mu_*\Theta_*}^2
 &\leq\frac{16Vr_Uw\log(2q(d_1+d_2)/\delta)}{n}.
 \end{aligned}
 \label{eq:tensor-error}
\end{equation}
The dual of the sum nuclear norm is the maximum slice operator norm, because $\mathcal T_U$ preserves inner products. Its Frobenius isometry also gives the square-completion identity
\[
 \frobnorm Z^2-2\inner Z{\widehat Z_\nu}
 =\sum_{k=1}^q
   \frobnorm{(\mathcal T_U Z)^{(k)}-\widehat Z_{\nu,k}}^2
   -\frobnorm{\widehat Z_\nu}^2.
\]
The last term is independent of $Z$, so adding the sum nuclear penalty separates the objective into slice problems with quadratic coefficient one. Each problem has solution $\operatorname{SVT}_{\lambda/2}(\widehat Z_{\nu,k})$; the inverse transform then gives the estimator in Section~\ref{sec:slice-estimation}. Define the block-diagonal dilation
\[
 \mathcal H_U(Z)=
 \operatorname{diag}_{k=1}^q\mathcal H((\mathcal T_U Z)^{(k)}).
\]
Its dimension is $q(d_1+d_2)$ and the operator norm of its raw second moment is at most $Vw$. Applying Equation~\eqref{eq:matrix-influence-tail} to this one block matrix gives the simultaneous dual bound in Equation~\eqref{eq:tensor-error}; independence between its blocks is not required. Apply the soft-thresholding argument of Appendix~\ref{app:matrix-proof} in slice $k$ with rank at most $\rho_k$, then sum squared errors. Frobenius isometry proves Equation~\eqref{eq:tensor-error}.

One can accumulate influenced slices in transformed coordinates and invert only after fitting. Dense multiplication by $U$ and $q$ thin SVDs cost $O(d_1d_2q^2+qd_1d_2\min(d_1,d_2))$ per pair; the final fit uses $q$ more SVDs and one inverse transform. This is an arithmetic-operation statement conditional on scores, not an exact finite-bit claim for transcendental function evaluation. For $q=1$, $U=[1]$ and $r_U=r$, both the estimator and tuning reduce to the matrix case.

\noindent\textbf{Epoch buffering and streaming.}
Algorithm~\ref{alg:T-ESTOR} literally retains an epoch buffer, using $O(n_iN)$ observation storage. Its fixed epoch length and tuning also permit an equivalent streaming implementation: retain one pending observation, form each disjoint pair when the next arrives, and add its influenced statistic to the running matrix or slice sums. At the epoch end, divide by the prescribed pair count $n_i$ and apply the same threshold. Observation and accumulator storage is then $O(N)$, apart from score-oracle and linear-algebra workspace. This equivalence concerns the theoretical policy with tuning fixed before sampling; it does not automatically apply to the adaptive experimental procedure, which recomputes tuning from stored data at checkpoints.

\section{T-ESTOR with Structured Updates}
\label{app:stein-pair}

We verify the conditional sampling model, selected-score identities, and moment bounds used in Section~\ref{sec:epoch-framework}.

\subsection{Conditional Probability Model}

Include the learner's private random seed in the history. Conditional on each epoch-start history, the direction and score rule are fixed. Independent menus and fresh noise from the fixed conditional kernel used in Section~\ref{sec:epoch-framework} then give iid selected action--reward observations. Concentration holds after every such history. Within-epoch reward-dependent policy changes would invalidate this reduction and are not part of Algorithm~\ref{alg:T-ESTOR}.

\subsection{Projection Laws and Selected Scores}
Fix any epoch-start history and any unit-Frobenius direction $v_i$; the following argument is uniform over all such directions. Choose an orthonormal complement $Q$ to $\vecop(v_i)$ and write $x=u\vecop(v_i)+Qz$. If $F_i$ is the CDF of $\inner{v_i}X$ under $p$, orthogonal change of variables gives its density and weak derivative:
\[
 \begin{aligned}
 F_i'(u)&=\int p(u\vecop(v_i)+Qz)\,dz,\\
 F_i''(u)&=\int\vecop(v_i)^\top\nabla p(u\vecop(v_i)+Qz)\,dz.
 \end{aligned}
\]
Fubini and the zero-extension regularity $p\in W^{1,1}$ in Assumption~\ref{ass:candidate-distribution} imply $F_i'\in W^{1,1}(\mathbb R)$; use its continuous, absolutely continuous representative. The weak score identity implies, for almost every $u$ with $F_i'(u)>0$,
\[
 -\frac{F_i''(u)}{F_i'(u)}
 =\mathbb E_p[\inner{v_i}{S^p(X)}\mid\inner{v_i}X=u].
 \label{eq:projection-score-identity}
\]
Jensen's inequality and the full joint score certificate give $\int_{\{F_i'>0\}}(F_i'')^2/F_i'\leq M$, and Cauchy--Schwarz gives $\int|F_i''|\leq\sqrt M$. The density is supported on $[-A\entone{v_i},A\entone{v_i}]$, so it vanishes outside this interval. An absolutely continuous nonnegative density must rise from and return to zero; hence
\begin{equation}
 \norm{F_i'}_\infty\leq\frac12\int|F_i''|\leq\frac{\sqrt M}{2}.
 \label{eq:projection-peak}
\end{equation}
No strict positivity throughout that interval is required.

For a nonzero vectorized direction $v\in\mathbb R^N$, let $F_v$ be the CDF of $\langle v,\vecop(X)\rangle$. Rescaling Equation~\eqref{eq:projection-peak} gives
\[
 \norm{F_v'}_\infty\leq\frac{\sqrt M}{2\norm v_2},\qquad
 \mathbb E_p\!\left[F_v'(\langle v,\vecop(X)\rangle)^2\right]
 \leq\frac{M}{4\norm v_2^2}\leq\frac M4
 \quad\text{if }\norm v_\infty=1.
\]
Thus our joint-score assumption implies the projection-density moment condition in Assumption~3.3 of \citet{kang2026single}. For dependent laws, the joint certificate is stronger than coordinatewise score-moment control.

The selected density in Equation~\eqref{eq:selected-score} is $p_i(X)=Kp(X)F_i(\inner{v_i}X)^{K-1}$. Since $F_i'$ is bounded, the CDF factor is a bounded Lipschitz multiplier, so the zero extension of $p_i$ remains in $W^{1,1}$. Its score is
\[
 S^{p_i}(X)=S^p(X)-(K-1)\alpha_i(\inner{v_i}X)v_i,\qquad
 \alpha_i(u)=\frac{F_i'(u)}{F_i(u)}\quad\text{on }\{F_i>0\}.
\]
Set $\alpha_i=0$ on $\{F_i=0\}$, a set of zero probability under the selected projection law. Thus the formula is defined $p_i$-almost surely, including when the support has gaps. Applying the weak integration-by-parts argument of Appendix~\ref{sec:candidate-law} gives
\[
 \mathbb E_{p_i}S^{p_i}=0,\qquad
 \mathbb E_{p_i}[yS^{p_i}(X)]=\mu_i\Theta_*,\qquad
 \mu_i=\mathbb E_{p_i}[f'(\inner X{\Theta_*})]
\]
for $\Theta_*\ne0$. The selected law is absolutely continuous, and its score is square integrable by the bounds below, so the same index and integrability arguments apply. For the uniform fallback $v_i=0$, set $p_i=p$ and $S^{p_i}=S^p$; no projection density is needed. For a zero parameter the moment is zero directly without evaluating $f'(0)$.

\subsection{Uniform Selected-Score Moments}
\label{app:score-moments}

For a unit direction, the selected projection has density $KF_i'(u)F_i(u)^{K-1}$. For $K\geq3$, Equation~\eqref{eq:projection-peak} and the probability integral transform give
\[
 \begin{aligned}
 \mathbb E_{p_i}\alpha_i(\inner{v_i}X)^2
 &=K\int_{\{F_i>0\}}F_i'(u)^3F_i(u)^{K-3}\,du\\
 &\leq\frac{KM}{4}\int F_i'(u)F_i(u)^{K-3}\,du
 =\frac{KM}{4(K-2)}.
 \end{aligned}
\]
This substitution is valid for any continuous CDF, including one with flat segments. At $K=3$, the power $F_i^{K-3}$ is simply one; this is why we require $K\geq3$. Since $p_i\leq Kp$, the base-score contribution in any unit array direction is at most $KM$. Applying $(a-b)^2\leq2a^2+2b^2$ to the score formula gives
\[
 \mathbb E_{p_i}[\vecop(S^{p_i})\vecop(S^{p_i})^\top]
 \preceq\overline J I_N,\qquad
 \overline J=MG_K(1),\qquad
 G_K(u)=2Ku+\frac{K(K-1)^2}{2(K-2)}.
\]
For a $d_1\times d_2$ base score and a unit $a\in\mathbb R^{d_1}$,
\[
 a^\top\mathbb E_p[S^p(S^p)^\top]a
 =\sum_{b=1}^{d_2}\mathbb E_p\inner{S^p}{ae_b^\top}^2\leq d_2M.
\]
Thus the left and right base moments are bounded by $d_2MI_{d_1}$ and $d_1MI_{d_2}$, respectively. Since $v_iv_i^\top\preceq I_{d_1}$ and $v_i^\top v_i\preceq I_{d_2}$, the PSD inequality $(C-D)(C-D)^\top\preceq2CC^\top+2DD^\top$ gives selected left and right moment bounds $wI_{d_1}$ and $wI_{d_2}$, where $d=\max(d_1,d_2)$ and $w=MG_K(d)$. These are certificates, not isotropic moment equalities.

For a transformed slice, put $B_k=(\mathcal T_US^p)^{(k)}$. The joint base moment bound $MI_N$ is preserved by the full orthogonal transform. Extracting a slice and applying the row and column calculation gives $\mathbb E B_kB_k^\top\preceq d_2MI_{d_1}$ and $\mathbb E B_k^\top B_k\preceq d_1MI_{d_2}$. Neither transformed entries nor slices need be independent. Moreover, $\frobnorm{(\mathcal T_Uv_i)^{(k)}}\leq1$. The same PSD argument slice by slice proves the same $w$ bound, without a factor of $q$. Here $B_k$ is a slice of the transformed full joint score, not a marginal slice score. Orthogonality also preserves the selected joint vector moment bound. For the uniform fallback, the smaller base moments are covered by all these certificates.

\noindent\textbf{Why the selected-signal condition is separate.}
Strict monotonicity alone gives no uniform signal floor over the parameter class: for $f(u)=u^3$ the base multiplier is $3\mathbb E_p\inner X{\Theta_*}^2$, equal to $(3A^2/11)\frobnorm{\Theta_*}^2$ under the beta design, and tends to zero along nonzero parameters.

Menu size can also affect the signal. For $\Theta_*\ne0$ and $f(u)=L_f\max\{u,0\}$, selection in direction $v_i=-\Theta_*/\frobnorm{\Theta_*}$ chooses the minimum true index among the $K$ candidates. Its derivative is nonzero only when all $K$ indices are positive, so
\[
 \mu_i=L_f\bigl[\Pr_p\{\inner X{\Theta_*}>0\}\bigr]^K.
\]
The index has a density, so the derivative's value at zero is immaterial under our a.e. convention. If the admissible candidate law is centrally symmetric about zero, this signal is $L_f2^{-K}$. A uniform floor in Condition~\ref{cond:selected-signal} can therefore be at most $L_f2^{-K}$ in this example, making its possible dependence on $K$ explicit.

\section{Regret Proofs}
\label{app:regret-proof}

Under the assumptions and tuning of Theorem~\ref{thm:regret}, the matrix bound with explicit constants is
\begin{equation}
 \begin{aligned}
 \mathbb E R_T\leq100\min\biggl\{
 &L_fB\sqrt{K\norm{\Sigma_X}_{\rm op}}T,\\
 &\frac{L_f}{\mu_{\min}}\sqrt{VK\norm{\Sigma_X}_{\rm op}MrT G_K(d)
                  \log(2(d_1+d_2)T)}\biggr\}.
 \end{aligned}
 \label{eq:structured-regret-explicit}
\end{equation}
For transformed slices, replace $r\log(2(d_1+d_2)T)$ by $r_U\log(2q(d_1+d_2)T)$. The proof below also gives the smaller constant $72$. The relation $G_K(d)\asymp K(d+K)$ for $K\geq3$ gives the compact main-text bound with a universal constant.

\begin{mlemma}[Fresh-menu transfer]
\label{lem:transfer}
Under the distribution and link assumptions of Section~\ref{sec:bandit-assumptions} and the noise model of Section~\ref{sec:bandit-model}, with $f$ nondecreasing as in Section~\ref{sec:epoch-framework}, $\Theta_*\in\mathcal B(B)$, and Condition~\ref{cond:selected-signal}, let $\widehat\Theta$ and $\mu\geq\mu_{\min}$ be measurable before the current menu. Select greedily by $\widehat\Theta$, with uniform selection if it is zero. Then
\[
 \mathbb E[r_t\mid\mathcal F_{t-1}]
 \leq\frac{2L_f}{\mu_{\min}}\sqrt{K\norm{\Sigma_X}_{\rm op}}\,
                  \frobnorm{\widehat\Theta-\mu\Theta_*}.
\]
For every policy, including on histories where estimation failed, $\mathbb E[r_t\mid\mathcal F_{t-1}]\leq \overline\Delta:=2L_f B\sqrt{K\norm{\Sigma_X}_{\rm op}}$. This is a conditional mean bound; the deterministic reward-gap bound is $\Delta=2L_f A B$.
\end{mlemma}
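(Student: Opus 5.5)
Fix the history $\mathcal F_{t-1}$, so that $\widehat\Theta$ and $\mu$ are deterministic and the fresh menu $X_{1},\ldots,X_{K}$ is iid from $p$. When $\Theta_*=0$ the regret vanishes, so we assume $\Theta_*\ne0$. Put $m=\mathbb E_pX$ and centered candidates $\widetilde X_a=X_a-m$. Centering changes neither the ranking by $\inner{\widehat\Theta}{\cdot}$ nor differences of true indices, so we may work with $\widetilde X_a$. Let $a^\star$ be the index attaining the largest true index and $\hat a$ the index selected greedily. Because $f$ is nondecreasing and $L_f$-Lipschitz on $[-AB,AB]$,
\[
 r_t\leq L_f\bigl(\inner{\widetilde X_{a^\star}}{\Theta_*}-\inner{\widetilde X_{\hat a}}{\Theta_*}\bigr).
\]
Now set $E=\widehat\Theta-\mu\Theta_*$. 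If $\widehat\Theta\ne0$, greedy selection gives $\inner{\widetilde X_{\hat a}}{\widehat\Theta}\geq\inner{\widetilde X_{a^\star}}{\widehat\Theta}$. Since $\mu\geq\mu_{\min}>0$, multiplying the true-index gap by $\mu$ yields
\[
 \mu\bigl(\inner{\widetilde X_{a^\star}}{\Theta_*}-\inner{\widetilde X_{\hat a}}{\Theta_*}\bigr)
 \leq\inner{\widetilde X_{\hat a}}{E}-\inner{\widetilde X_{a^\star}}{E}
 \leq2\max_{a\in[K]}|\inner{\widetilde X_a}{E}|.
\]
Dividing by $\mu\geq\mu_{\min}$ gives $r_t\leq(2L_f/\mu_{\min})\max_a|\inner{\widetilde X_a}{E}|$. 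For the expectation, bound the maximum by the $\ell_2$ norm over candidates and apply Jensen:
\[
 \mathbb E\max_a|\inner{\widetilde X_a}E|\leq\Bigl(\sum_a\mathbb E\inner{\widetilde X_a}E^2\Bigr)^{1/2}=\sqrt{K\,\vecop(E)^\top\Sigma_X\vecop(E)}\leq\sqrt{K\norm{\Sigma_X}_{\rm op}}\frobnorm E .
\]
This proves the first claim. When $\widehat\Theta=0$, we have $E=-\mu\Theta_*$ and the choice is uniform. Both $a^\star$ and $\hat a$ are then among the $K$ candidates, so the same bound $r_t\leq 2L_f\max_a|\inner{\widetilde X_a}{\Theta_*}|=(2L_f/\mu)\max_a|\inner{\widetilde X_a}{E}|$ holds, and we conclude as before.

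For the policy-free bound, apply the same $\ell_2$ and Jensen step directly to $r_t\leq 2L_f\max_a|\inner{\widetilde X_a}{\Theta_*}|$. This holds for any selection rule, including one that uses private randomness independent of the menu. The step gives $2L_f\sqrt{K\norm{\Sigma_X}_{\rm op}}\frobnorm{\Theta_*}$, and $\frobnorm{\Theta_*}\leq\entone{\Theta_*}\leq B$ finishes it. The deterministic gap $\Delta=2L_fAB$ follows from $|\inner X{\Theta_*}|\leq\maxnorm X\entone{\Theta_*}\leq AB$.

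The main delicacy is the case $\widehat\Theta=0$, together with making sure the greedy comparison is never used when the selection is uniform. The Jensen step itself is routine. The rest only requires that $\mu$ be $\mathcal F_{t-1}$-measurable and positive, which makes the division by $\mu\geq\mu_{\min}$ legitimate. Ties in the argmax are harmless, since any maximizer satisfies the inequality used above.
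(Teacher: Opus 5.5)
Your proposal is correct and follows the paper's proof essentially step for step. You center the menu, use greedy optimality to get $\mu\inner{X_{a^\star}-X_{\hat a}}{\Theta_*}\leq 2\max_a|\inner{\widetilde X_a}{E}|$, bound the expected maximum by Jensen over the $K$ fresh centered candidates, and rerun the same computation with $\Theta_*$ and $\frobnorm{\Theta_*}\leq B$ for the policy-free cap. The only cosmetic difference is the zero-fit case: the paper notes that every candidate is then a fitted maximizer, so the greedy inequality holds trivially and your separate argument, while valid, is not needed.
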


If $\Theta_*=0$, all candidates have the same mean reward and regret is zero. Below assume $\Theta_*\ne0$.

\noindent\textbf{Fresh-menu transfer.}
Let $\overline X=\mathbb E_{p}X$ and $X^c_{t,a}=X_{t,a}-\overline X$. Subtracting the same array from every candidate preserves both fitted rankings and true index differences. Let $a_*$ maximize the true index and put $X^c_t=X_t-\overline X$ for the chosen action. For $E=\widehat\Theta-\mu\Theta_*$, greedy optimality gives
\[
 \mu\inner{X_{t,a_*}-X_t}{\Theta_*}
 \leq\inner{X^c_t-X^c_{t,a_*}}E
 \leq2\max_{a\leq K}|\inner{X^c_{t,a}}E|.
\]
If $\widehat\Theta=0$, every index is a fitted maximizer, so the same statement holds for uniform selection. Nondecreasingness and Lipschitz continuity convert this into a reward gap. Conditional on the past, $E$ is fixed and
\[
 \mathbb E\max_a|\inner{X^c_{t,a}}E|
 \leq\sqrt{\sum_{a=1}^K\mathbb E\inner{X^c_{t,a}}E^2}
 \leq\sqrt{K\norm{\Sigma_X}_{\rm op}}\frobnorm E.
\]
Since $\mu\geq\mu_{\min}$, this proves Lemma~\ref{lem:transfer}. For an arbitrary policy, the reward gap is at most $2L_f\max_a|\inner{X^c_{t,a}}{\Theta_*}|$; the same calculation with $\frobnorm{\Theta_*}\leq B$ gives the conditional mean cap $\overline\Delta=2L_f B\sqrt{K\norm{\Sigma_X}_{\rm op}}$. Entrywise budgets also give the deterministic cap $\Delta=2L_f A B$. Centering serves only this proof. The algorithm uses the original actions and need not know or subtract $\overline X$; the model and budgets remain in those coordinates.

\noindent\textbf{Consequences of the Stein assumptions.}
Write $\bar x=\mathbb E_{p}x$. Applying weak integration by parts to $x_i-\bar x_i$, with a cutoff equal to one on the support, gives
\[
 \mathbb E_{p}[(x_i-\bar x_i)(\vecop(S^p(X)))_i]=1.
\]
Cauchy--Schwarz, $\operatorname{Var}(x_i)\leq\mathbb E x_i^2\leq A^2$, and $\mathbb E (\vecop(S^p(X)))_i^2\leq M$ imply $M A^2\geq1$. For every unit $u\in\mathbb R^N$, the same calculation gives $\mathbb E[(u^\top(x-\bar x))(u^\top \vecop(S^p(X)))]=1$ and hence $M\norm{\Sigma_X}_{\rm op}\geq1$. These inequalities follow from the general regularity and moment certificates; they impose no new assumptions.

\noindent\textbf{Epoch summation and numerical constants.}
For epoch confidence $\delta_E=(T+1)^{-2}$, put $\ell_{\mathrm M,E}=\log(2(d_1+d_2)(T+1)^2)$ and $\ell_{U,E}=\log(2q(d_1+d_2)(T+1)^2)$. Valid error coefficients are
\[
 \begin{aligned}
 a_{\mathrm M,E}&=4\sqrt{VrMG_K(d)\ell_{\mathrm M,E}},&
 a_{U,E}&=4\sqrt{Vr_UMG_K(d)\ell_{U,E}}.
 \end{aligned}
\]
Each conditional estimation failure probability is at most $\delta_E$ after every start history, and hence also unconditionally. In epoch $i\geq1$, the preceding estimate $\widehat\Theta_{i-1}$ and its target $\mu_{i-1}\Theta_*$ are measurable before the current menu. On success, the per-round conditional mean regret is at most $(2L_f/\mu_{\min})\sqrt{K\norm{\Sigma_X}_{\rm op}}a/\sqrt{n_{i-1}}$; on failure use $\overline\Delta$. This avoids conditioning on the accuracy of a fit that uses the current or a future menu. If $\mathcal B_i$ is the failure event for the fit from epoch $i$, it is measurable before epoch $i+1$. Its contribution over the actual following length $h_{i+1}$ is at most $\overline\Delta h_{i+1}\mathbb P(\mathcal B_i)$. Summing these terms gives at most $\overline\Delta T\delta_E$, with no additional number-of-epochs factor.

An epoch $i\geq1$ has planned length $2^{i+1}$ and uses $2^{i-1}$ pairs from its predecessor. Its final time is at most $2^{i+2}-2<8\cdot2^{i-1}$. Thus $1/\sqrt{n_{\rm prev}(t)}\leq\sqrt{8/t}$ and summation gives
\begin{equation}
 \mathbb E R_T\leq\overline\Delta\min(T,2)
 +\frac{8\sqrt2 L_f}{\mu_{\min}}\sqrt{K\norm{\Sigma_X}_{\rm op}}\,a\sqrt T
 +\overline\Delta T\delta_E.
 \label{eq:epoch-master}
\end{equation}
This also holds when the final epoch is incomplete. The same policy separately satisfies $\mathbb E R_T\leq\overline\Delta T$.

For the matrix or slice method, respectively, set $R=r$ or $R=r_U$; in either case $R\geq1$. Let $\ell_E$ denote the corresponding $\ell_{\mathrm M,E}$ or $\ell_{U,E}$, and define
\[
 W_T=(L_f/\mu_{\min})\sqrt{VRK\norm{\Sigma_X}_{\rm op}MG_K(d)\ell_E T}.
\]
The leading term in Equation~\eqref{eq:epoch-master} is $32\sqrt2 W_T$. Using $V\geq2L_f^2A^2B^2$, $\mu_{\min}\leq L_f$, $M A^2\geq1$, $R\geq1$, $G_K(d)\geq8$ and $\ell_E\geq1$ gives
\[
 \frac{\overline\Delta}{W_T}
 =\frac{2\mu_{\min}B}{\sqrt{VRMG_K(d)\ell_E T}}
 \leq\frac{\sqrt2}{\sqrt{M A^2RG_K(d)\ell_E T}}\leq1.
\]
Initialization and failure add at most $(9/4)W_T$, since $T/(T+1)^2\leq1/4$. The total is at most $48W_T$. For dilation dimension $d_0\geq2$, $\log(2d_0(T+1)^2)\leq2\log(2d_0T)$. Combining this with the separate linear cap gives the constant $72$ in the matrix and slice cases of Equation~\eqref{eq:structured-regret-explicit}.

\paragraph{Remark (sub-Gaussian menus).}
If the base law also satisfies the projection certificate \eqref{eq:nonmonotone-joint-concentration}, a conditional log-sum-exp bound gives
\[
 \mathbb E\!\left[\max_{a\leq K}|\inner{X_{t,a}-m}E|
       \mid\mathcal F_{t-1}\right]
 \leq A\kappa_X\sqrt{2\log(2K)}\,\frobnorm E
\]
for every history-measurable $E$. This gives an alternative fresh-menu transfer coefficient to $\sqrt{K\norm{\Sigma_X}_{\rm op}}$ in Lemma~\ref{lem:transfer}. Substituting it into the epoch argument gives the corresponding regret refinement, with the selected-score factor $G_K(d)$ unchanged and initialization and failures controlled as above. The assumption concerns fresh candidate projections, not reward noise or selected-law projections; the separate deterministic regret cap remains valid.

\subsection{Detailed Structural Comparisons}
\label{app:structural-comparisons}

The bound has two menu-size factors: $G_K(d)\asymp K(d+K)$ from the selected-score moments and $\sqrt K$ from Lemma~\ref{lem:transfer}, so estimation and transfer contribute $\sqrt{G_K(d)}\sqrt K\asymp K\sqrt{d+K}$. Against the lower bound's $1/K$ prefactor this leaves a gap of order $K^2\sqrt{1+K/d}$, which we do not claim is necessary. The design constants leave a further gap of $(M\norm{\Sigma_X}_{\rm op})^{3/2}$ relative to Theorem~\ref{thm:lower-general}, which Remark~\ref{rem:lower-whitened} reduces to $\sqrt{M\norm{\Sigma_X}_{\rm op}}$ for the designs of Appendix~\ref{app:density-examples}.

\noindent\textbf{Product-beta specialization.}
For the density in Appendix~\ref{app:density-examples}, Equation~\eqref{eq:beta-density}, take $\norm{\Sigma_X}_{\rm op}=c_X=A^2/11$ and $M=M_0=12/A^2$ in Theorem~\ref{thm:regret}. The general guarantee then recovers the product-design bounds with $\norm{\Sigma_X}_{\rm op}M=12/11$, under the same exact-score and selected-signal assumptions.

\noindent\textbf{Vectorization and ESTOR.}
Representing the same matrix or tensor as an $N\times1$ rank-one matrix gives the vectorized specialization
\[
 \widetilde O\!\left(K\sqrt{(N+K)T}\right)
\]
of Theorem~\ref{thm:regret}, with signal and model/design constants held fixed. ESTOR's stated Theorem~3.5 gives $\widetilde O(NK^{3/2}\sqrt T)$ \citep{kang2026single}. This compares published upper bounds on a shared admissible bounded product-design subclass, with comparable norm budgets and signal floors and uniformly controlled model/design constants. It is neither an empirical separation nor an algorithm-independent lower bound, and it makes no dominance claim over all ESTOR designs.

\noindent\textbf{Comparison with unfolding.}
Any matricization preserves the original entrywise budgets and the joint density assumptions (it permutes coordinates), so the matrix specialization supplies a valid baseline using the rank and larger side length of the chosen unfolding.

Here let $d\geq3$ be the side length of a cubic tensor and let $a_1,\ldots,a_d$ be the columns of $H=\mathbf1\mathbf1^\top-2I_d$. Its eigenvalues are $d-2$ on the span of $\mathbf1$ and $-2$ on its orthogonal complement, so $H$ is an invertible sign matrix. Set
\[
 \Theta_*=\frac B{d^3}\sum_{k=1}^d a_k\otimes a_k\otimes e_k,
 \qquad U=I_d.
\]
Every entry has magnitude $B/d^3$, so the tensor is dense and $\entone{\Theta_*}=B$. Each frontal slice is the rank-one matrix $(B/d^3)a_ka_k^\top$, giving $r_U=d$ and slice structural scale $d^2$. In the first unfolding, fixing the second index $j$ gives the invertible $d\times d$ submatrix $(B/d^3)H\operatorname{diag}(H_{j1},\ldots,H_{jd})$. The second unfolding has the same property by symmetry. A relation with coefficients $c_k$ among the rows of the third unfolding would give $H\operatorname{diag}(c)H^\top=0$, hence $c=0$. All three unfoldings therefore have rank $d$ and larger side $d^2$, giving matrix structural scale $d^3$.

Use the same candidate law and parameter, and a fixed increasing linear link $f(z)=\beta z$, $0<\beta\leq L_f$, for both representations. This gives the shared selected-design signal floor $\mu_{\min}=\beta$. For the explicit $G_K$ bounds, including confidence factors, the ratio of the unfolding upper root term to the slice upper root term is
\[
 \left(\frac{G_K(d^2)\log(2(d+d^2)T)}
 {G_K(d)\log(4d^2T)}\right)^{1/2}.
\]
Before these logarithms, the ratio is $\sqrt{G_K(d^2)/G_K(d)}\asymp\sqrt{(d^2+K)/(d+K)}$. The compact bound in Theorem~\ref{thm:regret} has structural-factor ratio $\sqrt{(d^2+K)/(d+K)}$. Thus the slice bound is smaller by order $\sqrt d$ when $K=O(d)$, but this gain does not hold uniformly as $K$ grows. This compares upper bounds, not all unfolding-based algorithms or empirical performance.

\section{Lower-Bound Proofs}
\label{app:lower-proof}

Throughout this appendix $p$ is any candidate law satisfying Assumption~\ref{ass:candidate-distribution}, including the zero-extension regularity explained in Appendix~\ref{sec:candidate-law}, with score certificate $M$, covariance $\Sigma_X=\operatorname{Cov}_p(\vecop(X))$ and mean $m=\mathbb E_p\vecop(X)$. Candidate entries may be dependent and noncentered. The product-beta law~\eqref{eq:beta-density} appears only in Corollary~\ref{cor:lower-beta}, where its independence and exact moments sharpen the constants. Arrays are identified with their vectorizations, so inner products with $m$ and Frobenius norms are Euclidean.

We use the original-coordinate classes from Section~\ref{sec:notation-structures}, and the same realized-menu comparator as the upper bounds. Restrict the reward model to a disclosed linear link $f(z)=\beta z$, where $0<\mu_{\min}\leq\beta\leq L_f$, and independent Gaussian noise of variance $\sigma^2>0$. Define $\mathfrak R_T(\mathcal C;\beta,\sigma,p)= \inf_\pi\sup_{\Theta_*\in\mathcal C}\mathbb E_{\Theta_*,\pi}R_T$, with the infimum over all randomized adaptive policies, which may use $p$ and the disclosed link.

\begin{mlemma}[Menu comparison for every design]
\label{lem:lower-menu}
Let $p$ satisfy Assumption~\ref{ass:candidate-distribution} with score certificate $M$. For every nonzero $v\in\mathbb R^N$, the law of $\inner v{\vecop(X)}$ has a continuous density bounded by $\sqrt M/(2\norm v_2)$. For $K\geq2$ iid candidates and $g_K^c(v)=\mathbb E\max_{a\leq K}\inner v{\vecop(X_a)-m}$,
\begin{equation}
 \frac{\norm v_2}{8\sqrt M}
 \leq g_K^c(v)
 \leq\sqrt{K\,v^\top\Sigma_Xv}
 \leq\sqrt{K\norm{\Sigma_X}_{\rm op}}\,\norm v_2.
 \label{eq:menu-comparison-general}
\end{equation}
\end{mlemma}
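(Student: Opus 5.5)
The density claim rescales the projection-peak bound~\eqref{eq:projection-peak} from Appendix~\ref{app:stein-pair}. The lower bound on $g_K^c$ reduces to two candidates and an anti-concentration argument driven only by that density bound. The upper bound repeats the calculation in Lemma~\ref{lem:transfer}.

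\emph{Density.} Fix nonzero $v$ and put $u=v/\norm v_2$. As in Appendix~\ref{app:stein-pair}, Fubini and $p\in W^{1,1}(\mathbb R^N)$ give $F_u'\in W^{1,1}(\mathbb R)$, so $F_u'$ has a continuous, absolutely continuous representative. The score identity for the projection, followed by Jensen and Cauchy--Schwarz, gives $\int|F_u''|\leq\sqrt M$. Since the projection is supported in a bounded interval, $F_u'$ must rise from zero and return to zero. Hence $\norm{F_u'}_\infty\leq\tfrac12\int|F_u''|\leq\sqrt M/2$. The law of $\inner v{\vecop(X)}=\norm v_2\inner u{\vecop(X)}$ is a rescaling of this law, so its density is continuous and bounded by $b:=\sqrt M/(2\norm v_2)$.

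\emph{Upper bound.} Write $Y_a=\inner v{\vecop(X_a)-m}$. Then
\[
 \mathbb E\max_a Y_a\leq\mathbb E\max_a|Y_a|\leq\sqrt{\textstyle\sum_a\mathbb EY_a^2}=\sqrt{K\,v^\top\Sigma_Xv}\leq\sqrt{K\norm{\Sigma_X}_{\rm op}}\norm v_2 .
\]
The middle step is Cauchy--Schwarz (equivalently Jensen), exactly as in the proof of Lemma~\ref{lem:transfer}.

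\emph{Lower bound.} Since $K\geq2$, we have $\max_{a\leq K}Y_a\geq\max(Y_1,Y_2)$, so $g_K^c(v)\geq g_2^c(v)$. Next use the identity $\max(Y_1,Y_2)=\tfrac12(Y_1+Y_2)+\tfrac12|Y_1-Y_2|$. Each $Y_a$ is centered, so $g_2^c(v)=\tfrac12\mathbb E|W|$ with $W=Y_1-Y_2$.

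By independence, $W$ has density equal to the convolution of the density of $Y_1$ with the density of $-Y_2$. A convolution of a density bounded by $b$ with a probability density is again bounded by $b$. Therefore $\Pr(|W|\leq s)\leq 2bs$ for every $s>0$. Choose $s=1/(4b)$. Then $\Pr(|W|>s)\geq1/2$, and so
\[
 \mathbb E|W|\geq s/2=\frac1{8b}=\frac{\norm v_2}{4\sqrt M}.
\]
Halving gives $g_K^c(v)\geq\norm v_2/(8\sqrt M)$.

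\emph{Main obstacle.} The only delicate step is the uniform density bound for an arbitrary dependent, noncentered design. It relies on the weak-derivative and vanishing-boundary argument already established in Appendix~\ref{app:stein-pair}, and no positivity of the density is needed. The remaining steps are elementary.
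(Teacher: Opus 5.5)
Your proposal is correct and follows the paper's proof essentially step for step. It rescales the projection-peak bound for the density, reduces to $g_2^c=\tfrac12\mathbb E|W|$ with the same anti-concentration threshold $s=1/(4b)=\norm v_2/(2\sqrt M)$, and uses $\max_aY_a\leq(\sum_aY_a^2)^{1/2}$ plus Jensen for the upper bound; the only cosmetic difference is that you convolve with the density of $-Y_2$, whereas the paper conditions on $Y_2$ and convolves with its law.
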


\noindent\textbf{Proof.}
\emph{(a) Density bound.} Equation~\eqref{eq:projection-peak} of Appendix~\ref{app:stein-pair} is derived for an arbitrary unit direction. Its derivation uses Fubini's theorem, the zero-extension $W^{1,1}$ regularity, the conditional-score identity~\eqref{eq:projection-score-identity}, the bounded support and $\mathbb E_p[\vecop(S^p)\vecop(S^p)^\top]\preceq MI_N$; it uses no selection rule and no product structure. Applied to the unit direction $v/\norm v_2$, it bounds the continuous density of $\inner{v/\norm v_2}{\vecop(X)}$ by $\sqrt M/2$. Rescaling by $\norm v_2$ gives a continuous density $q_v$ of $\inner v{\vecop(X)}$ with $\norm{q_v}_\infty\leq\sqrt M/(2\norm v_2)$.

\emph{(b) Lower bound.} Let $Y_a=\inner v{\vecop(X_a)-m}$ and $W=Y_1-Y_2$. Given $Y_2=y$, $W$ is the translate $Y_1-y$, so $W$ has density $\int q_v(w+y+\inner vm)\,\mathbb P_{Y_2}(dy)\leq\norm{q_v}_\infty$: a bounded density convolved with a probability law. Hence $\mathbb P(|W|\leq\delta)\leq\delta\sqrt M/\norm v_2$. Taking $\delta=\norm v_2/(2\sqrt M)$ gives $\mathbb P(|W|>\delta)\geq1/2$ and $\mathbb E|W|\geq\delta\,\mathbb P(|W|>\delta)\geq\norm v_2/(4\sqrt M)$. Since $\max(Y_1,Y_2)=(Y_1+Y_2)/2+|W|/2$ and $\mathbb EY_a=0$, $g_2^c(v)=\mathbb E|W|/2\geq\norm v_2/(8\sqrt M)$. For $K\geq2$, the maximum over all $K$ candidates dominates the maximum over the first two, so $g_K^c\geq g_2^c$ pointwise.

\emph{(c) Upper bound.} Since $\max_aY_a\leq(\sum_aY_a^2)^{1/2}$, Jensen's inequality gives $g_K^c(v)\leq(\sum_a\mathbb EY_a^2)^{1/2}=(Kv^\top\Sigma_Xv)^{1/2}$, and $v^\top\Sigma_Xv\leq\norm{\Sigma_X}_{\rm op}\norm v_2^2$.

Unlike the product-beta bound~\eqref{eq:menu-comparison}, this argument uses no independence, symmetry, or centering of $p$.

\noindent\textbf{Rank-feasible orthonormal families.}
For matrices with $d_1\geq d_2$, use coordinate matrices in the first $r$ columns; for $d_2>d_1$, use the first $r$ rows. Their span has dimension $dr$, rank at most $r$, and coefficient entrywise cost one. For transformed slices, use the first $\rho_k$ columns (or rows) in slice $k$, then pull back the coordinate basis through $\mathcal T_U^{-1}$. The family remains orthonormal and has dimension $d r_U$. A basis vector from slice $k$ has original-coordinate entrywise norm $\sum_a|U_{ka}|$, so the triangle inequality gives cost $\chi$. For uniform caps $r$, use original-coordinate tensors supported in the same first $r$ matrix columns (or rows) across all trailing coordinates. Transforming only mixes those trailing coordinates, preserving the zero columns or rows. This alternative span has dimension $drq$ and cost one. No invariance of the candidate law under $U$ is invoked.

Each construction supplies $E_1,\ldots,E_{s_{\max}}$ whose entire span obeys the rank constraints and satisfies $\entone{\sum_jt_jE_j}\leq\chi\sum_j|t_j|$. Every active subset of $s\leq {s_{\max}}$ directions therefore permits $\Theta_\omega=\epsilon\sum_{j=1}^s\omega_jE_j$ with independent fair signs, $\frobnorm{\Theta_\omega}=\epsilon\sqrt s$ and $\entone{\Theta_\omega}\leq\chi s\epsilon$.

\noindent\textbf{Mean-orthogonal hard directions.}
For a noncentered design, the common reward offset $\beta\inner m{\Theta_\omega}$ would reveal the signs through every selected reward, and the per-coordinate relative-entropy bound below cannot absorb it. We therefore make every hard direction orthogonal to $m$. Let $\mu_j=\inner m{E_j}$. If $\mu_j=0$ for all $j$, as for every centered design including~\eqref{eq:beta-density}, set $u_j=E_j$, $s_{\rm lb}=s_{\max}$ and $\chi_{\rm lb}=\chi$. Otherwise, for $k\leq\lfloor s_{\max}/2\rfloor$ set
\[
 u_k=\frac{\mu_{2k}E_{2k-1}-\mu_{2k-1}E_{2k}}
          {(\mu_{2k-1}^2+\mu_{2k}^2)^{1/2}}
 \ \text{ if }(\mu_{2k-1},\mu_{2k})\ne0,
 \qquad u_k=E_{2k-1}\ \text{ otherwise},
\]
and put $s_{\rm lb}=\lfloor s_{\max}/2\rfloor$ and $\chi_{\rm lb}=\sqrt2\,\chi$. The $u_k$ are orthonormal, because they have disjoint supports in the $E$-basis. Each satisfies $\inner m{u_k}=0$. Their span lies in the span of the $E_j$, so it is rank-feasible. Finally, $\entone{\sum_kt_ku_k}\leq\chi_{\rm lb}\sum_k|t_k|$, because $|a|+|b|\leq\sqrt2$ when $a^2+b^2=1$. We require $s_{\rm lb}\geq1$, which holds automatically when $s_{\max}\geq2$ or the design is centered; then $s_{\max}/3\leq s_{\rm lb}\leq s_{\max}$.

\begin{mtheorem}[Same-menu lower bound for every design]
\label{thm:lower-general}
Let $p$ satisfy Assumption~\ref{ass:candidate-distribution}, let $\mathcal C$, $s_{\max}$ and $\chi$ be as in Section~\ref{sec:lower-bounds}, and let $s_{\rm lb}\geq1$ and $\chi_{\rm lb}$ be as above. Put
\[
 \gamma_{\rm lb}=\frac{\beta B}{16\,\chi_{\rm lb}\sqrt M},\qquad
 \tau_{\rm lb}=\frac{\sigma}{256\sqrt2\,K\,M\norm{\Sigma_X}_{\rm op}}.
\]
Then for every $T\geq1$,
\begin{equation}
 \mathfrak R_T(\mathcal C;\beta,\sigma,p)
 \geq2^{-1/4}\min\bigl\{\gamma_{\rm lb} T,\ \sqrt{\gamma_{\rm lb}\tau_{\rm lb}}\,T^{3/4},\
 \tau_{\rm lb}\sqrt{s_{\rm lb}\,T}\bigr\}.
 \label{eq:lower-envelope-general}
\end{equation}
The bound still holds if the learner knows $p$ and the subspace used in the proof.
\end{mtheorem}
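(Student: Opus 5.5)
The plan is a standard Assouad-type argument over a hypercube of sign patterns supported on an active subset of the mean-orthogonal directions $u_1,\ldots,u_{s_{\rm lb}}$. Fix an active count $s\leq s_{\rm lb}$ and a scale $\epsilon>0$. For $\omega\in\{\pm1\}^s$, put $\Theta_\omega=\epsilon\sum_{k\leq s}\omega_ku_k$. Feasibility requires $\chi_{\rm lb}s\epsilon\leq B$, since the span is rank-feasible and the entrywise cost is at most $\chi_{\rm lb}\sum_k|t_k|$. Because $\inner m{u_k}=0$, every index equals $\beta\inner{X-m}{\Theta_\omega}$, so no common offset leaks the signs.

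\emph{Regret.} In round $t$, write $\widehat v_t$ for the conditional direction implicitly favored by the learner. The realized-menu regret is at least $\beta$ times the menu gap, namely $\mathbb E\max_a\inner{X_{t,a}-m}{\Theta_\omega}-\mathbb E\inner{X_t-m}{\Theta_\omega}$. I would decompose it coordinatewise: for each $k$, the regret contributed when the chosen action's projection on $u_k$ has the wrong sign relative to $\omega_k$. Lemma~\ref{lem:lower-menu} supplies the two needed scales. The lower bound $g_K^c(v)\geq\norm v_2/(8\sqrt M)$ shows the best candidate gains order $\epsilon\sqrt s/\sqrt M$. The upper bound $\sqrt{K\norm{\Sigma_X}_{\rm op}}$ limits how much a selected action can reveal. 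The cleanest route is to bound $\sum_k\mathbb E[\inner{X_t-m}{u_k}\omega_k]$ from above by a quantity involving the probabilities of guessing $\omega_k$. Flipping $\omega_k$ then forces a per-coordinate regret of order $\beta\epsilon/(\sqrt M\,\cdot)$ times the probability of a wrong sign. Summing over $s$ coordinates and $T$ rounds gives regret of order $\beta\epsilon\sqrt s\,T/\sqrt M$ times an Assouad testing error.

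\emph{Information.} For neighbours $\omega,\omega^{(k)}$, Gaussian noise gives per-round KL divergence $\beta^2\epsilon^2\inner{X_t-m}{u_k}^2\cdot 2/\sigma^2$. Summed over $T$ rounds and averaged over $k$, this is at most $(2\beta^2\epsilon^2/\sigma^2)\sum_t\mathbb E\sum_k\inner{X_t-m}{u_k}^2/s$. The selected action is one of $K$ candidates, so $\mathbb E\inner{X_t-m}{u}^2\leq K\,u^\top\Sigma_Xu\leq K\norm{\Sigma_X}_{\rm op}$. This is the step where the paper's $K\norm{\Sigma_X}_{\rm op}$ enters. Pinsker (or Bretagnolle--Huber) then keeps the testing error at least $1/4$ provided $\beta^2\epsilon^2TK\norm{\Sigma_X}_{\rm op}/(s\sigma^2)\lesssim1$. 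This averaged-over-$k$ bound is the main obstacle. A naive bound uses $\sum_k\inner{X_t-m}{u_k}^2\leq\frobnorm{\cdot}^2$, which loses a factor $s$. I would instead use the sharper fact that the selected action's squared projection onto each fixed $u_k$ is at most $K$ times the base variance in that direction, applied to each $k$ separately. This yields an $s$-free per-coordinate KL bound $\lesssim\beta^2\epsilon^2TK\norm{\Sigma_X}_{\rm op}/\sigma^2$. The per-coordinate regret gain must then also be shown to scale like $\epsilon/(\sqrt M\,\sqrt{s}\cdot\ldots)$. I expect the constants $256\sqrt2$ and $M\norm{\Sigma_X}_{\rm op}$ to come from combining $M\norm{\Sigma_X}_{\rm op}\geq1$ with these two scales.

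\emph{Optimization over $(s,\epsilon)$.} Balancing the information constraint gives $\epsilon\asymp\sigma/(\beta\sqrt{TK\norm{\Sigma_X}_{\rm op}})$ up to the $\sqrt M$ normalisation, with regret $\asymp\tau_{\rm lb}\sqrt{sT}$ when $s=s_{\rm lb}$ is feasible. Feasibility $\chi_{\rm lb}s\epsilon\leq B$ may fail. If so, I would take $s$ smaller, giving the intermediate $\sqrt{\gamma_{\rm lb}\tau_{\rm lb}}T^{3/4}$ regime where $s\asymp(\gamma_{\rm lb}/\tau_{\rm lb})\sqrt T$. If even $s=1$ is infeasible, I would cap $\epsilon=B/\chi_{\rm lb}$ with trivial information, giving $\gamma_{\rm lb}T$. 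Taking the minimum over these three regimes, with the rounding of $s$ absorbed by the factor $2^{-1/4}$, gives \eqref{eq:lower-envelope-general}. Since the hard family and $p$ are fixed in advance, revealing them to the learner changes nothing.
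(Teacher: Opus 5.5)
Your per-coordinate KL bound $2K\beta^2\epsilon^2T\norm{\Sigma_X}_{\rm op}/\sigma^2$ and your final optimization over $s$ (with the $2^{-1/4}$ rounding) agree with the paper. The step that turns information into regret, however, has a genuine gap.

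With fresh $K$-candidate menus, the regret is not a sum of per-coordinate wrong-sign penalties. For a dense design such as the product-beta law with large $s$, even the oracle's best candidate has $\omega_k\inner{X_{t,a_*}-m}{u_k}<0$ on roughly half the coordinates. Maximizing over $K$ candidates shifts the index by only $O(\sqrt s)$ coordinate standard deviations, so a wrong sign on $u_k$ certifies no loss. You note that a per-coordinate gain of order $\epsilon/\sqrt s$ is needed but give no mechanism for it, and the natural mechanism fails. Bound each $\mathbb E[\omega_k\inner{X_t-m}{u_k}]=\mathbb E[\bar\omega_k\inner{X_t-m}{u_k}]$ separately by Cauchy--Schwarz, using $\mathbb E\inner{X_t-m}{u_k}^2\leq K\norm{\Sigma_X}_{\rm op}$, and sum. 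This gives a learner gain of order $\epsilon s\sqrt{K\norm{\Sigma_X}_{\rm op}\,\mathrm{KL}}$. Against the oracle's $\epsilon\sqrt s/(8\sqrt M)$, it forces $\epsilon\sqrt s$ to be independent of $s$, and the factor $\sqrt{s_{\rm lb}}$ is lost.

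The missing idea is to aggregate through the posterior mean $\overline\Theta_{t-1}=\epsilon\sum_j\bar\omega_ju_j$. The fresh menu and the private randomness are conditionally independent of $\omega$ given $\mathcal F_{t-1}$. Hence the learner's expected gain is at most $\beta\,\mathbb E\,g_K^c(\overline\Theta_{t-1})\leq\beta\sqrt{K\norm{\Sigma_X}_{\rm op}}\,\mathbb E\frobnorm{\overline\Theta_{t-1}}$, which involves only one history-measurable direction. Then $\mathbb E\frobnorm{\overline\Theta_{t-1}}\leq\epsilon(\sum_j\mathbb E\bar\omega_j^2)^{1/2}$. Each term satisfies $\mathbb E\bar\omega_j^2\leq2I(\omega_j;\mathcal F_{t-1})\leq2I(\omega_j;\mathcal F_{t-1}\mid\omega_{-j})$, which your KL bound controls via Pinsker and convexity. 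Separately, your intermediate $k$-averaged bound with an extra $1/s$ is false: it would need $\sum_k\mathbb E\inner{X_t-m}{u_k}^2\leq K\norm{\Sigma_X}_{\rm op}$. The $s$-free per-coordinate version you fall back on is the correct statement, and the information step is not where the difficulty lies.

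Second, a constant Assouad testing error does not suffice here. For a general design, the only menu bounds available are $\norm v_2/(8\sqrt M)\leq g_K^c(v)\leq\sqrt{K\norm{\Sigma_X}_{\rm op}}\norm v_2$, which differ by a factor $8\sqrt{KM\norm{\Sigma_X}_{\rm op}}\geq8$. If the posterior means have constant size, the upper bound on the learner's gain exceeds the lower bound on the oracle's, and nothing follows. You must instead drive $\mathbb E\bar\omega_j^2$ down to order $1/(KM\norm{\Sigma_X}_{\rm op})$. That is, take $\epsilon=\min\{B/(\chi_{\rm lb}s),\,\sigma/(16\sqrt2K\beta\norm{\Sigma_X}_{\rm op}\sqrt{MT})\}$. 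This gives per-round Bayes regret at least $\beta\epsilon\sqrt s/(16\sqrt M)$, and hence total regret at least $\min\{\gamma_{\rm lb}T/\sqrt s,\tau_{\rm lb}\sqrt{sT}\}$. This requirement, not $M\norm{\Sigma_X}_{\rm op}\geq1$, is where the factor $KM\norm{\Sigma_X}_{\rm op}$ in $\tau_{\rm lb}$ comes from. Your scale $\epsilon\asymp\sigma/(\beta\sqrt{TK\norm{\Sigma_X}_{\rm op}})$ would claim a stronger bound, with $\sqrt{KM\norm{\Sigma_X}_{\rm op}}$ in place of $KM\norm{\Sigma_X}_{\rm op}$, than your argument can support.
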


The envelope crosses at
\begin{equation}
 T_-(p)=\Bigl(\frac{\tau_{\rm lb}}{\gamma_{\rm lb}}\Bigr)^2,\qquad
 T_+(p)=\Bigl(\frac{\tau_{\rm lb}\,s_{\rm lb}}{\gamma_{\rm lb}}\Bigr)^2
 =\frac{\sigma^2\chi_{\rm lb}^2s_{\rm lb}^2}{512\,K^2\beta^2B^2M\norm{\Sigma_X}_{\rm op}^2}.
 \label{eq:lower-threshold-general}
\end{equation}
For $T\geq\max\{1,T_+(p)\}$ the last branch is active, so
\[
 \mathfrak R_T(\mathcal C;\beta,\sigma,p)
 \geq\frac{c\,\sigma\sqrt{s_{\rm lb}\,T}}{K\,M\norm{\Sigma_X}_{\rm op}},
 \qquad c=\frac{2^{-1/4}}{256\sqrt2}\geq\frac1{431}.
\]

For $s_{\max}\geq2$, the construction gives $s_{\rm lb}\geq\lfloor s_{\max}/2\rfloor\geq s_{\max}/3$. Thus Theorem~\ref{thm:lower-general} implies Theorem~\ref{cor:lower-large} with the universal constant reduced by a factor $1/\sqrt3$; the certificate above gives $1/(431\sqrt3)$ for the main theorem.

A prior on $s\leq s_{\rm lb}$ signed mean-orthogonal directions balances the budget constraint against the information in selected rewards, and optimizing $s$ gives~\eqref{eq:lower-envelope-general}. For a general slice profile, $1\leq\chi\leq\sqrt q$ accounts for the inverse transform in the original-coordinate budget. The crossings are construction thresholds, not established minimax transitions; growing $\chi_{\rm lb}$ changes the matching horizon. For $T\geq T_+(p)$ the square-root branch matches the dimension and horizon powers of Theorem~\ref{thm:regret} up to logarithms. This holds for every fixed design, fixed $K$, and fixed link, budget, noise and design constants, including $M\norm{\Sigma_X}_{\rm op}$; since $s_{\max}/3\leq s_{\rm lb}\leq s_{\max}$, the matrix and slice dimensions $dr$ and $dr_U$ enter up to a factor of at most three. Not matched are the menu size $K$; the ratios $L_f/\mu_{\min}$ and $V/\sigma^2$; the design factor, $\sqrt{M\norm{\Sigma_X}_{\rm op}}$ in the upper bound versus $1/(M\norm{\Sigma_X}_{\rm op})$ here (Remark~\ref{rem:lower-whitened}); and horizons below $T_+(p)$, including the $T^{3/4}$ branch. If $M\norm{\Sigma_X}_{\rm op}$ grows with $N$ along a family of designs, the two bounds differ by the growing factor $(M\norm{\Sigma_X}_{\rm op})^{3/2}$, and neither bound is shown to have sharp dimension dependence for that family.

\noindent\textbf{Proof of Theorem~\ref{thm:lower-general}.}
\emph{Prior.} Fix $s\in[s_{\rm lb}]$ and take
\[
 \epsilon=\min\left\{\frac B{\chi_{\rm lb} s},\
   \frac{\sigma}{16\sqrt2\,K\beta\norm{\Sigma_X}_{\rm op}\sqrt{MT}}\right\},
 \qquad \rho=\epsilon\sqrt s.
\]
For independent fair signs $\omega\in\{-1,1\}^s$ put $\Theta_\omega=\epsilon\sum_{k=1}^s\omega_ku_k$. Then $\frobnorm{\Theta_\omega}=\rho$, $\inner m{\Theta_\omega}=0$ and $\entone{\Theta_\omega}\leq\chi_{\rm lb} s\epsilon\leq B$; the span of the $u_k$ is rank-feasible, so $\Theta_\omega\in\mathcal C$.

\emph{Menu comparison.} Lemma~\ref{lem:lower-menu} supplies both menu bounds below. The product-beta computation~\eqref{eq:menu-comparison} is used only for Corollary~\ref{cor:lower-beta}.

\emph{Adaptive information.} Let $P_\omega^t$ be the history law through round $t$, including all observed menus and the parameter-independent policy seed. At a common history and menu, flipping sign $j$ changes only the Gaussian reward mean, by $2\beta\epsilon\omega_j\inner{X_u}{u_j}=2\beta\epsilon\omega_j\inner{X_u-m}{u_j}$, because $\inner m{u_j}=0$. Menus and the conditional policy kernel add no parameter-dependent likelihood factor. The chosen action is one of the $K$ fresh candidates, so $\mathbb E[\inner{X_u-m}{u_j}^2\mid\mathcal F_{u-1}]\leq\sum_{a\leq K}\mathbb E\inner{X_{u,a}-m}{u_j}^2\leq K\norm{\Sigma_X}_{\rm op}$. The chain rule gives
\[
 \operatorname{KL}(P_\omega^t\|P_{\omega^{(j)}}^t)
 =\frac{2\beta^2\epsilon^2}{\sigma^2}
       \sum_{u=1}^t\mathbb E_\omega\inner{X_u-m}{u_j}^2
 \leq\frac{2K\beta^2\epsilon^2t\norm{\Sigma_X}_{\rm op}}{\sigma^2}.
\]
Selected actions need not have the same marginal law in the two experiments.

\emph{Posterior mean.} Let $\bar\omega_j=\mathbb E[\omega_j\mid\mathcal F_t]$ and $\overline\Theta_t=\epsilon\sum_j\bar\omega_ju_j$. Pinsker's inequality for a fair binary sign gives $\mathbb E\bar\omega_j^2\leq2I(\omega_j;\mathcal F_t)$. Prior independence implies $I(\omega_j;\mathcal F_t)\leq I(\omega_j;\mathcal F_t\mid\omega_{-j})$. For fixed other signs, convexity of relative entropy bounds this last quantity by one quarter of the two directed KL divergences. Therefore
\[
 \mathbb E\bar\omega_j^2
 \leq\frac{2K\beta^2\epsilon^2t\norm{\Sigma_X}_{\rm op}}{\sigma^2},
 \qquad
 \mathbb E\frobnorm{\overline\Theta_t}
 \leq\rho\,\frac{\beta\epsilon\sqrt{2Kt\norm{\Sigma_X}_{\rm op}}}{\sigma},
\]
and $\inner m{\overline\Theta_t}=0$. No posterior independence is used.

\emph{Bayes regret.} A fresh menu leaves the posterior mean unchanged. Since $\inner m{\overline\Theta_{t-1}}=0$, candidate $a$ has posterior expected reward $\beta\inner{X_{t,a}-m}{\overline\Theta_{t-1}}$, so the best posterior expected reward has mean $\beta g_K^c(\overline\Theta_{t-1})\leq\beta\sqrt{K\norm{\Sigma_X}_{\rm op}}\frobnorm{\overline\Theta_{t-1}}$ by Lemma~\ref{lem:lower-menu}. Likewise $\inner m{\Theta_\omega}=0$, so the oracle's expected reward is $\beta g_K^c(\Theta_\omega)\geq\beta\rho/(8\sqrt M)$. The comparator is the realized-menu maximum, so a common mean offset would cancel from the regret in any case. Hence
\[
 \mathbb E r_t
 \geq\beta\rho\left[\frac1{8\sqrt M}
  -\frac{\sqrt2\,K\norm{\Sigma_X}_{\rm op}\beta\epsilon\sqrt T}{\sigma}\right]
 \geq\frac{\beta\rho}{16\sqrt M}
\]
by the choice of $\epsilon$. Summing over rounds, the Bayes, and hence minimax, regret is at least
\[
 \frac{\beta\rho T}{16\sqrt M}
 =\min\{\gamma_{\rm lb} T/\sqrt s,\ \tau_{\rm lb}\sqrt{sT}\}.
\]

\emph{Optimizing $s$.} For $A_0,B_0>0$, the continuous maximum over $1\leq s\leq s_{\rm lb}$ of $\min(A_0/\sqrt s,B_0\sqrt s)$ is $\min(A_0,\sqrt{A_0B_0},B_0\sqrt{s_{\rm lb}})$. Rounding the crossing to either adjacent integer loses at most $2^{-1/4}$: if $j\leq u=A_0/B_0\leq j+1$, the ratio is at least $\max(\sqrt{j/u},\sqrt{u/(j+1)})\geq(j/(j+1))^{1/4}$. With $A_0=\gamma_{\rm lb} T$ and $B_0=\tau_{\rm lb}\sqrt T$, this proves Equation~\eqref{eq:lower-envelope-general}.

The two crossings are $T_\pm(p)$ in Equation~\eqref{eq:lower-threshold-general}; for $T\geq T_+(p)$ the last branch is selected. The argument needs $\sigma>0$ and concerns the disclosed linear submodel, for any $p$.

\begin{mcorollary}[Product-beta same-menu lower bound]
\label{cor:lower-beta}
Let $p$ be the product-beta law~\eqref{eq:beta-density}, and let $\mathcal C$, $s_{\max}$ and $\chi$ be as in Theorem~\ref{thm:lower-general}. For $T\geq1$ set $\gamma_{\rm pb}=\beta B\sqrt{c_X}/\chi$ and $\tau_{\rm pb}=\sigma\sqrt{c_XM_0}$. Then
\begin{equation}
 \mathfrak R_T(\mathcal C;\beta,\sigma,p)
 \geq\frac1{20K}
 \min\{\gamma_{\rm pb}T,\ \sqrt{\gamma_{\rm pb}\tau_{\rm pb}}\,T^{3/4},\ \tau_{\rm pb}\sqrt{s_{\max}T}\}.
 \label{eq:lower-envelope}
\end{equation}
The bound still holds if the learner is told the structural subspace used in its proof.
\end{mcorollary}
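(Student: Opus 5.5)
We rerun the proof of Theorem~\ref{thm:lower-general} with the product-beta constants substituted for the general design quantities. The beta law is centered, so $\inner m{E_j}=0$ for every $j$. We may therefore take $u_j=E_j$, $s_{\rm lb}=s_{\max}$ and $\chi_{\rm lb}=\chi$. No halving of the family and no $\sqrt2$ cost inflation is needed. The prior is again $\Theta_\omega=\epsilon\sum_{k\le s}\omega_kE_k$ with fair independent signs. We take
\[
 \epsilon=\min\Bigl\{\frac{B}{\chi s},\ \frac{c_0\,\sigma}{K\beta\sqrt{c_X T}}\Bigr\},
\]
where the constant $c_0$ is fixed at the end. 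This keeps $\Theta_\omega\in\mathcal C$ by the rank-feasible family construction.

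The information step transfers verbatim. With $\Sigma_X=c_XI_N$, the KL divergence for flipping one sign is at most $2K\beta^2\epsilon^2tc_X/\sigma^2$. This gives $\mathbb E\frobnorm{\overline\Theta_t}\le\rho\beta\epsilon\sqrt{2Ktc_X}/\sigma$. The change is in the menu comparison. Lemma~\ref{lem:lower-menu} gives the oracle term $\beta\rho/(8\sqrt{M_0})=\beta\rho\sqrt{c_X}\cdot\sqrt{11/12}/8$. We want instead a lower bound on $g_K^c(v)$ of order $\sqrt{c_X}\norm v_2$, and we want it uniformly in $v$. This is the sharpened bound \eqref{eq:menu-comparison} referenced earlier.

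I would prove that bound by comparing with two candidates: $g_K^c(v)\ge g_2^c(v)=\tfrac12\mathbb E|W|$, where $W=\inner v{X_1-X_2}$ is a symmetric sum of independent terms with variance $2c_X\norm v_2^2$. A Paley--Zygmund or fourth-moment argument then gives $\mathbb E|W|\ge(\mathbb EW^2)^{3/2}/(\mathbb EW^4)^{1/2}$. Independence bounds $\mathbb EW^4\le3(\mathbb EW^2)^2$, using the beta fourth moment $3A^4/143\le 3c_X^2\cdot(121/143)$ so the coordinate kurtosis is below $3$. Hence $g_K^c(v)\ge\sqrt{2c_X}\norm v_2/(2\sqrt3)$. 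This is the main point where independence and symmetry enter. It is also the step I expect to need care: the kurtosis bound for a sum of non-identically weighted independent coordinates must be checked, and in the two-candidate difference each coordinate is itself a symmetric difference.

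For the Bayes regret step, we combine the oracle lower bound $\beta\rho\sqrt{c_X}/\sqrt6$ with the upper bound $\beta\sqrt{Kc_X}\,\mathbb E\frobnorm{\overline\Theta_{t-1}}$ for the posterior-greedy term. Choosing $c_0$ small makes the posterior term at most half the oracle term, so each round costs at least a constant times $\beta\rho\sqrt{c_X}$. Summing over rounds gives
\[
 \min\bigl\{\gamma_{\rm pb}T/\sqrt s,\ \tau_{\rm pb}\sqrt{sT}/K\bigr\}
\]
up to constants. Here $\sqrt{c_XM_0}=\sqrt{12/11}$ lets us write $\sigma/\sqrt{c_X}$-type factors in the stated form $\tau_{\rm pb}=\sigma\sqrt{c_XM_0}$.

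To finish, we absorb the leftover factor $1/K$ multiplying $\gamma_{\rm pb}$ (bounding $1\ge 1/K$) so that all three branches carry the same $1/K$. We then optimize over integer $s\le s_{\max}$ with the same rounding argument, losing $2^{-1/4}$. The last step is to check that the accumulated numerical constants are at least $1/20$.
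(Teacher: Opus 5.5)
Your overall route is the paper's. Centering gives $u_j=E_j$, $s_{\rm lb}=s_{\max}$ and $\chi_{\rm lb}=\chi$. The information and posterior steps carry over with $\Sigma_X=c_XI_N$. The oracle term is bounded below through H\"older's inequality $\mathbb E|W|\geq(\mathbb EW^2)^{3/2}(\mathbb EW^4)^{-1/2}$. The kurtosis step you flag is harmless: for independent centered summands, $\mathbb E(\sum_ia_i)^4=\sum_i\mathbb Ea_i^4+3\sum_{i\neq j}\mathbb Ea_i^2\,\mathbb Ea_j^2\leq3(\sum_i\mathbb Ea_i^2)^2$ once every coordinate has kurtosis at most $3$.

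The gap is the deferred final check: with your two-candidate comparison it fails. Write the oracle floor as $g_K(v)\geq a\sqrt{c_X}\norm v_2$. Your per-round bracket is $a-\sqrt2c_0$, so the coefficient of $\sigma\sqrt{sT}/K$ is $c_0(a-\sqrt2c_0)\leq a^2/(4\sqrt2)$. Equality holds exactly at your ``posterior term equals half the oracle term'' choice. With $a=1/\sqrt6$ from $g_2=\tfrac12\mathbb E|W|$, this is $1/(24\sqrt2)\approx0.0295$. Two further factors are unavoidable: $2^{-1/4}$ from rounding $s$, and $\sqrt{11/12}$ from replacing $\sigma$ by $\tau_{\rm pb}=\sigma\sqrt{12/11}$. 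After them you get roughly $1/(42K)$, not $1/(20K)$. A sharper moment inequality does not rescue two candidates. For spread-out $v$ the projection $\inner v{X_1-X_2}$ is nearly Gaussian, so even the exact value of $\mathbb E|W|$ gives only $a\approx1/\sqrt\pi$ and a final coefficient near $0.045/K$.

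The missing idea is to use a third candidate, which the model allows since $K\geq3$. For three iid centrally symmetric scalars, $\mathbb E\max=\tfrac12\mathbb E(\max-\min)$. The range equals half the sum of the three pairwise distances. Hence
$g_3(v)=\tfrac34\mathbb E|W|\geq\tfrac34\sqrt{2/3}\,\sqrt{c_X}\norm v_2=\tfrac{\sqrt6}{4}\sqrt{c_X}\norm v_2$,
a factor $3/2$ above your floor, and $g_K\geq g_3$ by monotonicity in $K$. Because the final coefficient scales as $a^2$, this gains a factor $9/4$. Take $c_0=\sqrt3/8$, as in the paper's choice of $\epsilon$. Then the posterior term is at most $\tfrac{\sqrt6}{8}\beta\sqrt{c_X}\rho$, so per-round regret is at least $\tfrac{\sqrt6}{8}\beta\sqrt{c_X}\rho$. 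The total is at least $\tfrac1{16K}\min\{\gamma_{\rm pb}T/\sqrt s,\ \sigma\sqrt{sT}\}$, because $3\sqrt2/64\geq1/16$. Finally, $2^{-1/4}\sqrt{11/12}\geq4/5$ gives exactly the stated $1/(20K)$.
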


The envelope crosses at
\begin{equation}
 T_-=\frac{\tau_{\rm pb}^2}{\gamma_{\rm pb}^2},\qquad
 T_+=\frac{\tau_{\rm pb}^2{s_{\max}}^2}{\gamma_{\rm pb}^2}
     =\frac{\sigma^2\chi^2M_0{s_{\max}}^2}{\beta^2B^2}.
 \label{eq:lower-threshold}
\end{equation}
The moment in this corollary is the actual $M_0=12/A^2$, not a freely enlarged certificate $M$.

\noindent\textbf{Proof of Corollary~\ref{cor:lower-beta}.}
Law~\eqref{eq:beta-density} is centered, so every $\mu_j=0$, $u_j=E_j$, $s_{\rm lb}=s_{\max}$ and $\chi_{\rm lb}=\chi$. Its independence and exact moments sharpen Lemma~\ref{lem:lower-menu}. For a fresh menu, put $g_K(v)=\mathbb E\max_{a\leq K}\inner v{X_{t,a}}$, which equals $g_K^c(v)$ here. Independence and the beta coordinate moments in Appendix~\ref{app:density-examples} give
\[
 \mathbb E\inner vX^2=c_X\frobnorm v^2,\qquad
 \mathbb E\inner vX^4
 =3c_X^2\frobnorm v^4-\frac6{13}c_X^2\sum_i v_i^4
 \leq3c_X^2\frobnorm v^4.
\]
For $W=\inner v{X_{t,1}-X_{t,2}}$, these imply $\mathbb E W^2=2c_X\frobnorm v^2$ and $\mathbb E W^4\leq12c_X^2\frobnorm v^4$. H\"older gives $\mathbb E|W|\geq\sqrt{2/3}\sqrt{c_X}\frobnorm v$. The sum of the three pairwise distances of three scalars equals twice their range. Central symmetry then gives $g_3(v)=3\mathbb E|W|/4$. Monotonicity in $K$ and the second-moment upper bound yield
\begin{equation}
 \frac{\sqrt6}{4}\sqrt{c_X}\frobnorm v
 \leq g_K(v)\leq\sqrt{Kc_X}\frobnorm v.
 \label{eq:menu-comparison}
\end{equation}
This does not identify $g_K(v)$ with a direction-independent multiple of $\frobnorm v$.

Rerun the proof of Theorem~\ref{thm:lower-general} with Equation~\eqref{eq:menu-comparison} in place of Lemma~\ref{lem:lower-menu}, with $\norm{\Sigma_X}_{\rm op}=c_X$, and with
\[
 \epsilon=\min\left\{\frac B{\chi s},
             \frac{\sqrt3\,\sigma}{8K\beta\sqrt{Tc_X}}\right\}.
\]
The information and posterior steps give $\mathbb E\frobnorm{\overline\Theta_t}\leq\rho\sqrt{2K\beta^2\epsilon^2tc_X/\sigma^2}\leq\sqrt6\rho/(8\sqrt K)$. The best posterior expected reward therefore has mean at most $(\sqrt6/8)\beta\sqrt{c_X}\rho$, whereas the oracle's expected reward is at least $(\sqrt6/4)\beta\sqrt{c_X}\rho$. The Bayes, and hence minimax, regret is at least
\[
 \frac{\sqrt6}{8}\beta\sqrt{c_X}\rho T
 \geq\frac1{16K}\min\{\gamma_{\rm pb}T/\sqrt s,\ \sigma\sqrt{sT}\}.
\]
The rounding argument with $A_0=\gamma_{\rm pb}T$ and $B_0=\sigma\sqrt T$ gives a lower envelope with coefficient $2^{-1/4}/(16K)$ and $\sigma$ in place of $\tau_{\rm pb}$. Since $\sqrt{c_XM_0}=\sqrt{12/11}$ and $2^{-1/4}\sqrt{11/12}\geq4/5$, replacing $\sigma$ by $\tau_{\rm pb}$ gives Equation~\eqref{eq:lower-envelope} with coefficient $1/(20K)$. The score moment enters only through this fixed-law numerical conversion.

\refstepcounter{mtheorem}\label{rem:lower-whitened}
\noindent\textbf{Remark~\themtheorem\ (Scale-free constant).}
The factor $1/(M\norm{\Sigma_X}_{\rm op})$ in Theorem~\ref{thm:lower-general} comes from measuring the hard directions in Euclidean rather than covariance geometry. Let $\mathcal U=\operatorname{span}\{u_1,\ldots,u_{s_{\rm lb}}\}$, of dimension $s_{\rm lb}$, and let $\Sigma_{\mathcal U}$ be the compression of $\Sigma_X$ to $\mathcal U$. The identity $\mathbb E[(u^\top(x-\bar x))(u^\top\vecop(S^p(X)))]=1$ of Appendix~\ref{app:regret-proof} and Cauchy--Schwarz give $u^\top\Sigma_Xu\geq1/M$ for every unit $u$, so $\lambda_{\min}(\Sigma_{\mathcal U})\geq1/M>0$. Choose the hard directions $w_1,\ldots,w_{s_{\rm lb}}\in\mathcal U$ orthonormal for $\inner ab_\Sigma=a^\top\Sigma_Xb$, write $\norm v_\Sigma=\inner vv_\Sigma^{1/2}$, and define
\[
 a_p=\inf_{0\ne v\in\mathcal U}
     \frac{\mathbb E|\inner v{X_1-X_2}|}{2\sqrt{v^\top\Sigma_Xv}}
 \in(0,2^{-1/2}].
\]
The upper end follows from $\mathbb E|W|\leq(\mathbb EW^2)^{1/2}$. Run the proof of Theorem~\ref{thm:lower-general} with $s=s_{\rm lb}$, prior $\Theta_\omega=\epsilon\sum_k\omega_kw_k$ and three changes. First, $\norm{\Sigma_X}_{\rm op}$ becomes $1$ in the information and upper-menu steps, because $w_j^\top\Sigma_Xw_j=1$ and $g_K^c(v)\leq\sqrt K\norm v_\Sigma$; the posterior step then bounds $\mathbb E\norm{\overline\Theta_t}_\Sigma$, with $\rho=\norm{\Theta_\omega}_\Sigma=\epsilon\sqrt{s_{\rm lb}}$. Second, the oracle floor $1/(8\sqrt M)$ becomes $a_p$, because $g_K^c\geq g_2^c=\mathbb E|W|/2\geq a_p\norm v_\Sigma$ on $\mathcal U$. Third, $\epsilon=a_p\sigma/(2\sqrt2K\beta\sqrt T)$, which makes the per-round bracket $a_p/2$. For the budget, every $\Theta\in\mathcal U$ has $\entone\Theta\leq\chi_{\rm lb}\sqrt{s_{\rm lb}}\frobnorm\Theta\leq\chi_{\rm lb}\sqrt{s_{\rm lb}}\norm\Theta_\Sigma/\sqrt{\lambda_{\min}(\Sigma_{\mathcal U})}$, so $\entone{\Theta_\omega}\leq\chi_{\rm lb}s_{\rm lb}\epsilon/\sqrt{\lambda_{\min}(\Sigma_{\mathcal U})}$, which is at most $B$ once $T\geq T_+^\Sigma(p)$ below. Hence
\[
 \mathfrak R_T(\mathcal C;\beta,\sigma,p)
 \geq\frac{a_p^2\,\sigma\sqrt{s_{\rm lb}T}}{4\sqrt2\,K}
 \quad\text{for}\quad
 T\geq T_+^\Sigma(p)
 =\frac{a_p^2\sigma^2\chi_{\rm lb}^2s_{\rm lb}^2}
       {8K^2\beta^2B^2\lambda_{\min}(\Sigma_{\mathcal U})}.
\]
Two sufficient conditions bound $a_p$ below. Lemma~\ref{lem:lower-menu}(b) gives $\mathbb E|W|\geq\norm v_2/(4\sqrt M)$, hence $a_p\geq1/(8\sqrt{M\lambda_{\max}(\Sigma_{\mathcal U})})$, which recovers the large-horizon coefficient of Theorem~\ref{thm:lower-general} up to constants, under the threshold above. If $\mathbb E\inner v{X-m}^4\leq L(v^\top\Sigma_Xv)^2$ on $\mathcal U$, then H\"older's inequality $\mathbb E|W|\geq(\mathbb EW^2)^{3/2}(\mathbb EW^4)^{-1/2}$ and $\mathbb EW^4=2\mathbb EY^4+6(\mathbb EY^2)^2$, with $Y=\inner v{X-m}$, give $a_p^2\geq1/(L+3)$.

Independent coordinates with coordinate kurtosis at most $L_0$ give $L\leq\max\{L_0,3\}$. Law~\eqref{eq:beta-density} has $L_0=363/143$, so $L\leq3$. For the correlated designs of Appendix~\ref{app:density-examples}, $m=b$ and $\inner{\vecop(X)-b}v=\inner z{H^\top v}$, where $z$ has independent product-beta coordinates. Hence $L\leq3$ for every invertible $H$, and $\mathfrak R_T(\mathcal C;\beta,\sigma,p)\geq\sigma\sqrt{s_{\rm lb}T}/(24\sqrt2K)$ for $T\geq T_+^\Sigma(p)$. The constant does not depend on $H$; the threshold does, through $\lambda_{\min}(\Sigma_{\mathcal U})$ and the construction of $\mathcal U$. For this family Theorem~\ref{thm:regret} carries $\sqrt{M\norm{\Sigma_X}_{\rm op}}=\sqrt{12/11}\operatorname{cond}_2(H)$. The remaining design gap therefore lies on the upper-bound side: it is a factor of the analysis of Theorem~\ref{thm:regret} that may be loose, not a demonstrated suboptimality of T-ESTOR.

\section{T-BSTOR for Arbitrary Links}
\label{app:nonmonotone}

This appendix treats Lipschitz links that need not be monotone. T-BSTOR first estimates a signed multiple of the structured parameter from uniform observations, then learns rewards along the frozen estimated index; its scalar confidence bounds allow adaptive sampling and finite-variance noise. Matrix and transformed-slice fits are treated together.

\subsection{Local Assumptions and Structured Index Estimation}
\label{app:nonmonotone-model}

Keep the parameter classes, entrywise budgets, fresh iid menus, known base density and score, and conditional noise kernel from Section~\ref{sec:preliminaries}. Retain absolute continuity and $L_f$-Lipschitz continuity on $[-AB,AB]$, without imposing monotonicity. There is no restriction on the additive reward level $f(0)$. For a nonzero parameter assume only
\[
 \mu_*=\mathbb E_{p}f'(\inner X{\Theta_*}),\qquad
 |\mu_*|\geq\gamma>0.
 \label{eq:nonmonotone-signal}
\]
In particular, $|\mu_*|\leq L_f$, and no selected-design signal condition is imposed. A zero parameter has identically zero regret and is handled separately without evaluating $f'(0)$.

Candidate entries may be dependent and have arbitrary means; write $m=\mathbb E_pX$, and retain $\maxnorm X<A$, the zero-extension regularity $p\in W^{1,1}(\mathbb R^N)$, and the joint score certificate $\mathbb E_p[\vecop(S^p(X))\vecop(S^p(X))^\top]\preceq MI_N$ from Assumption~\ref{ass:candidate-distribution}. The prediction step uses the projection certificate $\kappa_X\geq1$ of Equation~\eqref{eq:nonmonotone-joint-concentration}, which concerns candidate projections rather than reward noise. A covariance bound alone does not establish that exponential inequality with the same constant. Every bounded law admits $\kappa_X=\sqrt N$: $\inner XZ$ lies in an interval of length at most $2A\entone Z\leq2A\sqrt N\frobnorm Z$, so Hoeffding's lemma~\citep{hoeffding1963probability} gives Equation~\eqref{eq:nonmonotone-joint-concentration} whatever the dependence. Independent bounded entries have $\kappa_X=1$ for any means, by applying the same lemma coordinatewise; Appendix~\ref{app:density-examples} gives correlated, noncentered designs with $\kappa_X=1$. Stronger dependence can force $\kappa_X$ to grow: Equation~\eqref{eq:nonmonotone-joint-concentration} implies $A^2\kappa_X^2\geq\norm{\Sigma_X}_{\rm op}$, which is of order $N$ when all coordinates share a random factor of constant variance. Write $V=2L_f^2A^2B^2+\sigma^2$ and $\Delta=2L_fAB$; every realized-menu mean-reward gap is at most $\Delta$.

Let $n_0\geq2$ be an even number of uniform-selection rounds. Its $n=n_0/2$ disjoint reward--score pairs are
\[
 Z_j=\frac{y_{2j-1}-y_{2j}}2
       [S^p(X_{2j-1})-S^p(X_{2j})],\qquad j\in[n_0/2].
\]
Uniform selection independently of each menu and the past gives iid selected observations with density $p$. This independence is between observations; entries of one observation need not be independent. Denote the resulting structured fit, before the additional projection below, by $\widehat\Theta^0$. This is the fitted matrix or transformed-slice estimator, not the unregularized robust moment.

\begin{mlemma}[Signed Stein estimation from uniform observations]
\label{lem:nonmonotone-index}
Under the local assumptions above, with no signal lower bound needed for this lemma, the paired mean is $\mathbb E Z_j=\mu_*\Theta_*$, and the paired second-moment inequalities of Lemma~\ref{lem:paired} hold with the base score and the same $V$. For $\delta_1\in(0,1)$, let $d=\max(d_1,d_2)$. Use Equation~\eqref{eq:slice-fit}, with its matrix specialization when $q=1$, and the following tuning and radii:
\begin{equation}
\begin{gathered}
 \text{Matrix:}\quad
 \nu=\sqrt{\frac{4\log(2(d_1+d_2)/\delta_1)}{n_0VdM}},\qquad
 \lambda=4\sqrt{\frac{VdM\log(2(d_1+d_2)/\delta_1)}{n_0}},\\
 e_{n_0}^2=\frac{32VMrd\log(2(d_1+d_2)/\delta_1)}{n_0};\\[3pt]
 \text{Transformed slices:}\quad
 \nu=\sqrt{\frac{4\log(2q(d_1+d_2)/\delta_1)}{n_0VdM}},\qquad
 \lambda=4\sqrt{\frac{VdM\log(2q(d_1+d_2)/\delta_1)}{n_0}},\\
 e_{n_0}^2=\frac{32VMdr_U\log(2q(d_1+d_2)/\delta_1)}{n_0}.
\end{gathered}
\label{eq:nonmonotone-estimation}
\end{equation}
Then
\[
 \mathbb P\{\frobnorm{\widehat\Theta^0-\mu_*\Theta_*}\leq e_{n_0}\}
 \geq1-\delta_1.
\]
The matrix and slice tuning does not use their rank caps.
\end{mlemma}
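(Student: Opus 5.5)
This is a specialization of Theorem~\ref{thm:structured-estimation} to the uniform-selection law $p$ with $n=n_0/2$ pairs. The plan has three steps: verify the paired-moment hypotheses, supply a score-moment bound $w$, and check that the stated tuning and radius match.

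\emph{Step 1: the law and the paired moments.} Uniform selection, drawn independently of each menu and of the past, makes the $n_0$ selected observations iid with density $p$. The noise comes from the fixed conditional kernel, so the $n_0/2$ disjoint pairs are iid. Lemma~\ref{lem:paired} applies verbatim with the base score. It needs only the zero-extension regularity, bounded support, the Lipschitz link and finite noise variance; it uses neither monotonicity nor a signal lower bound. It therefore gives $\mathbb E Z_j=\mu_*\Theta_*$, with $\mu_*$ signed or zero. It also gives the left and right raw second-moment bounds in terms of $V$ and the score moments, for the matrix case and, via the linear map $\mathcal L(Z)=(\mathcal T_UZ)^{(k)}$, for every transformed slice.

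\emph{Step 2: choice of $w$.} We need $\max\{\norm{\mathbb E HH^\top}_{\rm op},\norm{\mathbb E H^\top H}_{\rm op}\}\le w$ for the base score. The row/column computation of Appendix~\ref{app:score-moments}, applied to $p$ itself with no selection correction, gives
\[
 a^\top\mathbb E[S^p(S^p)^\top]a=\sum_b\mathbb E\inner{S^p}{ae_b^\top}^2\le d_2M .
\]
The analogous bound with $d_1M$ holds on the right. For transformed slices, orthogonality of $\mathcal T_U$ preserves the joint bound $MI_N$, and extracting slice $k$ gives the same bounds. Hence $w=dM$ is valid, and no factor $q$ enters.

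\emph{Step 3: matching constants.} Substitute $n=n_0/2$ and $w=dM$ into the tuning of Theorem~\ref{thm:structured-estimation} (equivalently \eqref{eq:matrix-tuning} or \eqref{eq:slice-tuning}), with $\delta=\delta_1$. This gives
\[
 \nu=\sqrt{2\log(\cdot)/((n_0/2)VdM)}=\sqrt{4\log(\cdot)/(n_0VdM)} .
\]
The same substitution gives $\lambda=2\sqrt{2VdM\log(\cdot)/(n_0/2)}=4\sqrt{VdM\log(\cdot)/n_0}$. The error bound becomes
\[
 16r_UVdM\log(\cdot)/(n_0/2)=32VMdr_U\log(\cdot)/n_0 ,
\]
with $r_U$ replaced by $r$ in the matrix case. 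These are exactly the stated values, holding with probability at least $1-\delta_1$. The tuning involves only $V$, $d$, $M$, $n_0$, $\delta_1$ and the dilation dimension, so no rank cap is used. The rank enters only the analysis, through the soft-thresholding argument of Appendix~\ref{app:matrix-proof} applied per slice.

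\emph{Main obstacle.} The only real check is that Theorem~\ref{thm:structured-estimation} and its proof never used the sign of $\mu_*$. This holds because concentration is about the mean $\mu_*\Theta_*$. When $\mu_*<0$, this target still has rank at most $\rho_k$ per transformed slice, so soft thresholding applies unchanged. When $\mu_*=0$ the target is zero and the bound remains valid. The remaining points to confirm are that $\widehat\Theta^0$ is the fit before the $\mathcal B(L_fB)$ projection, and that $n_0\ge2$ ensures $n\ge1$.
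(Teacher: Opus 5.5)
Your proposal is correct and follows the paper's proof essentially step for step. You apply Lemma~\ref{lem:paired} with the base score (no sign or signal condition needed), take $w=dM$ from the row/column calculation of Appendix~\ref{app:score-moments} with no slice-count factor, and invoke Theorem~\ref{thm:structured-estimation} with $n=n_0/2$, noting that concentration and thresholding need only upper bounds on the target ranks, which still hold for negative or zero $\mu_*$; your explicit check of $\nu$, $\lambda$ and $e_{n_0}^2$ after the substitution is what the paper leaves implicit.
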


\noindent\textbf{Proof.}
Weak integration by parts gives $\mathbb ES^p(X)=0$ and $\mathbb E[yS^p(X)]=\mu_*\Theta_*$ without using the sign of $f'$, centering of $X$, or independence of its entries. Thus the proof of Lemma~\ref{lem:paired} applies with the base score: $\mathbb EZ_j=\mu_*\Theta_*$, and the raw second moments of $Z_j$ are at most $V$ times those of $S^p$. The row and column calculation of Appendix~\ref{app:score-moments} bounds the left and right base-score moments of every matrix or transformed slice by $dM$, so $w=dM$, without slice independence or a slice-count factor. The concentration and thresholding arguments of Appendices~\ref{app:matrix-proof}--\ref{app:slice-proof} use only these raw moments and upper bounds on the target ranks, which remain valid for negative or zero $\mu_*$. Applying Theorem~\ref{thm:structured-estimation} with $n=n_0/2$ and $w=dM$ gives Equation~\eqref{eq:nonmonotone-estimation}. The nonzero base-signal condition is needed for the subsequent prediction and regret analysis, not for this estimation lemma.

\subsection{Projection, Scalar Approximation, and T-BSTOR}
\label{app:nonmonotone-algorithm}

For the estimation branch with even $2\leq n_0<T$, project the fitted index onto a closed convex entrywise ball:
\begin{equation}
 \widehat\Theta=\mathop{\arg\min}_{Z:\,\entone Z\leq L_fB}
                  \frobnorm{Z-\widehat\Theta^0}^2,
 \qquad \mathcal I=[-AL_fB,AL_fB],\quad W=2AL_fB.
 \label{eq:nonmonotone-projection}
\end{equation}
Since $\entone{\mu_*\Theta_*}\leq |\mu_*|B\leq L_fB$, the Euclidean projection property gives $\frobnorm{\widehat\Theta-\mu_*\Theta_*}\leq \frobnorm{\widehat\Theta^0-\mu_*\Theta_*}$. Projection may change the fitted ranks; no subsequent step uses those ranks. For every candidate, $|\inner X{\widehat\Theta}|\leq AL_fB$, even when index estimation fails.

Condition on the exploration history and put $E=\widehat\Theta-\mu_*\Theta_*$ and $c_E=\inner mE$. For analysis only, define the shifted link
\[
 g_E(u)=f\!\left(\operatorname{clip}_{[-AB,AB]}
                   \left(\frac{u-c_E}{\mu_*}\right)\right),
 \qquad G=\frac{L_f}{|\mu_*|}\leq\frac{L_f}\gamma.
 \label{eq:nonmonotone-scalar-link}
\]
The function $g_E$ is fixed conditional on exploration and is $G$-Lipschitz. Since
\[
 \frac{\inner X{\widehat\Theta}-c_E}{\mu_*}
 =\inner X{\Theta_*}+\frac{\inner{X-m}E}{\mu_*},
\]
and the true index belongs to $[-AB,AB]$, nonexpansiveness of clipping around that index gives
\begin{equation}
 \left|f(\inner X{\Theta_*})-g_E(\inner X{\widehat\Theta})\right|
 \leq G|\inner{X-m}E|.
 \label{eq:nonmonotone-scalar-shift-error}
\end{equation}
This holds for either sign of $\mu_*$. The candidate mean contributes a common horizontal shift of the unknown scalar link. The learner does not compute $m,E,c_E,\mu_*$, or $g_E$: it continues to bin the uncentered fitted indices in the same interval $\mathcal I$. No feature centering, whitening, extra intercept, or score change is required.

The algorithm takes $T,K,A,B,L_f,\sigma^2,M$, exact base-score access, the chosen structural fit and its required structural inputs, a deterministic actual exploration count $n_0\leq T$, with either $n_0=T$ or even $2\leq n_0<T$, a bin width $w_{\rm bin}>0$, and confidence allocations $\delta_1,\delta_2,\delta_3\in(0,1)$ with $\delta_1+\delta_2+\delta_3<1$. The allocation $\delta_2$ is used only to report the prediction bound. The radii $e_{n_0}$, $\gamma$, $\mu_*$, $g_E$, and the approximation allowance below are not action-rule inputs. If $n_0=T$, the algorithm selects uniformly throughout and performs no fit or bin construction. This branch permits odd $T$. The following definitions apply only when $n_0<T$.

Put $Q=T-n_0>0$ and $J_{\rm bin}=\lceil W/w_{\rm bin}\rceil$. Partition $\mathcal I$ into $J_{\rm bin}$ equal bins $I_b$ of length $W/J_{\rm bin}\leq w_{\rm bin}$, with centers $c_b$. The bins are left-closed and right-open, except that $I_{J_{\rm bin}}$ includes the right endpoint. Let $k_{\rm blk}$ be the smallest odd positive integer satisfying
\begin{equation}
 k_{\rm blk}\geq
 \max\left\{1,\frac2{\log2}\log\frac{J_{\rm bin}T}{\delta_3}\right\}.
 \label{eq:nonmonotone-block-count}
\end{equation}
For second-phase round $t$, let $b_t$ denote the chosen bin and $n_b(t)=\sum_{u=n_0+1}^t\mathbf1\{b_u=b\}$ its selection count through round $t$, with $n_b(n_0)=0$. For $s\geq k_{\rm blk}$, its confidence score is
\begin{equation}
 U_{b,s}=\widehat m_{b,s}+\operatorname{rad}(s),\qquad
 \operatorname{rad}(s)=4\sigma\sqrt{k_{\rm blk}/s}.
 \label{eq:nonmonotone-bin-score}
\end{equation}
Here $\ell_s=\lfloor s/k_{\rm blk}\rfloor$ and the median uses the first $k_{\rm blk}\ell_s$ stored rewards in $k_{\rm blk}$ consecutive equal blocks:
\begin{equation}
 \widehat m_{b,s}=\operatorname{median}_{1\leq j\leq k_{\rm blk}}
 \left\{\frac1{\ell_s}
   \sum_{a=(j-1)\ell_s+1}^{j\ell_s}y_{b,a}\right\}.
 \label{eq:nonmonotone-mom}
\end{equation}
The remainder is ignored for this evaluation and retained in the reward list. Uniform draws below are independent of the current menu and past.

\begin{algorithm}[!ht]
\caption{T-BSTOR: uniform structured exploration and robust available-bin learning.}
\label{alg:nonmonotone}
\begin{algorithmic}[1]
\algrenewcommand{\algorithmicrequire}{\textbf{Input:}}
\Require Model, structural fit, and tuning inputs specified above.
\If{$n_0=T$}
  \State Select uniformly and independently from each fresh menu for all $T$ rounds
  \State \Return
\EndIf
\State Clear the exploration buffer
\For{$t=1,\ldots,n_0$}
  \State Observe $\mathcal X_t$; draw $a\sim\operatorname{Unif}([K])$ and play $X_{t,a}$
  \State Observe $y_t$ and store $(y_t,S^p(X_{t,a}))$
\EndFor
\State Form $n_0/2$ disjoint pairs by \eqref{eq:paired-observation} with the base score
\State Compute the regularized fit $\widehat\Theta^0$ with tuning \eqref{eq:nonmonotone-estimation}
\State Project $\widehat\Theta^0$ by \eqref{eq:nonmonotone-projection} and freeze $\widehat\Theta$
\State Set $Q,J_{\rm bin}$, bins $I_b$, and $k_{\rm blk}$ as above; initialize empty reward lists and $n_b(n_0)=0$
\For{$t=n_0+1,\ldots,T$}
  \State Observe $\mathcal X_t$ and compute $z_{t,a}=\inner{X_{t,a}}{\widehat\Theta}$ for every $a\in[K]$
  \State $\mathcal A_t\leftarrow\{b:\text{some }z_{t,a}\in I_b\}$ \Comment{Currently available bins}
  \If{some $b\in\mathcal A_t$ has $n_b(t-1)<k_{\rm blk}$}
    \State $b_t\leftarrow\min\{b\in\mathcal A_t:n_b(t-1)<k_{\rm blk}\}$
  \Else
    \State Evaluate $U_{b,n_b(t-1)}$ by \eqref{eq:nonmonotone-bin-score}--\eqref{eq:nonmonotone-mom} for $b\in\mathcal A_t$
    \State Choose $b_t$ maximizing this score, breaking ties by smallest bin number
  \EndIf
  \State $a\leftarrow\min\{a:z_{t,a}\in I_{b_t}\}$; play $X_{t,a}$ and observe $y_t$
  \State Append $y_t$ to bin $b_t$'s list and increment only its count
\EndFor
\end{algorithmic}
\end{algorithm}

Only currently available bins are compared. Initialization selects a bin with fewer than $k_{\rm blk}$ rewards, so each bin receives at most $k_{\rm blk}$ such selections. Their total count is at most $\min\{T-n_0,J_{\rm bin}k_{\rm blk}\}$ selections; these selections need not form a contiguous phase because bins become available at different times.

\begin{mlemma}[Uniform prediction and bin approximation]
\label{lem:nonmonotone-prediction}
Let $Q\geq1$ and use a frozen projected estimate satisfying $\mathbb P\{\frobnorm{\widehat\Theta-\mu_*\Theta_*}\leq e_{n_0}\}\geq1-\delta_1$, as provided by Lemma~\ref{lem:nonmonotone-index} and projection. Set
\[
 \varepsilon_{\rm pred}=\frac{L_fA\kappa_X}{\gamma}e_{n_0}
            \sqrt{2\log\frac{2KQ}{\delta_2}},\qquad
 \epsilon_{\rm bin}=\varepsilon_{\rm pred}+\frac{Gw_{\rm bin}}{2},\qquad m_b=g_E(c_b).
 \label{eq:nonmonotone-approximation}
\]
With probability at least $1-\delta_1-\delta_2$, every future candidate whose fitted index falls in bin $b$ satisfies
\[
 |f(\inner X{\Theta_*})-m_b|\leq\epsilon_{\rm bin}.
\]
\end{mlemma}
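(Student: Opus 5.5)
Work conditionally on the exploration history, so that $\widehat\Theta$, $E=\widehat\Theta-\mu_*\Theta_*$, $c_E$ and $g_E$ are frozen. All second-phase menus are fresh and independent of this history. Let $\mathcal E_1=\{\frobnorm E\leq e_{n_0}\}$; by Lemma~\ref{lem:nonmonotone-index} and the projection property, $\mathbb P(\mathcal E_1^c)\leq\delta_1$. The proof splits the target error into two parts: a prediction error $|f(\inner X{\Theta_*})-g_E(\inner X{\widehat\Theta})|$, controlled by \eqref{eq:nonmonotone-scalar-shift-error}, and a within-bin error $|g_E(\inner X{\widehat\Theta})-g_E(c_b)|$, controlled by Lipschitz continuity of $g_E$.

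\emph{Within-bin term.} If $\inner X{\widehat\Theta}\in I_b$, then $|\inner X{\widehat\Theta}-c_b|\leq (W/J_{\rm bin})/2\leq w_{\rm bin}/2$, since the bins partition $\mathcal I$ into equal pieces and every fitted index lies in $\mathcal I$ after projection. Because $g_E$ is $G$-Lipschitz, this term is at most $Gw_{\rm bin}/2$. It holds deterministically, on every history.

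\emph{Prediction term.} By \eqref{eq:nonmonotone-scalar-shift-error}, it suffices to show that with conditional probability at least $1-\delta_2$ every future candidate $X_{t,a}$, for $t\in\{n_0+1,\ldots,T\}$ and $a\in[K]$, satisfies $|\inner{X_{t,a}-m}E|\leq A\kappa_X\frobnorm E\sqrt{2\log(2KQ/\delta_2)}$. There are exactly $KQ$ such candidates, each drawn from $p$ and independent of $E$ given the exploration history. Applying \eqref{eq:nonmonotone-joint-concentration} with $Z=E$ and a Chernoff bound gives, for each candidate, the two-sided tail $2\exp(-u^2/(2A^2\kappa_X^2\frobnorm E^2))$. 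Setting $u$ as above makes each tail at most $\delta_2/(KQ)$, and a union bound over the $KQ$ candidates gives total failure probability $\delta_2$. No independence among candidates within a menu, or across rounds, is needed for the union bound.

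\emph{Combining.} The only point requiring care is that the union bound must be taken over all candidates rather than over the adaptively played ones, so that the adaptive choice of bins cannot bias the bound. On $\mathcal E_1$ intersected with this event, $G\leq L_f/\gamma$ and $\frobnorm E\leq e_{n_0}$ give $G|\inner{X-m}E|\leq\varepsilon_{\rm pred}$. Adding the within-bin term yields $\epsilon_{\rm bin}$. Integrating over the exploration history, the combined failure probability is at most $\delta_1+\delta_2$.

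For $\Theta_*=0$ the statement would involve $\mu_*=0$, which is excluded by the nonzero-parameter assumption. I therefore restrict to $\Theta_*\neq0$, where $|\mu_*|\geq\gamma$ makes $g_E$ well defined.
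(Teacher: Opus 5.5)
Your proof is correct and follows the paper's argument: condition on the exploration history so that $E$ is frozen, apply the projection certificate with a Chernoff bound in that fixed direction, take a union bound over all $KQ$ future candidate arrays, convert through the shift-error inequality with $G\le L_f/\gamma$, add the deterministic within-bin term $Gw_{\rm bin}/2$, and integrate the $\delta_2$ conditional failure together with the $\delta_1$ estimation failure. The only omission is the degenerate case $E=0$, where your tail formula divides by $\frobnorm{E}$; there the deviation $\inner{X-m}E$ is zero, so the case is trivial, as the paper notes.
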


\noindent\textbf{Proof.}
Conditional on the exploration history $\mathcal F_{n_0}$, the error $E=\widehat\Theta-\mu_*\Theta_*$ is fixed and future candidates are independent of that history with their original law $p$. Applying Equation~\eqref{eq:nonmonotone-joint-concentration} in the fixed direction $E$ and optimizing the exponential Markov bound gives, for $E\ne0$,
\[
 \mathbb P\{|\inner{X-m}E|>u\mid\mathcal F_{n_0}\}
 \leq2\exp\!\left(-\frac{u^2}{2A^2\kappa_X^2\frobnorm E^2}\right).
\]
If $E=0$, the projection error is zero directly. On the index-confidence event $\frobnorm E\leq e_{n_0}$, a union bound over the $KQ$ future candidates gives $\max_{t,a}|\inner{X_{t,a}-m}E|\leq A\kappa_Xe_{n_0}\sqrt{2\log(2KQ/\delta_2)}$, except with conditional probability at most $\delta_2$. Equation~\eqref{eq:nonmonotone-scalar-shift-error} bounds the resulting reward-prediction error by $\varepsilon_{\rm pred}$, since $G\leq L_f/\gamma$. Every bin center is at most $w_{\rm bin}/2$ from its indices. Adding this within-bin error and the estimation failure probability $\delta_1$ proves the assertion. The union is over candidate arrays, not directions: $E$ is fixed by conditioning. No covering argument or dimension factor beyond the supplied $\kappa_X$ is needed.

\subsection{Confidence for Adaptively Sampled Bins}
\label{app:nonmonotone-bin-proof}

Condition only on $\mathcal F_{n_0}$, which fixes the fitted index, partition, shifted link $g_E$, and reference means $m_b=g_E(c_b)$. Let $\mathcal H_t$ contain the past, current menu, action randomization, and selected action, before observing its reward. The retained causal noise kernel implies
\[
 \mathbb E[\eta_t\mid\mathcal H_t]=0,\qquad
 \mathbb E[\eta_t^2\mid\mathcal H_t]\leq\sigma^2.
 \label{eq:nonmonotone-causal-noise}
\]
The rewards assigned to a bin need not be iid or have a constant conditional mean. On the approximation event $\mathcal E_{\rm app}$ of Lemma~\ref{lem:nonmonotone-prediction}, their representation is
\[
 y_{b,a}=m_b+d_{b,a}+\eta_{b,a},\qquad
 |d_{b,a}|\leq\epsilon_{\rm bin}.
\]
The drift is determined before its own reward noise and may vary with the adaptively selected candidate.

\begin{mlemma}[Adaptive-bin confidence]
\label{lem:nonmonotone-mom}
For the block count in Equation~\eqref{eq:nonmonotone-block-count}, there is an event $\mathcal E_{\rm noise}$ satisfying $\mathbb P(\mathcal E_{\rm noise}^c\mid\mathcal F_{n_0})\leq\delta_3$ such that, on $\mathcal E_{\rm noise}\cap\mathcal E_{\rm app}$,
\begin{equation}
 |\widehat m_{b,s}-m_b|\leq\epsilon_{\rm bin}
                          +4\sigma\sqrt{k_{\rm blk}/s}
 \label{eq:nonmonotone-bin-confidence}
\end{equation}
simultaneously for every bin and every actual observed prefix $k_{\rm blk}\leq s\leq n_b(T)$.
\end{mlemma}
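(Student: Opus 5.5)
The plan is to separate the drift from the noise and to control the noise by a median-of-means argument. The argument works on the fixed reward sequence of each bin, indexed by arrival order within the bin, and uses a union bound over bins and prefix lengths.

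Fix a bin $b$. Let $\eta_{b,a}$ be the noise attached to its $a$th stored reward; it is defined on the full horizon, with $a\le T$. The $a$th arrival happens at a stopping time $\tau_{b,a}$. The sampled noises $\eta_{b,a}=\eta_{\tau_{b,a}}$ form a martingale difference sequence for the filtration $\mathcal G_{b,a}=\mathcal H_{\tau_{b,a+1}}$. By the causal noise property, each has conditional second moment at most $\sigma^2$. If fewer than $a$ selections occur, I pad with independent zero-mean variance-$\sigma^2$ variables. The padding lets statements be made for every $s\le T$; they are then applied to actual prefixes.

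On $\mathcal E_{\rm app}$, every block mean equals $m_b$ plus an average of drifts of size at most $\epsilon_{\rm bin}$, plus the block average $\bar\eta_j$ of the noises. Since the median is monotone under shifts, $|\widehat m_{b,s}-m_b|\le\epsilon_{\rm bin}+\operatorname{median}_j|\bar\eta_j|$. More precisely, the error is at most $\epsilon_{\rm bin}$ plus $u$ whenever fewer than half the blocks have $|\bar\eta_j|>u$. So it suffices to show that, with $\ell=\lfloor s/k_{\rm blk}\rfloor\ge s/(2k_{\rm blk})$ and $u=4\sigma\sqrt{k_{\rm blk}/s}$, at least $(k_{\rm blk}+1)/2$ blocks have $|\bar\eta_j|\le u$.

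For a single block, orthogonality of martingale increments gives $\mathbb E[\bar\eta_j^2\mid\text{start of block}]\le\sigma^2/\ell$. Chebyshev then bounds the conditional probability that the block is bad by $\sigma^2/(\ell u^2)\le 2k_{\rm blk}\sigma^2/(16\sigma^2 k_{\rm blk})=1/8$.

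This is the main obstacle: the blocks are not independent, so the usual binomial tail does not apply directly. I would use the sequential structure instead. Let $B_j$ indicate that block $j$ is bad. The blocks are consecutive, each $B_j$ is measurable at the end of block $j$, and $\mathbb P(B_j=1\mid \mathcal G\text{ at start of block }j)\le1/8$. The indicators are therefore stochastically dominated by a Bernoulli$(1/8)$ martingale sequence. A supermartingale argument using $\prod_j e^{\theta B_j}/(1-1/8+e^\theta/8)$ gives the binomial Chernoff bound. This yields
\[
\mathbb P\Bigl(\sum_j B_j\ge k_{\rm blk}/2\Bigr)\le \binom{k_{\rm blk}}{\lceil k_{\rm blk}/2\rceil}8^{-k_{\rm blk}/2}\le 2^{k_{\rm blk}}8^{-k_{\rm blk}/2}=2^{-k_{\rm blk}/2}.
\]
The padded variables preserve this conditional structure.

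Finally, union bound over the $J_{\rm bin}$ bins and the at most $T$ values of $s$. By \eqref{eq:nonmonotone-block-count}, $2^{-k_{\rm blk}/2}\le\delta_3/(J_{\rm bin}T)$, so the total failure conditional on $\mathcal F_{n_0}$ is at most $\delta_3$. Define $\mathcal E_{\rm noise}$ as the complement of this union. On $\mathcal E_{\rm noise}\cap\mathcal E_{\rm app}$, each actual observed prefix $k_{\rm blk}\le s\le n_b(T)$ coincides with the unpadded sequence, which gives \eqref{eq:nonmonotone-bin-confidence}.
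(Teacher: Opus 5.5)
Your proposal is correct and follows essentially the paper's route. It uses stopped noise martingale differences with padding beyond $T$, a conditional Chebyshev bound of $1/8$ per block, and a sequential-conditioning argument giving $2^{k_{\rm blk}}8^{-k_{\rm blk}/2}=2^{-k_{\rm blk}/2}$ without block independence, followed by a union bound over deterministic pairs $(b,s)$ and intersection with $\mathcal E_{\rm app}$ for the drift. The differences are cosmetic: you use a pre-reward stopped filtration and variance-$\sigma^2$ padding instead of the paper's $\mathcal F_{\tau_{b,a}}$ and zero padding. One attribution is off: your displayed $\binom{k_{\rm blk}}{\lceil k_{\rm blk}/2\rceil}8^{-k_{\rm blk}/2}$ comes from domination by $\mathrm{Bin}(k_{\rm blk},1/8)$, or from the paper's $8^{-a}$ bound on any specified set of bad blocks, whereas the exponential supermartingale you invoke gives $(7/16)^{k_{\rm blk}/2}$; that is also at most $2^{-k_{\rm blk}/2}$, so your conclusion stands.
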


\noindent\textbf{Proof: adaptive times and padding.}
For each bin $b$, consider its chronological noise martingale and count
\[
 W_t^b=\sum_{u=n_0+1}^{\min(t,T)}\mathbf1\{b_u=b\}\eta_u,
 \qquad C_t^b=\sum_{u=n_0+1}^{\min(t,T)}\mathbf1\{b_u=b\}.
\]
Conditioning first on $\mathcal H_t$ proves that $W_t^b$ is a square-integrable martingale and $(W_t^b)^2-\sigma^2C_t^b$ is a supermartingale with respect to the chronological post-reward history. The selection indicator need not be measurable before the current menu; its measurability in $\mathcal H_t$ is sufficient.

A bin may never receive a prescribed number of observations. For the proof only, after time $T$ pad its sequence with rewards $m_b$, zero drift, and zero noise, one per artificial round. The padded $a$th selection time $\tau_{b,a}$ is a stopping time bounded by $T+a$; the post-horizon history contains no new randomness. Take $\tau_{b,0}=n_0$ and $\mathcal G_{b,a}=\mathcal F_{\tau_{b,a}}$. Optional sampling of the two processes above gives
\begin{equation}
 \mathbb E[\eta_{b,a}\mid\mathcal G_{b,a-1}]=0,\qquad
 \mathbb E[\eta_{b,a}^2\mid\mathcal G_{b,a-1}]
 \leq\sigma^2.
 \label{eq:nonmonotone-stopped-noise}
\end{equation}
Indeed, $W^b_{\tau_{b,a}}-W^b_{\tau_{b,a-1}}$ is this noise increment, and $C^b$ increases by at most one between these times; martingale centering removes the cross term in the squared increment. All intervening menus and other bins' rewards are included in the stopped history. No conditioning on the eventual number of pulls is used, and padding never changes an actual observed prefix.

\noindent\textbf{Proof: deterministic prefixes and dependent blocks.}
Fix a bin and a deterministic integer $s\in[k_{\rm blk},T]$. The block size $\ell_s=\lfloor s/k_{\rm blk}\rfloor$ is deterministic. For the noise average $\bar\eta_j$ in block $j$, iterating Equation~\eqref{eq:nonmonotone-stopped-noise} cancels all cross terms conditional on the block-start history, so
\[
 \mathbb E[\bar\eta_j^2\mid
     \mathcal G_{b,(j-1)\ell_s}]\leq\frac{\sigma^2}{\ell_s}.
\]
Conditional Chebyshev bounds the probability of $|\bar\eta_j|>\sqrt{8\sigma^2/\ell_s}$ by $1/8$. When $\sigma=0$, all actual and padded noises are zero almost surely, and this conclusion holds directly. For any specified set of $a$ bad blocks, iterated conditioning in reverse temporal order bounds their joint probability by $8^{-a}$. Thus, without block independence,
\[
 \mathbb P\{\text{at least }(k_{\rm blk}+1)/2
       \text{ bad blocks}\mid\mathcal F_{n_0}\}
 \leq2^{k_{\rm blk}}8^{-k_{\rm blk}/2}
 =2^{-k_{\rm blk}/2}.
\]
There are at most $J_{\rm bin}T$ pairs $(b,s)$, all defined by the padded sequences. A union bound and Equation~\eqref{eq:nonmonotone-block-count} give an event of conditional failure probability at most $\delta_3$ on which every such prefix has a strict majority of good blocks. This event covers the random counts used by the algorithm because it was constructed for deterministic prefix sizes.

Now intersect with $\mathcal E_{\rm app}$. Every actual block's average drift has magnitude at most $\epsilon_{\rm bin}$, so a strict majority of its reward block averages lie within $\epsilon_{\rm bin}+\sqrt{8\sigma^2/\ell_s}$ of $m_b$. Their median does too. Since $s\geq k_{\rm blk}$ implies $\ell_s\geq s/(2k_{\rm blk})$, this radius is at most that in Equation~\eqref{eq:nonmonotone-bin-confidence}. The noise event was proved conditional only on exploration history, not on $\mathcal E_{\rm app}$ or all future menus. The final intersection uses a union bound, not independence of the two events.

\subsection{Upper Regret Bound and Exploration Schedule}
\label{app:nonmonotone-upper}

\begin{mtheorem}[Uniform exploration and robust-bin regret]
\label{thm:nonmonotone-upper-full}
Under the local assumptions in Appendix~\ref{app:nonmonotone-model}, with $\Theta_*\ne0$, run Algorithm~\ref{alg:nonmonotone} with any deterministic even $n_0$ satisfying $2\leq n_0<T$, width $w_{\rm bin}>0$, and the stated confidence allocations. Use any deterministic valid radius $e_{n_0}$ with $\mathbb P\{\frobnorm{\widehat\Theta^0-\mu_*\Theta_*}\leq e_{n_0}\}\geq1-\delta_1$, as in Equation~\eqref{eq:nonmonotone-estimation}. With probability at least $1-\delta_1-\delta_2-\delta_3$,
\begin{equation*}
 \begin{split}
 R_T\leq\mathcal U_T:={}&\Delta n_0+\Delta\min\{Q,J_{\rm bin}k_{\rm blk}\}
       +16\sigma\sqrt{k_{\rm blk}J_{\rm bin}Q}\\
       &+4Q\varepsilon_{\rm pred}+2GQw_{\rm bin}.
 \end{split}
 \label{eq:nonmonotone-upper-full}
\end{equation*}
Consequently,
\begin{equation}
 \mathbb E R_T\leq
 \min\{\Delta T,\ \mathcal U_T+\Delta T(\delta_1+\delta_2+\delta_3)\}.
 \label{eq:nonmonotone-upper-expected}
\end{equation}
The unknown $G$ can be replaced by its upper bound $L_f/\gamma$ when reporting these bounds. The all-exploration branch $n_0=T$ has $R_T\leq\Delta T$ directly; it uses no $Q$-dependent logarithm or undefined estimator when $Q=0$.
\end{mtheorem}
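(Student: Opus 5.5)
We work on the intersection $\mathcal E=\mathcal E_{\rm app}\cap\mathcal E_{\rm noise}$. By Lemma~\ref{lem:nonmonotone-index} together with the projection, and by Lemmas~\ref{lem:nonmonotone-prediction} and~\ref{lem:nonmonotone-mom}, a union bound gives $\mathbb P(\mathcal E)\geq1-\delta_1-\delta_2-\delta_3$. This requires no independence between the events, since Lemma~\ref{lem:nonmonotone-mom} conditions only on $\mathcal F_{n_0}$.

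On $\mathcal E$ we split the rounds into three groups and bound each separately.
\begin{itemize}
\item The $n_0$ exploration rounds cost at most $\Delta$ each, giving $\Delta n_0$.
\item Second-phase initialization selections, where some available bin has fewer than $k_{\rm blk}$ rewards, number at most $\min\{Q,J_{\rm bin}k_{\rm blk}\}$ and cost at most $\Delta$ each.
\item The remaining UCB rounds are charged as follows.
\end{itemize}

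For a UCB round $t$, let $a_*$ be the realized-menu optimal candidate and $b_*$ its bin, which is available. Lemma~\ref{lem:nonmonotone-prediction} gives $f(\inner{X_{t,a_*}}{\Theta_*})\leq m_{b_*}+\epsilon_{\rm bin}$, and Lemma~\ref{lem:nonmonotone-mom} gives $m_{b_*}\leq U_{b_*,n_{b_*}(t-1)}+\epsilon_{\rm bin}$. Since $b_t$ maximizes the score over available bins, $U_{b_*}\leq U_{b_t}\leq m_{b_t}+\epsilon_{\rm bin}+2\operatorname{rad}(n_{b_t}(t-1))$. The played candidate lies in $b_t$, so its reward is at least $m_{b_t}-\epsilon_{\rm bin}$. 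Chaining these bounds gives
\[
r_t\leq4\epsilon_{\rm bin}+8\sigma\sqrt{k_{\rm blk}/n_{b_t}(t-1)}.
\]
With $\epsilon_{\rm bin}=\varepsilon_{\rm pred}+Gw_{\rm bin}/2$, summing the first term over at most $Q$ rounds yields $4Q\varepsilon_{\rm pred}+2GQw_{\rm bin}$.

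For the radius term, note that for each bin the counts $n_b(t-1)$ at successive UCB selections are distinct integers $s\geq k_{\rm blk}$. So $\sum_s s^{-1/2}\leq2\sqrt{n_b(T)}$. Then Cauchy--Schwarz over the $J_{\rm bin}$ bins with $\sum_bn_b\leq Q$ gives $16\sigma\sqrt{k_{\rm blk}J_{\rm bin}Q}$, matching $\mathcal U_T$.

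The expected bound~\eqref{eq:nonmonotone-upper-expected} follows by using $\mathcal U_T$ on $\mathcal E$, the deterministic cap $R_T\leq\Delta T$ on $\mathcal E^c$, and the cap $\Delta T$ overall. The branch $n_0=T$ is immediate from the same cap.

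The main obstacle is the UCB comparison step. The approximations of Lemma~\ref{lem:nonmonotone-prediction} must hold simultaneously for the optimal and chosen candidates, which is why they are taken uniformly over all $KQ$ future candidates. The confidence bounds of Lemma~\ref{lem:nonmonotone-mom} must apply at the random counts $n_b(t-1)$, which holds because that lemma is uniform over deterministic prefixes. We also need the median-of-means evaluation at count $s$ to use only rewards stored before round $t$, which holds by construction.
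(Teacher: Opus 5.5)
Your proposal is correct and follows essentially the same route as the paper's proof. The matching steps are the union bound over the approximation and noise events, the split into exploration, initialization, and UCB rounds, the chain giving $r_t\leq2\operatorname{rad}(s)+4\epsilon_{\rm bin}$, the distinct-count summation with Cauchy--Schwarz, and the deterministic cap $\Delta T$ off the good event.

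One small wording fix: the union bound needs no independence on its own, and conditioning only on $\mathcal F_{n_0}$ in Lemma~\ref{lem:nonmonotone-mom} is what lets its failure probability integrate to an unconditional $\delta_3$.
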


\noindent\textbf{Proof.}
If $n_0=T$, the per-round gap bound gives $R_T\leq\Delta T$ directly. Now take even $2\leq n_0<T$, so $Q>0$. Lemma~\ref{lem:nonmonotone-prediction} bounds the approximation-event failure probability by $\delta_1+\delta_2$. The conditional noise bound in Lemma~\ref{lem:nonmonotone-mom}, integrated over the exploration history, contributes at most $\delta_3$. Work on their intersection. The exploration phase costs at most $\Delta n_0$. At most $k_{\rm blk}$ initialization selections are charged to each bin, hence their total count is at most $\min\{Q,J_{\rm bin}k_{\rm blk}\}$.

On any other round, every available bin has at least $k_{\rm blk}$ observations. Let $b_\star$ contain a true reward-maximizing current candidate, let $b_t$ be selected, and write $s=n_{b_t}(t-1)$. The confidence event and maximization of Equation~\eqref{eq:nonmonotone-bin-score} give
\[
 m_{b_\star}\leq U_{b_\star,n_{b_\star}(t-1)}+\epsilon_{\rm bin}
 \leq U_{b_t,s}+\epsilon_{\rm bin}
 \leq m_{b_t}+2\operatorname{rad}(s)+2\epsilon_{\rm bin}.
\]
Both actual candidate rewards are within $\epsilon_{\rm bin}$ of their respective references, so $r_t\leq2\operatorname{rad}(s)+4\epsilon_{\rm bin}$. The allowance $\epsilon_{\rm bin}$ is common to every bin, which is why the algorithm need not add it to its scores or know its value. For each bin the pre-selection count takes each integer value at most once. Therefore
\[
 \sum_{\text{noninitialization }t}n_{b_t}(t-1)^{-1/2}
 \leq\sum_{b=1}^{J_{\rm bin}}\sum_{s=1}^{n_b(T)}s^{-1/2}
 \leq2\sum_{b=1}^{J_{\rm bin}}\sqrt{n_b(T)}\leq2\sqrt{J_{\rm bin}Q}.
\]
The confidence-width sum is at most $16\sigma\sqrt{k_{\rm blk}J_{\rm bin}Q}$. Finally, $4Q\epsilon_{\rm bin}=4Q\varepsilon_{\rm pred}+2GQw_{\rm bin}$. This proves the realized-menu mean-regret bound. The deterministic cap $R_T\leq\Delta T$ adds at most $\Delta T$ times the combined failure probability when taking expectation, proving Equation~\eqref{eq:nonmonotone-upper-expected}.

\paragraph{Explicit schedule and structural inputs.}
For a nonzero parameter class, supply $D\geq1$ satisfying
\begin{equation}
 D\geq rd\quad\text{for the matrix fit},\qquad
 D\geq dr_U\quad\text{for the transformed-slice fit}.
 \label{eq:nonmonotone-structural-dimensions}
\end{equation}
Set
\begin{equation}
 \begin{aligned}
 n_0&=\min\!\left\{T,\;2\left\lceil\frac{(\kappa_X^2D)^{1/3}T^{2/3}}2\right\rceil\right\},\\
 w_{\rm bin}&=WT^{-1/3},\qquad
 \delta_1=\delta_2=\delta_3=\frac1{3(T+1)^2}.
 \end{aligned}
 \label{eq:nonmonotone-schedule}
\end{equation}
If $n_0=T$, select uniformly throughout without fitting or initializing bins, even when $T$ is odd. Otherwise $2\leq n_0<T$ is even, so the algorithm fits using $n_0/2$ pairs and runs the bin policy for the remaining rounds. In particular, $T=1,2$ use the all-exploration branch. A known zero parameter class may use any policy and has zero regret.

The exact bounds above display every score, noise, signal, concentration, and confidence factor. To read their rate, hold $A,B,L_f,M,\gamma,\sigma,K,h$ fixed, retain $D$ and $\kappa_X$ explicitly, and suppress logarithms in dimensions and $T$. For the estimation branch with even $2\leq n_0<T$, Equation~\eqref{eq:nonmonotone-estimation} gives $e_{n_0}=\widetilde O(\sqrt{D/n_0})$ and the upper bound becomes
\[
 \mathbb E R_T=\widetilde O\!\left(
 n_0+T\kappa_X\sqrt{D/n_0}+Tw_{\rm bin}
       +\sqrt{T/w_{\rm bin}}+1/w_{\rm bin}\right).
\]
Under Equation~\eqref{eq:nonmonotone-schedule} in this branch, $J_{\rm bin}=\lceil T^{1/3}\rceil$, $k_{\rm blk}=O(\log T)$, and hence
\begin{equation}
 \mathbb E R_T=\widetilde O\!\left(
 \min\{T,[(\kappa_X^2D)^{1/3}+1]T^{2/3}\}\right).
 \label{eq:nonmonotone-rate}
\end{equation}
Before capping, rounding adds fewer than two rounds. If $n_0=T$, then $2\lceil(\kappa_X^2D)^{1/3}T^{2/3}/2\rceil\geq T$; use $R_T\leq\Delta T$ directly. Either $T\leq3$ or $(\kappa_X^2D)^{1/3}T^{2/3}>T-2\geq T/2$, so its linear cost obeys the same rate. The initialization cost is of order $T^{1/3}$ times a logarithm under this schedule and is explicitly retained in the finite-sample theorem.

\begin{mcorollary}[Sensitivity to the exploration schedule]
\label{cor:nonmonotone-schedule-sensitivity}
Retain the local model assumptions of Theorem~\ref{thm:nonmonotone-upper-full}. Let $D\geq1$ satisfy Equation~\eqref{eq:nonmonotone-structural-dimensions} and let $\kappa_X\geq1$ be a valid projection certificate. These are analytical bounds; the schedule may use a misspecified multiple of their product, without supplying rank caps to the fit. For an integer horizon $T\geq1$ and $\varrho>0$, set
\[
 a=\varrho^{1/3}(\kappa_X^2D)^{1/3}T^{2/3},\qquad
 n_0=\min\{T,\,2\lceil a/2\rceil\}.
\]
If $n_0<T$, keep the bin width and confidence allocations of Equation~\eqref{eq:nonmonotone-schedule}. If $n_0=T$, select uniformly throughout without fitting or constructing bins, including for odd $T$. Then
\[
 \mathbb E R_T=\widetilde O\!\left(
 \min\!\left\{T,\,
 \left[(\varrho^{1/3}+\varrho^{-1/6})(\kappa_X^2D)^{1/3}+1\right]
 T^{2/3}\right\}\right).
 \label{eq:nonmonotone-schedule-sensitivity}
\]
As in Equation~\eqref{eq:nonmonotone-rate}, the model quantities $A,B,L_f,M,\gamma,\sigma,K,h$ are held fixed and logarithms in dimensions and $T$ are suppressed; dependence on $\varrho,D,\kappa_X$ is explicit. A zero parameter still has zero regret.
\end{mcorollary}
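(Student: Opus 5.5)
The plan is to re-run the proof of Theorem~\ref{thm:nonmonotone-upper-full} with the misspecified exploration length $n_0$, and then optimize the resulting finite-sample bound only in the bin width. The width $w_{\rm bin}=WT^{-1/3}$ is kept from the main schedule.

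\textbf{All-exploration branch.} If $n_0=T$, use $R_T\leq\Delta T$ directly. The rate claim then follows from $2\lceil a/2\rceil\geq T$, which forces either $T\leq3$ or $a>T-2\geq T/2$. In the second case,
\[
 T\leq 2\varrho^{1/3}(\kappa_X^2D)^{1/3}T^{2/3}.
\]
Hence the linear cost is dominated by the $\varrho^{1/3}$ term inside the minimum, and the case $T\leq3$ is absorbed into the additive constant.

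\textbf{Estimation branch.} Otherwise $2\leq n_0<T$ is even, and we apply Equation~\eqref{eq:nonmonotone-upper-expected}. Two ingredients feed into it. First, Lemma~\ref{lem:nonmonotone-index} gives $e_{n_0}=\widetilde O(\sqrt{D/n_0})$ with $w=dM$; this uses $D\geq rd$ or $D\geq dr_U$, and the fit itself needs no rank caps. Second, Lemma~\ref{lem:nonmonotone-prediction} gives $\varepsilon_{\rm pred}=\widetilde O(\kappa_X\sqrt{D/n_0})$. Taking $\delta_1=\delta_2=\delta_3=1/(3(T+1)^2)$ makes the failure term $O(1)$.

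With $J_{\rm bin}=\lceil T^{1/3}\rceil$ and $k_{\rm blk}=O(\log T)$, the remaining terms are as follows:
\begin{itemize}
\item the initialization cost $\Delta J_{\rm bin}k_{\rm blk}$ is $\widetilde O(T^{1/3})$;
\item the confidence widths contribute $\widetilde O(\sqrt{T^{1/3}T})=\widetilde O(T^{2/3})$;
\item the binning error $2GQw_{\rm bin}$ is $O(T^{2/3})$.
\end{itemize}
This leaves the two schedule-dependent terms $\Delta n_0$ and $4Q\varepsilon_{\rm pred}$.

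\textbf{Schedule-dependent terms.} Since $n_0\leq a+2$, the exploration cost is
\[
 \Delta n_0\leq \Delta(a+2)=O\bigl(\varrho^{1/3}(\kappa_X^2D)^{1/3}T^{2/3}+1\bigr).
\]
Since $n_0\geq a$ and $Q\leq T$, the prediction term is
\[
 Q\varepsilon_{\rm pred}=\widetilde O\bigl(T\kappa_X\sqrt{D}\,a^{-1/2}\bigr)=\widetilde O\bigl(\varrho^{-1/6}(\kappa_X^2D)^{1/3}T^{2/3}\bigr).
\]
To check this, note $T\kappa_X D^{1/2}a^{-1/2}=\varrho^{-1/6}\kappa_X D^{1/2}(\kappa_X^2D)^{-1/6}T^{2/3}$, and $\kappa_X D^{1/2}(\kappa_X^2D)^{-1/6}=(\kappa_X^2D)^{1/3}$.

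Summing all terms and capping by $\Delta T$ gives the stated bound. The main care point is the $+1$ term. It must absorb the $T^{2/3}$ contributions from binning and confidence widths, which do not scale with $\kappa_X^2D\geq1$, together with the rounding of $n_0$. There is no genuine obstacle, since the argument is bookkeeping on top of Theorem~\ref{thm:nonmonotone-upper-full}.
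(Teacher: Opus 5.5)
Your proposal is correct and follows the paper's proof essentially step for step. You use the direct cap $R_T\leq\Delta T$ together with $a>T-2\geq T/2$ (or $T\leq3$) in the all-exploration branch. In the estimation branch you use $\Delta n_0\leq\Delta(a+2)$ and $T\kappa_X\sqrt{D/n_0}\leq\varrho^{-1/6}(\kappa_X^2D)^{1/3}T^{2/3}$ from $n_0\geq a$, absorbing the unchanged binning, confidence-width, initialization and rounding terms into the $+1$. The only difference is that you re-derive those terms from Theorem~\ref{thm:nonmonotone-upper-full}, whereas the paper quotes the general bound displayed above Equation~\eqref{eq:nonmonotone-rate}.
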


\noindent\textbf{Proof.}
In the estimation branch, $\max\{2,a\}\leq n_0<a+2$. Thus the exploration cost contributes at most $\Delta(a+2)$, and
\[
 T\kappa_X\sqrt{D/n_0}
 \leq \varrho^{-1/6}(\kappa_X^2D)^{1/3}T^{2/3}.
\]
The general bound above Equation~\eqref{eq:nonmonotone-rate}, with the unchanged bin settings, therefore gives the displayed rate: exploration contributes $\varrho^{1/3}$, estimation contributes $\varrho^{-1/6}$, and the scalar-learning and rounding terms are absorbed by the $+1$. The deterministic cap $R_T\leq\Delta T$ supplies the minimum with $T$. If $n_0=T$, use that direct cap without constructing an estimator or bins. Either $T\leq3$, or $2\lceil a/2\rceil\geq T$ implies $a>T-2\geq T/2$. Hence the linear cost is absorbed by the same rate, with the $+1$ covering the small horizons. This proves both branches.

For fixed $\varrho>0$ and fixed model quantities, schedule misspecification changes the prefactor but preserves the $T^{2/3}$ horizon exponent. This is not uniform over factors that deteriorate with $T$: a bounded exploration count can leave an order-$T$ estimation term. Scheduling without $\kappa_X$ corresponds to $\varrho=\kappa_X^{-2}$ and gives leading factor $(1+\kappa_X)D^{1/3}$. Scheduling without either input corresponds to $\varrho=(\kappa_X^2D)^{-1}$ and gives $1+\kappa_X\sqrt D$, in each case with the existing binning contribution and linear cap. This corollary does not relax the input assumptions of Theorem~\ref{thm:nonmonotone-main}.

The schedule uses $D$ and $\kappa_X$ only to choose $n_0$; the fits need no rank caps. The analogous vectorized analysis has leading factor $(\kappa_X^2N)^{1/3}$ in place of $(\kappa_X^2D)^{1/3}$, a ratio of $(D/N)^{1/3}$ under the same certificate and comparable model constants. With the universal $\kappa_X=\sqrt N$, the structured rate is $\widetilde O(\min\{T,[(ND)^{1/3}+1]T^{2/3}\})$. These comparisons concern upper bounds, and the exponent $2/3$ treats the dimensions, $\kappa_X$ and the model and signal constants as fixed. The bounds do not cover links with zero base first-order signal.

\subsection{A Fixed-Menu Lower Bound for Arbitrary Links}
\label{app:nonmonotone-lower}

The lower bound below fixes the structured parameter and varies only the link. It holds for every candidate law satisfying the local assumptions of Appendix~\ref{app:nonmonotone-model}; the proof uses only the support bound, the zero-extension $W^{1,1}$ regularity and the score certificate $M$, through the projection-density bound~\eqref{eq:projection-peak}. It therefore matches the horizon exponent of Theorem~\ref{thm:nonmonotone-main} for each design at fixed dimensions, menu size, and model/design constants, though not its dependence on $\kappa_X$, $D$, or the law of the disclosed index.

\begin{mtheorem}[Fixed-$K$ horizon lower bound]
\label{thm:nonmonotone-lower}
Fix finite dimensions, $K\geq3$, $A,B,L_f,\sigma>0$, and $0<\gamma<L_f$, all independent of $T$. Let $p$ satisfy the local assumptions of Appendix~\ref{app:nonmonotone-model}. Let $\mathcal C$ be any of the parameter classes in Section~\ref{sec:notation-structures} containing a nonzero parameter, and fix $0\ne\Theta_0\in\mathcal C$. There is a fixed class $\mathcal F_0=\mathcal F_0(p,\Theta_0)$ of $C^\infty$ links whose derivatives change sign on the actual index support, such that every $f\in\mathcal F_0$ is $L_f$-Lipschitz on $[-AB,AB]$ and satisfies
\[
 \mathbb E_{p}f'(\inner X{\Theta_0})\geq\gamma.
\]
For rewards $y_t=f(\inner{X_t}{\Theta_0})+\eta_t$ with independent $\mathcal N(0,\sigma^2)$ noise, there exist $c>0$ and an integer $T_0\geq1$, depending only on the fixed quantities above, $p$ and $\Theta_0$, such that
\begin{equation}
 \inf_\pi\sup_{f\in\mathcal F_0}
       \mathbb E_{\Theta_0,f,\pi}R_T\geq cT^{2/3}
       \qquad(T\geq T_0).
 \label{eq:nonmonotone-lower-rate}
\end{equation}
The policies may know $\Theta_0$, the candidate law, and the link class. The class can additionally satisfy $\inf_{v:\frobnorm v=1\text{ or }v=0} \mathbb E_{p^v}f'(\inner X{\Theta_0})\geq\gamma$, where $p^v$ denotes the law in Equation~\eqref{eq:selected-score} for a frozen direction $v$, and $p^0=p$. No uniform bound on higher derivatives is imposed.
\end{mtheorem}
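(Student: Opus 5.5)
Since $\Theta_0$ is disclosed, the learner observes the true index $z_{t,a}=\inner{X_{t,a}}{\Theta_0}$ of every candidate, so the problem reduces to a one-dimensional Lipschitz contextual problem in which the contexts are $K$ iid draws from the index law. Lemma~\ref{lem:lower-menu} shows that this law has a continuous density $q$ bounded by $\sqrt M/(2\frobnorm{\Theta_0})$. The plan is a standard bump-packing argument. We take a fixed base link $f_0$ and add $m\asymp T^{1/3}$ disjoint small perturbations of height $\epsilon\asymp T^{-1/3}$. Regret then comes from failing to identify which bump is present, and this must be done while keeping the Stein signal above $\gamma$.

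\textbf{Construction.} Pick an interval $J=[u_0,u_1]$ in the interior of the index support on which $q\geq q_0>0$; such an interval exists by continuity, since $q$ is not identically zero. Let $f_0$ be a $C^\infty$ link that is nearly flat on $J$, say constant there, with $\mathbb E_p f_0'\geq(\gamma+L_f)/2$. For instance, make $f_0$ increase with slope close to $L_f$ outside $J$. This is possible because $\gamma<L_f$ and $J$ can be chosen with small $q$-mass, which leaves slack. It also guarantees that $f_0'$ changes sign once the bumps are added. Divide $J$ into $m$ cells of width $h=|J|/m$. For $\theta\in\{0\}\cup[m]$, define $f_\theta=f_0+\epsilon\phi((u-c_\theta)/h)$, where $\phi$ is a smooth nonnegative bump with $\|\phi'\|_\infty\le1$ and $\epsilon\asymp h$ is small enough that the Lipschitz constant stays below $L_f$. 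Because $\int\phi'=0$, the added term changes $\mathbb E f'$ by at most $\epsilon\|q'\|$-type quantities. To avoid needing smoothness of $q$, I would instead bound the change by $(\epsilon/h)\,h\,\|q\|_\infty=O(\epsilon)$, which is smaller than the slack $(L_f-\gamma)/2$ for $T\geq T_0$. The same bound holds under every selected law $p^v$, since its index density is at most $K\|q\|_\infty$; this gives the uniform-signal strengthening. Derivatives change sign on the bump interiors.

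\textbf{Regret and information.} Under $f_\theta$ with $\theta\ne0$, a candidate whose index lies in cell $\theta$ is optimal by margin about $\epsilon$ over every candidate in $J$ outside that cell. I will arrange that $f_0$ on $J$ is at least the value outside $J$ near the top, or simply restrict attention to menus in which all $K$ candidates fall in $J$; this event has probability at least $(q_0|J|)^K>0$, which is fixed. The cell of $\theta$ is present in such a menu with probability of order $Kq_0h$. Consequently, each round in which the learner does not play cell $\theta$, while cell $\theta$ is present, costs order $\epsilon$ on a positive-probability event. Let $N_\theta$ be the number of plays with index in cell $\theta$. By the Gaussian chain rule, $\mathrm{KL}(P_0\|P_\theta)\le\mathbb E_0N_\theta\,\epsilon^2/(2\sigma^2)$. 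Averaging over $\theta$ gives $\sum_\theta\mathbb E_0N_\theta\le T$. Pinsker's inequality then shows that, for a typical $\theta$, the learner plays cell $\theta$ only about $\mathbb E_0N_\theta+T\epsilon\sqrt{T/m}/\sigma$ times. Meanwhile, cell $\theta$ appears in roughly $T h$ rounds, so regret is of order $\epsilon\,(Th-T/m-T\epsilon\sqrt{T/m})$. With $h\asymp1/m$ this requires balancing the exploration cost against the missed-opportunity cost. The usual choice $m\asymp T^{1/3}$, $\epsilon\asymp T^{-1/3}$ yields $cT^{2/3}$: the learner either pays about $\epsilon$ per round across all $T$ rounds of not locating the bump, or pays order $\epsilon T$ to explore enough to find it.

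\textbf{Main obstacle.} The hardest step is the counting. Playing cell $\theta$ is only possible when the cell is present in the menu, and regret accrues only in those rounds, so the missed mass per cell is $Th$ rather than $T$. This gives regret $\gtrsim\epsilon T h\cdot m\cdot$(fraction missed), that is $\epsilon T$ overall, provided the fraction of missed rounds is constant. I would handle this by comparing $P_0$ with a uniform prior over $\theta$ and applying Pinsker's inequality to the count of rounds in which the cell is present but not chosen. This follows the structure of the Dey et al.\ argument, but with $K$ and $\gamma$ fixed, so that all menu-dependent probabilities become fixed constants absorbed into $c$ and $T_0$.
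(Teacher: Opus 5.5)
Your preliminaries are sound: the bounded continuous index density, the single-bump signal estimate, and restricting to menus inside $J$ as comparison menus. The gap is the hypothesis family. With one bump per link, this fresh-menu problem cannot be forced to $T^{2/3}$ regret.

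Under $f_\theta$ the perturbation matters only on rounds whose menu contains an index in cell $\theta$. That event has probability at most $K\norm{q}_\infty h$. So the policy that plays greedily for the known base link $f_0$ satisfies $\mathbb E_\theta R_T\le \epsilon K\norm{q}_\infty hT$ for every $\theta\in\{0\}\cup[m]$. At your scaling $\epsilon\asymp h\asymp T^{-1/3}$ this is of order $T^{1/3}$.

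More generally, a single bump can cause regret at most of order $\epsilon$ times its exposure $\asymp Th$. Keeping $P_0$ and $P_\theta$ indistinguishable in your KL bound requires $\epsilon^2Th\lesssim\sigma^2$. Hence a one-bump argument certifies at most order $\sigma\sqrt{Th}\le\sigma\sqrt T$.

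The intuition that the learner pays $\epsilon$ in every round until it locates the bump comes from continuum-armed bandits, where the bump location can be queried each round. Here it can be played only when a fresh menu offers it. Your last paragraph correctly notes that the missed mass per cell is $Th$. However, the factor $m$ you then multiply by has no source: under $f_\theta$ only cell $\theta$ is perturbed, and the uniform prior averages the per-cell losses rather than summing them. The displayed quantity $\epsilon(Th-T/m-T\epsilon\sqrt{T/m})$ is not positive either, since $Th\asymp T/m$ and $T\epsilon\sqrt{T/m}\asymp T$.

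The missing idea is to perturb all $m\asymp T^{1/3}$ cells simultaneously with independent unknown signs, as the paper does. Its links are $f_{\omega,w}=\mathfrak b+a_0w\sum_j\omega_j\varphi((z-z_j)/w)$ with $\omega\in\{-1,1\}^{m_w}$, where $w$ is the bump width. The bumps sit where a nondecreasing baseline $\mathfrak b$ is already flat at its top value, and a disjoint anchor interval lies at that same level.

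\emph{Information.} For sign-neighbours, menu exposure alone gives $\mathrm{KL}\le 2a_0^2Kp_+Tw^3/\sigma^2\le 1/8$ for every policy, where $p_+$ bounds the index density. No averaging over cells is needed.

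\emph{Testing.} Consider the disjoint informative menus in which exactly one index lies in the core of cell $j$ and the other $K-1$ lie in the anchor interval. There the two signs make opposite choices correct. The testing error is at least $1-\mathrm{TV}\ge 3/4$, each error costs $a_0w$, and the event has probability $\gtrsim w$. Summing over $T$ rounds and $m_w\asymp 1/w$ cells gives order $Tw\asymp T^{2/3}$.

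\emph{Why signs.} With nonnegative bumps against a flat comparison level, always playing the bump-cell candidate costs nothing when the bump is absent, so the test has no force.

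\emph{Signal.} With $m_w$ bumps present, your per-bump signal estimate sums to an $O(1)$ quantity rather than $O(\epsilon)$. You then need either a bump region of small index mass, or the paper's route. There, Stein integration by parts gives $|\mathbb E_{p^v}B'_{\omega,w}(Z)|\le a_0w\sqrt{MG_K(1)}/\frobnorm{\Theta_0}$ uniformly over projection-greedy laws. The baseline contributes at least $d_0\alpha^K>\gamma$ under every selection rule.
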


\paragraph{The projected density.}
Every class of Section~\ref{sec:notation-structures} with a nonzero cap contains a nonzero parameter: a coordinate spike for matrices, or the pull-back through $\mathcal T_U^{-1}$ of a single nonzero transformed entry for transformed slices, scaled to meet the budget. The proof applies to any fixed $0\ne\Theta_0\in\mathcal C$. Write $\theta_0=\vecop(\Theta_0)$, $r_0=\frobnorm{\Theta_0}>0$, $b_0=A\entone{\Theta_0}\leq AB$, and $Z=\inner X{\Theta_0}$, so $|Z|\leq b_0$; let $b^*=\operatorname{ess\,sup}Z\leq b_0$. By Equation~\eqref{eq:projection-peak}, applied to the unit direction $\theta_0/r_0$ and rescaled as in Lemma~\ref{lem:lower-menu}, $Z$ has a continuous density $p_Z$ with
\[
 0<p_+:=\norm{p_Z}_\infty\leq\frac{\sqrt M}{2r_0}.
\]
Since $Z$ has no atoms, $\Pr(Z\leq u)<1$ for every $u<b^*$ and $\Pr(Z\leq u)\uparrow1$ as $u\uparrow b^*$. For every $u<b^*$, $\Pr(Z>u)>0$, so continuity gives a point $z^*>u$ with $p_Z(z^*)>0$ and some $\eta>0$ such that $z^*-2\eta>u$ and $p_Z\geq p_Z(z^*)/2$ on $[z^*-2\eta,z^*+2\eta]$. This interval lies in $(u,b^*]$, since $\Pr(Z>b^*)=0$. Under the product-beta law~\eqref{eq:beta-density}, $p_Z>0$ on $(-b_0,b_0)$, so $b^*=b_0$, and $p_+\leq315/(256A\max_i|\theta_{0,i}|)$.

\paragraph{A flat baseline, signed bumps, and an anchor.}
Choose a slope $d_0\in(\gamma,L_f)$, then a point $u_0<b^*$ such that $d_0\alpha^K>\gamma$, where $\alpha=\Pr(Z\leq u_0)$. This is possible because $\Pr(Z\leq u)\uparrow1$ as $u\uparrow b^*$. Then choose $u_1\in(u_0,b^*)$. Choose a smooth cutoff $\zeta:\mathbb R\to[0,1]$ equal to one on $[-AB,u_0]$ and zero on $[u_1,AB]$, and define the baseline $\mathfrak b(z)=d_0\int_{u_1}^z\zeta(s)\,ds$. The cutoff can have compact support and remain zero for $z\geq u_1$. Thus $\mathfrak b'=d_0$ on $[-AB,u_0]$, $\mathfrak b'\geq0$, and $\mathfrak b=0$ on $[u_1,AB]$.

Apply the preceding paragraph with $u=u_1$, and set $\mathcal I_{\rm b}=[z^*-2\eta,z^*-\eta]$ for bumps and $\mathcal I_{\rm a}=[z^*+\eta,z^*+2\eta]$ for anchors. These disjoint nondegenerate intervals lie in $(u_1,b^*]$, where the baseline is zero, and
\[
 \ell_{\rm b}=|\mathcal I_{\rm b}|=\eta,\qquad
 p_- =\min_{z\in\mathcal I_{\rm b}}p_Z(z)\geq\frac{p_Z(z^*)}2>0,\qquad
 q_{\rm a}=\Pr(Z\in\mathcal I_{\rm a})\geq\frac{\eta\,p_Z(z^*)}2>0.
\]
Take a fixed smooth bump $\varphi$ with $0\leq\varphi\leq1$, supported on $[-1/2,1/2]$, and equal to one on $[-1/4,1/4]$. Choose $a_0>0$ with $a_0\norm{\varphi'}_\infty\leq L_f$. For $0<w\leq\ell_{\rm b}/2$, place $m_w=\lfloor\ell_{\rm b}/w\rfloor\geq\ell_{\rm b}/(2w)$ intervals $I_j$ of length $w$ with disjoint interiors in $\mathcal I_{\rm b}$, with centers $z_j$ and cores $J_j=[z_j-w/4,z_j+w/4]$. For $\omega\in\{-1,1\}^{m_w}$, set
\[
 B_{\omega,w}(z)=a_0w\sum_{j=1}^{m_w}\omega_j
          \varphi\bigl((z-z_j)/w\bigr),\qquad
 f_{\omega,w}(z)=\mathfrak b(z)+B_{\omega,w}(z).
 \label{eq:nonmonotone-bumps}
\]
At most one bump is nonzero at each point, and the baseline is flat throughout their intervals, so $\norm{B_{\omega,w}}_\infty\leq a_0w$ and $\norm{f_{\omega,w}'}_\infty\leq\max\{d_0, a_0\norm{\varphi'}_\infty\}\leq L_f$. Each link has both derivative signs on the actual support, since the baseline is flat on every nonzero signed bump. Its value on $J_j$ is $\omega_ja_0w$, whereas its value on the anchor interval is zero. For any fixed upper cutoff on $w$, the links are uniformly bounded on $[-AB,AB]$. Their higher derivatives need not be uniformly bounded as $w\downarrow0$.

\paragraph{Preserving the signal without normalizing the parameter.}
For any projection-greedy law $p^v$, including $p^0=p$, let $S^{p^v}$ be its full joint score. Weak integration by parts gives
\[
 \mathbb E_{p^v} B_{\omega,w}'(Z)
 =\frac1{r_0^2}\mathbb E_{p^v}
       [B_{\omega,w}(Z)\inner{S^{p^v}(X)}{\Theta_0}].
 \label{eq:nonmonotone-directional-stein}
\]
Indeed, the directional derivative along $\theta_0$ of $B_{\omega,w}(\theta_0^\top x)$ is $r_0^2B_{\omega,w}'(Z)$. The zero-extended density is in $W^{1,1}$, and a smooth ambient cutoff equal to one on the support justifies integration by parts. Thus no boundary condition on individual projected fibers is being assumed. The joint score certificates in Appendix~\ref{app:score-moments} give
\[
 |\mathbb E_{p}B_{\omega,w}'(Z)|
 \leq\frac{a_0w\sqrt{M}}{r_0},\qquad
 \sup_v|\mathbb E_{p^v}B_{\omega,w}'(Z)|
 \leq\frac{a_0w\sqrt{\overline J}}{r_0},\qquad
 \overline J=MG_K(1).
\]
The factor $r_0^{-1}$ is required when the revealed parameter is not unit-Frobenius. On the event that all $K$ candidate indices satisfy $Z\leq u_0$, every selection rule chooses such an index; this event has probability $\alpha^K$. Since the baseline derivative is nonnegative, $\mathbb E_{p^v}\mathfrak b'(Z)\geq d_0\alpha^K$ uniformly in $v$, including $v=0$. Set
\[
 w_0=\min\left\{\ell_{\rm b}/2,\ 1,\,
        \frac{(d_0\alpha^K-\gamma)r_0}{a_0\sqrt{\overline J}}\right\}>0.
 \label{eq:nonmonotone-bump-cutoff}
\]
For every $w\leq w_0$ and sign vector, $\mathbb E_{p^v}f_{\omega,w}'(Z)\geq\gamma$ uniformly in $v$. In particular the base signal is positive. This stronger conclusion concerns projection-greedy laws; it does not assert a signal floor under every nonlinear selection policy. Define the fixed class $\mathcal F_0=\{f_{\omega,w}:0<w\leq w_0, \ \omega\in\{-1,1\}^{m_w}\}$.

\paragraph{Adaptive information is limited by menu exposure.}
Let $H_t$ contain the observed menus, actions, rewards, and private policy randomness through round $t$. For neighboring signs $\omega$ and $\omega^{(j)}$, the menu laws and conditional decision kernels agree at each common history. Only the selected Gaussian reward contributes to their relative entropy. Writing $Z_{t,a}=\inner{X_{t,a}}{\Theta_0}$ and $Z_t=\inner{X_t}{\Theta_0}$, the chain rule therefore gives
\begin{equation}
 \begin{aligned}
 \operatorname{KL}(P_\omega^{H_T}\|P_{\omega^{(j)}}^{H_T})
 &=\frac1{2\sigma^2}\mathbb E_\omega\sum_{t=1}^T
    \{f_{\omega,w}(Z_t)-f_{\omega^{(j)},w}(Z_t)\}^2\\
 &\leq\frac{2a_0^2w^2}{\sigma^2}
           \sum_{t=1}^T\Pr_\omega(Z_t\in I_j)
 \leq\frac{2a_0^2Kp_+}{\sigma^2}Tw^3.
 \end{aligned}
 \label{eq:nonmonotone-exposure-kl}
\end{equation}
The last inequality holds for every adaptive strategy: it can select a candidate in $I_j$ only when its current menu contains one, an event of probability at most $Kp_+w$. Observing all the other coordinates does not change this argument. Take
\[
 c_0=\left(\frac{\sigma^2}{16a_0^2Kp_+}\right)^{1/3},\qquad
 w=c_0T^{-1/3},\qquad
 T_0=\max\{1,\lceil(c_0/w_0)^3\rceil\}.
\]
For $T\geq T_0$, every constructed link belongs to $\mathcal F_0$ and the KL in Equation~\eqref{eq:nonmonotone-exposure-kl} is at most $1/8$; replacing $p_+$ by its bound $\sqrt M/(2r_0)$ in $c_0$ preserves this. Marginalization and Pinsker's inequality imply $\operatorname{TV}(P_\omega^{H_{t-1}}, P_{\omega^{(j)}}^{H_{t-1}})\leq1/4$ at every round.

\paragraph{Testing conditional on the current informative menu.}
Let $E_{t,j}$ be the event that precisely one candidate index is in $J_j$ and all other indices are in $\mathcal I_{\rm a}$. These events are disjoint over $j$, and
\[
 \Pr(E_{t,j})=K\Pr(Z\in J_j)q_{\rm a}^{K-1}
 \geq c_Ew,\qquad c_E=Kp_-q_{\rm a}^{K-1}/2>0.
\]
There is at least one anchor because $K\geq3$. Fix signs $\omega^+,\omega^-$ differing only at $j$, with that sign positive and negative respectively. For each fixed full menu $\mathcal X\in E_{t,j}$, let $\pi_t(H,\mathcal X)\in[0,1]$ be the policy's probability of selecting the unique bump candidate. Current menus are independent of past histories and have the same law under both environments. Hence
\[
 \mathbb E_{\omega^+}[1-\pi_t(H_{t-1},\mathcal X)]
 +\mathbb E_{\omega^-}[\pi_t(H_{t-1},\mathcal X)]
 \geq1-\operatorname{TV}(P_{\omega^+}^{H_{t-1}},
                         P_{\omega^-}^{H_{t-1}})\geq3/4.
\]
On this menu, a wrong choice has regret $a_0w$: the bump is better than the anchors under $\omega^+$ and worse under $\omega^-$. Integrating the last inequality over these current menus gives
\[
 \mathbb E_{\omega^+}[r_t\mathbf1_{E_{t,j}}]
 +\mathbb E_{\omega^-}[r_t\mathbf1_{E_{t,j}}]
 \geq\tfrac34a_0w\Pr(E_{t,j})\geq\tfrac34a_0c_Ew^2.
\]
The event probability multiplies a constant testing error conditional on the current menu. Subtracting a constant unconditional total variation bound from the rare-event probability would not establish this result.

Average over the $2^{m_w}$ sign patterns, pairing neighbors for each $j$. Nonnegative regret and disjointness of the informative-menu events yield
\[
 \begin{aligned}
 2^{-m_w}\sum_\omega\mathbb E_\omega R_T
 &\geq\sum_{t=1}^T\sum_{j=1}^{m_w}
             2^{-m_w}\sum_\omega\mathbb E_\omega
                              [r_t\mathbf1_{E_{t,j}}]\\
 &\geq\tfrac38a_0c_Em_wTw^2
 \geq\tfrac3{16}a_0c_E\ell_{\rm b}c_0T^{2/3}.
 \end{aligned}
\]
This proves Equation~\eqref{eq:nonmonotone-lower-rate} with $c=3a_0c_E\ell_{\rm b}c_0/16$. All choices defining $c$ and $T_0$ depend only on the fixed problem quantities, the law $p$ and the fixed parameter; they depend on $p$ through $p_+$, $p_-$, $q_{\rm a}$, $\ell_{\rm b}$ and $\alpha$. Although the hardest link may depend on $T$, the class $\mathcal F_0$ does not. Revealing $\Theta_0$ only makes learning easier, so for the same $p$ the lower bound applies to the full model with an unknown structured index.

The bound matches the horizon exponent only, for fixed dimensions, $K$, and positive noise and signal constants; $c_E$ contains $q_{\rm a}^{K-1}$, so it gives no uniform statement in $K$. A zero-parameter class or $K=1$ has zero regret. The constructed links are individually smooth, but the class has no common bound on higher derivatives.

\section{Experimental details}\label{app:experiments}

\subsection{Dense synthetic design and six links}
\label{app:nonmonotone-experiment}
Let $a_1\in\{\pm1\}^{16}$ and $b_1\in\{\pm1\}^{64}$ have independent uniform signs. Set $a_2=a_1\odot q_a$, where $q_a$ is a uniformly permuted balanced sign vector, and $b_2=b_1\odot q_b$, where each of the four length-16 blocks of $q_b$ is independently balanced and permuted. Define
\[
 M_*=\frac{a_1b_1^\top+\tfrac12a_2b_2^\top}{1024},\qquad
 \Theta_{*,:, :,k}=M_{*,:,16(k-1)+1:16k},\quad k=1,\ldots,4.
\]
The two components are orthogonal globally and within each frontal slice. Every entry is nonzero, $\|\Theta_*\|_1=1$, and $\|\Theta_*\|_F^2=1.25/1024$. The matrix singular values are $1/32$ and $1/64$; each slice also has rank two and a $2:1$ singular-value ratio. The slice transform is $U=I_4$. The matrix scale $2\max\{16,64\}$ and slice scale $8\max\{16,16\}$ are both 128, so this family does not imply a matrix-versus-slice separation. Matrices horizontally concatenate frontal slices without interleaving their entries. All three representations preserve scalar inner products exactly.

Each round supplies $K=10$ independent candidate arrays with independent entries $X_j=2\operatorname{Beta}(5,5)-1$. The covariance and score moments are $\mathbb E[\operatorname{vec}(X)\operatorname{vec}(X)^\top]=I/11$ and $\mathbb E[\operatorname{vec}(S(X))\operatorname{vec}(S(X))^\top]=12I$, where $S_j(x)=8x_j/(1-x_j^2)$. The candidate bound is $A=1$ and coefficient budget is $B=1$. Write $s_0=\sqrt{1.25/(11\cdot1024)}\simeq0.01053437$ and $v=u/s_0$. The six fixed links are
\begin{align*}
 f_{\rm linear}(u)&=u,&
 f_{\rm valley}(u)&=\tfrac u2+s_0(\sqrt{1+v^2}-1),\\
 f_{\rm tanh}(u)&=u+\tfrac{s_0}{2}\tanh(v),&
 f_{\rm peak}(u)&=\tfrac u2-s_0(\sqrt{1+v^2}-1),\\
 f_{\rm convex}(u)&=u+\tfrac{s_0}{2}\log\cosh(v),&
 f_{\rm wave}(u)&=\tfrac u2+\tfrac{s_0}{2}(\cos(2v)-1).
\end{align*}
All vanish at zero, are $1.5$-Lipschitz, and are bounded in magnitude by 1.5 on $[-1,1]$. The first three are strictly increasing. Symmetry gives base Stein multipliers $1$, a value in $[1,1.5]$, $1$, $1/2$, $1/2$, and $1/2$, respectively. Valley and peak turn at $-s_0/\sqrt3$ and $s_0/\sqrt3$; wave derivatives vanish at standardized indices $\pi/12+k\pi$ and $5\pi/12+k\pi$. These turns occur on the index's standard-deviation scale. Noise is Student-$t_3$ multiplied by $s_0/(2\sqrt3)$, giving standard deviation $s_0/2$ and finite variance.

Coefficient, context, potential-noise, and uniform-action streams are independent. Within each seed, all six links and methods share the coefficient, candidate menus, potential-noise realizations, and uniform choices. Policies receive only their candidate features and chosen rewards. Hidden coefficients, true means, derivatives, and signal checks stay in the simulator. Regret is
\[
 R_T=\sum_{t=1}^T\left[\max_{a\le K}f(\langle X_{t,a},\Theta_*\rangle)
               -f(\langle X_{t,A_t},\Theta_*\rangle)\right],
\]
including exploration, bin initialization, and fallback rounds.

\subsection{Practical adaptive T-BSTOR}
The three T-BSTOR variants use the same uniform exploration and robust scalar learner, with radial vector influence or spectral matrix/slice influence in their paired base-score fit. At checkpoints $n=100,200,\ldots$, each fit uses all $n/2$ consecutive observation pairs and recomputes its sample-size-dependent influence and threshold parameters. The reconstructed estimate $\widehat\Theta_n^0$ is projected onto the original-coordinate entrywise $\ell_1$ ball of radius $L_fB$, with $L_f=1.5$, to give $\widehat\Theta_n$.

For numerically valid nonzero fits, form $v_n=\operatorname{vec}(\widehat\Theta_n)/\|\widehat\Theta_n\|_F$ and compare $\min\{\|v_n-v_{n-100}\|_2,\|v_n+v_{n-100}\|_2\}$ with the tuned tolerance. Exploration stops after two consecutive successful comparisons and at least 400 observations. An invalid or zero fit resets the success count and breaks successive-fit comparisons. The cap is $n_{\max}=2\lfloor0.4T/2\rfloor$; fitting at this even cap occurs even when it is not a multiple of 100. A zero/invalid capped fit triggers uniform selection for the remaining horizon. This stability rule is a practical stopping heuristic, not a certificate of estimator accuracy.

At the final fit freeze $s=\|\widehat\Theta\|_F$ and $v=\operatorname{vec}(\widehat\Theta)/s$. Scalar decisions use $\langle x,v\rangle$; the unit direction is not projected again onto an $\ell_1$ ball. Divide $[-R,R]$ into $J_{\rm bin}=\lceil c_JT^{1/3}\rceil$ equal bins, where
\[
 R=\sqrt{\frac2{11}\log\frac{2KT}{\delta_R}},\qquad
 \delta_R=(T+1)^{-2}.
\]
The registered $c_J\in\{0.5,1,2\}$ gives $J_{\rm bin}=21,41,81$ at the final synthetic horizon. The right endpoint belongs to the last bin. Indices outside the interval map to its edge bins; candidate and selected-action overflows are counted. Only chosen rewards collected after exploration initialize or update the bins. The minimum odd block count, consecutive equal-size median-of-means blocks, confidence allocation, initialization priority among available bins, and tie breaking are shared across all three representations. The radius is $4c_{\rm conf}\sigma\sqrt{k_{\rm blk}/m}$ for $m$ observations in a bin. Tuned confidence multipliers do not remove mandatory bin initialization.

\paragraph{Index range.}
After normalization the bin-approximation term scales with the fitted norm $s$: if $g$ is $G$-Lipschitz, $z\mapsto g(sz)$ is $Gs$-Lipschitz, giving $Gs(2R/J_{\rm bin})$. Each beta coordinate has moment-generating function at most $\exp(\lambda^2/22)$, so every unit projection of a fresh candidate has this bound. Conditional on the exploration stopping history and frozen direction, future menus remain independent. A union bound over at most $KT$ candidates keeps indices in $[-R,R]$ except with probability $\delta_R$, costing at most $\Delta T\delta_R$ in expectation when gaps are bounded by $\Delta$. This controls the index range only; extending the deterministic-exploration theorem to adaptive stopping would require a simultaneous estimation argument.

\subsection{Other algorithms and numerical implementations}
\label{app:numerical-oracle}
T-ESTOR retains doubling epochs $2,4,8,\ldots$, latest-epoch paired spectral estimation, and projection-greedy selection. A frozen direction $v$ determines the projection CDF $F_v$ and density $F_v'$ under the product-beta base law, yielding the exact selected score
\[
 S^{p_v}(X)=S^p(X)-(K-1)
 \frac{F_v'(\inner Xv)}{F_v(\inner Xv)}v.
\]
The numerical implementation uses double-precision Fourier convolution with all nonzero direction coefficients, 4,096 grid points, a 5\% support margin, and CDF floor $10^{-12}$. For multicoordinate directions, construction multiplies the coordinate characteristic functions, inverts the Fourier transform, normalizes the density, and integrates it to obtain the CDF. Subsequent scores project each action, interpolate the density and CDF, apply the floored ratio, and subtract the directional correction from the explicit beta base score. Oracle construction and evaluation are separate from the paired robust fits and their SVDs. A zero fit uses uniform play and base scores. T-ESTOR is evaluated only on monotone links. The numerical scores approximate the theorem's exact oracle. Grid comparisons establish neither uniform score-error certificates nor the selected-score and epoch conditions of Appendix~\ref{app:unknown-greedy}.

\paragraph{Oracle accuracy and cost.}
For three evaluation seeds and the three monotone links, we use the first, middle, and last epochs with nonzero incoming fits: 27 directions. Against a 16,384-point numerical reference, the 4,096-point oracle's maximum discrepancies at selected actions are $3.8\times10^{-4}$ in density, $1.3\times10^{-4}$ in CDF, and $5.9\times10^{-3}$ in score correction (Table~\ref{tab:oracle-diagnostics}). Tail discrepancies are larger: all three grids activate the floor on 37 of 459 tail queries, but none of 5,284 selected queries; nine full evaluation runs have no activations in 587,668 nonzero-fit queries. These checks do not certify uniform error.

\begin{table}[t]
\centering\small
\setlength{\tabcolsep}{3pt}
\caption{Oracle discrepancies against a 16,384-point numerical reference over all 27 frozen directions. Actual: the first up to 256 chosen projections per selected epoch; tails: 17 points from $-8/\sqrt{11}$ to $8/\sqrt{11}$. Entries are absolute; $g_v=F_v'$ and $c=(K-1)g_v/\max\{F_v,10^{-12}\}$. Floor counts refer to the compared grid.}
\label{tab:oracle-diagnostics}
\begin{tabular}{rlrrrrr}
\toprule
Grid & Queries & Max $|\Delta g|$ & Max $|\Delta F|$ & 95th $|\Delta c|$ & Max $|\Delta c|$ & Floors \\
\midrule
4,096 & Actual & $3.80\!\times\!10^{-4}$ & $1.26\!\times\!10^{-4}$ & $1.81\!\times\!10^{-3}$ & $5.85\!\times\!10^{-3}$ & 0/5284 \\
4,096 & Tails & $1.66\!\times\!10^{-4}$ & $1.17\!\times\!10^{-4}$ & $1.16\!\times\!10^{0}$ & $1.65\!\times\!10^{0}$ & 37/459 \\
8,192 & Actual & $1.03\!\times\!10^{-4}$ & $2.88\!\times\!10^{-5}$ & $4.11\!\times\!10^{-4}$ & $1.72\!\times\!10^{-3}$ & 0/5284 \\
8,192 & Tails & $4.04\!\times\!10^{-5}$ & $2.77\!\times\!10^{-5}$ & $2.38\!\times\!10^{-1}$ & $3.27\!\times\!10^{-1}$ & 37/459 \\
\bottomrule
\end{tabular}
\end{table}

On an Intel i7-1065G7 shared desktop, sequential double-precision diagnostics with one BLAS/OpenMP thread give median construction cost 1.52\,s per direction and correction evaluation cost $0.344\,\mu$s per query from precomputed 256-projection batches. Construction is fresh and uncached after one warm-up; evaluation uses five warm-ups and seven repeats of 100 batches. Both exclude reconstruction, feature projection, base scores, and robust fitting/SVD. Construction accounts for 43.1\% of pooled online time (31.6--51.6\% per run); the combined score timer does not isolate correction evaluation.

ZoomSIB-UCB follows Algorithm~1 and Appendix~G of \citet{dey2026optimal}: clipped reward-score exploration, entrywise $\ell_1$ normalization, adaptive normalized-estimate comparisons every 100 rounds, and sample-mean UCB over available bins. Its minimum is 200 observations with two checkpoints; its cap is $0.4T$. Its tuned stability criterion is unchanged and differs from the two-success Frobenius rule used by the T-BSTOR variants. At each checkpoint clipping is recomputed using all observations and the current sample size, an explicit implementation choice where the adaptive presentation is underspecified. The frozen direction uses the published scalar procedure; exploration rewards do not initialize bins. Its synthetic index window is $[-1,1]$.

GSTOR follows \citet{kang2026single}, including Appendix~K: two independent uniform stages of $T_1=\lceil c_nT^{3/4}\rceil$ observations, each capped at $\lfloor(T-1)/2\rfloor$. The first fits coordinatewise clipped reward-score products, optionally soft-thresholded by the declared penalty; the second fits the uniform-kernel predictor with bandwidth $c_hT_1^{-1/3}$. The coefficient normalizer is $\|\Sigma^{1/2}\widehat\theta\|_1$, with $\Sigma^{1/2}=I/\sqrt{11}$ and synthetic window $[-\sqrt{11},\sqrt{11}]$. Both estimators remain frozen during greedy commitment. Sorted indices and prefix sums evaluate the same closed kernel windows exactly. Empty and out-of-window predictions equal zero with first-candidate ties, including for negative rewards. The feasible set is the full vector space; selected nonzero penalties are identified as regularized GSTOR implementations. None of these baselines receives a true coefficient or link. The beta design differs from GSTOR's Gaussian theorem setting, and Student-$t_3$ noise is not sub-Gaussian as assumed by ZoomSIB-UCB; this comparison does not assert all external guarantees under the common environment.

\subsection{Registered tuning and reporting}
All arbitrary-link methods have 48 distinct configurations. For matrix and slice T-BSTOR, cross stability tolerances $\{0.05,0.1\}$, bin-count multipliers $\{0.5,1,2\}$, confidence multipliers $\{1/30,0.1\}$, and thresholds $c_\tau\in\{10^{-4},3\cdot10^{-4},10^{-3},3\cdot10^{-3}\}$, fixing the influence multiplier $c_\nu=16$. T-BSTOR-Vec has no low-rank threshold and instead crosses $c_\nu\in\{1,16,64,256\}$ with the same first three sets. ZoomSIB-UCB crosses tolerances $\{0.025,0.05,0.1\}$, width multipliers $\{0.0625,0.125,0.5,1\}$, confidence multipliers $\{0.25,1\}$, and truncation multipliers $\{0.1,1\}$. GSTOR retains the previous 24 settings with $c_n\in\{0.125,0.5,2\}$, bandwidth multipliers $\{0.25,1\}$, penalties $\{0,0.005\}$, and truncation multipliers $\{0.1,1\}$, adding 24 that use bandwidth multipliers $\{0.0625,0.125\}$ and penalties $\{0,0.0001\}$ with the same other sets. T-ESTOR retains its 24 settings crossing $c_\nu\in\{1,16,256\}$ with $c_\tau\in\{0.0003,0.001,0.003,0.01,0.03,0.1,0.3,1\}$.

Synthetic screening uses three seeds at $T=8{,}190$ and advances eight configurations per method and link. Second-stage validation uses ten separate seeds at $T=32{,}766$; final evaluation uses 50 fresh paired seeds at $T=65{,}534$. Each selection minimizes mean final regret, with prespecified configuration order breaking ties. The monotone method screens fewer configurations and has the same second-stage budget. All choices freeze before evaluation; held-out results do not change grids or selected settings.

Figures use 10,000 paired whole-seed percentile bootstrap draws. Bands are pointwise 95\% intervals, not simultaneous bands. Final-regret contrasts resample the same paired seedwise differences; these intervals are unadjusted for multiple comparisons. Direction errors average the sign-invariant distance $\min\{\|v-v_*\|_2,\|v+v_*\|_2\}$ over valid nonzero fits, where $v_*=\operatorname{vec}(\Theta_*)/\|\Theta_*\|_F$. They are undefined for zero fits and do not certify T-ESTOR's orientation. No structural rank is assigned to the trivial vector matrix encoding. Tables~\ref{tab:comparison-v2-synthetic-diagnostics} and~\ref{tab:comparison-v2-synthetic-fit-diagnostics} summarize exploration, initialization, bins, and fits. Diagnostics include stopping times, phase regret, within-selected-bin loss, norms, pre/post-projection ranks, overflows, and fallbacks; baseline phase summaries do not separately identify bin initialization.

Online times exclude shared environment construction and offline preparation/calibration; cached fits/projections and parallel runs make these operational timings, not hardware-independent speed comparisons. In an earlier signed-diagonal design, no arbitrary-link T-BSTOR variant had the lowest mean regret in any panel: monotone means were 1,645--1,658, versus 813 for ZoomSIB-UCB and 127 for T-ESTOR; GSTOR had the lowest means on the old nonmonotone panel. These descriptive means are not direct improvement estimates: the design, signal/noise scale, and practical algorithm changed. The revision was designed to test whether dense low-rank structure and the revised practical implementation yield a finite-horizon advantage.

\subsection{Synthetic settings and diagnostic summaries}
Table~\ref{tab:comparison-v2-synthetic-settings} lists the selected hyperparameters for every method and link. Table~\ref{tab:comparison-v2-synthetic-paired} gives the prespecified paired contrasts; Tables~\ref{tab:comparison-v2-synthetic-diagnostics} and~\ref{tab:comparison-v2-synthetic-fit-diagnostics} report exploration, initialization, and fit diagnostics.

Both structured T-BSTOR variants have smaller mean sign-invariant direction errors than T-BSTOR-Vec on each link: their means range from 0.413 to 0.626, compared with 0.661 to 0.804 for vectorization. Their mean later-scalar regret and within-selected-bin loss are also lower on every link. Initialization still requires about 1,100--2,072 rounds. These descriptive totals span different numbers of scalar rounds; their magnitudes alone do not compare per-round prediction quality or identify a sole cause of the regret gap. The fitted ranks are estimated rather than supplied from the true tensor, and the direction diagnostic does not certify the stopping rule.

% Permit ordinary placement without changing the generated table sources.
\begingroup
\let\savedexperimenttable\table
\let\endsavedexperimenttable\endtable
\renewenvironment{table}[1][]{\savedexperimenttable[!htbp]}{\endsavedexperimenttable}
\begin{table}[t]
\centering\small
\caption{Frozen synthetic configurations. Every tuned field appears in its method's tuple order. $c_\nu$ is the influence multiplier; $c_w$ and $c_h$ are width and bandwidth multipliers, $\lambda_{\rm reg}$ is the GSTOR penalty, and $c_{\rm trunc}$ is the truncation multiplier. Nonzero $\lambda_{\rm reg}$ denotes regularized GSTOR.}
\label{tab:comparison-v2-synthetic-settings}
\setlength{\tabcolsep}{5pt}
\begin{tabular}{lll}
\toprule
Panel & Policy & Parameter tuple \\
\midrule
\multicolumn{3}{l}{T-BSTOR: $(\varepsilon,c_J,c_{\rm conf},c_\nu,c_\tau)$} \\
Linear & T-BSTOR-Vec & $(0.1,\,0.5,\,\tfrac1{30},\,64,\,0)$ \\
Linear & T-BSTOR-Mat & $(0.1,\,1,\,\tfrac1{30},\,16,\,0.001)$ \\
Linear & T-BSTOR-Slice & $(0.1,\,1,\,\tfrac1{30},\,16,\,0.001)$ \\
Tanh & T-BSTOR-Vec & $(0.1,\,0.5,\,\tfrac1{30},\,256,\,0)$ \\
Tanh & T-BSTOR-Mat & $(0.1,\,1,\,\tfrac1{30},\,16,\,0.001)$ \\
Tanh & T-BSTOR-Slice & $(0.1,\,1,\,\tfrac1{30},\,16,\,0.001)$ \\
Convex & T-BSTOR-Vec & $(0.1,\,1,\,\tfrac1{30},\,256,\,0)$ \\
Convex & T-BSTOR-Mat & $(0.1,\,1,\,\tfrac1{30},\,16,\,0.001)$ \\
Convex & T-BSTOR-Slice & $(0.1,\,1,\,\tfrac1{30},\,16,\,0.001)$ \\
Valley & T-BSTOR-Vec & $(0.1,\,1,\,\tfrac1{30},\,256,\,0)$ \\
Valley & T-BSTOR-Mat & $(0.05,\,1,\,\tfrac1{30},\,16,\,0.001)$ \\
Valley & T-BSTOR-Slice & $(0.05,\,1,\,\tfrac1{30},\,16,\,0.001)$ \\
Peak & T-BSTOR-Vec & $(0.1,\,0.5,\,\tfrac1{30},\,1,\,0)$ \\
Peak & T-BSTOR-Mat & $(0.1,\,1,\,\tfrac1{30},\,16,\,0.001)$ \\
Peak & T-BSTOR-Slice & $(0.1,\,0.5,\,0.1,\,16,\,0.001)$ \\
Wave & T-BSTOR-Vec & $(0.1,\,0.5,\,0.1,\,256,\,0)$ \\
Wave & T-BSTOR-Mat & $(0.1,\,0.5,\,0.1,\,16,\,0.001)$ \\
Wave & T-BSTOR-Slice & $(0.1,\,0.5,\,\tfrac1{30},\,16,\,0.001)$ \\
\midrule
\multicolumn{3}{l}{ZoomSIB-UCB: $(\varepsilon,c_w,c_{\rm conf},c_{\rm trunc})$} \\
Linear & ZoomSIB-UCB & $(0.1,\,0.0625,\,0.25,\,0.1)$ \\
Tanh & ZoomSIB-UCB & $(0.1,\,0.0625,\,0.25,\,0.1)$ \\
Convex & ZoomSIB-UCB & $(0.1,\,0.0625,\,0.25,\,0.1)$ \\
Valley & ZoomSIB-UCB & $(0.05,\,0.0625,\,0.25,\,0.1)$ \\
Peak & ZoomSIB-UCB & $(0.1,\,0.125,\,0.25,\,0.1)$ \\
Wave & ZoomSIB-UCB & $(0.1,\,0.125,\,0.25,\,0.1)$ \\
\midrule
\multicolumn{3}{l}{GSTOR: $(c_n,c_h,\lambda_{\rm reg},c_{\rm trunc})$} \\
Linear & GSTOR & $(2,\,1,\,0,\,0.1)$ \\
Tanh & GSTOR & $(0.5,\,0.25,\,0,\,0.1)$ \\
Convex & GSTOR & $(2,\,1,\,0,\,0.1)$ \\
Valley & GSTOR & $(2,\,1,\,0,\,0.1)$ \\
Peak & GSTOR & $(0.5,\,1,\,0,\,0.1)$ \\
Wave & GSTOR & $(2,\,1,\,0,\,0.1)$ \\
\midrule
\multicolumn{3}{l}{T-ESTOR (monotonicity-aware): $(c_\nu,c_\tau)$} \\
Linear & T-ESTOR & $(16,\,0.0003)$ \\
Tanh & T-ESTOR & $(1,\,0.0003)$ \\
Convex & T-ESTOR & $(256,\,0.0003)$ \\
\bottomrule
\end{tabular}
\end{table}

\begin{table}[t]
\centering\small
\caption{Paired final-regret differences: the predeclared structured reference minus each comparator. Negative values favor the reference. Brackets give unadjusted paired 95\% percentile bootstrap intervals. The reference is T-BSTOR-Slice.}
\label{tab:comparison-v2-synthetic-paired}
\begin{tabular}{lrrr}
\toprule
Panel & Versus T-BSTOR-Vec & Versus ZoomSIB-UCB & Versus GSTOR \\
\midrule
Linear & $-134.2\,[-137.6,-130.5]$ & $-17.9\,[-22.9,-13.4]$ & $-109.5\,[-112.9,-106.0]$ \\
Tanh & $-154.5\,[-158.5,-150.5]$ & $-6.8\,[-11.9,-2.0]$ & $-78.9\,[-84.6,-73.7]$ \\
Convex & $-154.9\,[-159.9,-149.8]$ & $-43.4\,[-50.2,-36.8]$ & $-126.1\,[-130.9,-121.1]$ \\
Valley & $-181.2\,[-185.0,-177.5]$ & $-42.3\,[-47.5,-37.0]$ & $-119.1\,[-123.4,-114.8]$ \\
Peak & $-35.2\,[-37.1,-33.4]$ & $-7.5\,[-10.0,-5.2]$ & $-38.6\,[-40.4,-36.9]$ \\
Wave & $-26.1\,[-27.0,-25.1]$ & $-5.2\,[-6.6,-3.7]$ & $-40.9\,[-41.8,-39.9]$ \\
\bottomrule
\end{tabular}
\end{table}

\begin{table}[t]
\centering\scriptsize
\caption{T-BSTOR diagnostics averaged over evaluation seeds. $n_0$ is exploration length; $m_{\rm init}$ counts scalar-bin initialization rounds. The three regret columns charge exploration, initialization, and later scalar rounds. $D_{\rm bin}$ sums loss relative to the best candidate in the selected bin. Overflow is the percentage of binned candidate indices outside the frozen interval. Uniform-fallback rounds are excluded from the later-scalar column and retained separately in the accompanying summary.}
\label{tab:comparison-v2-synthetic-diagnostics}
\setlength{\tabcolsep}{3pt}
\begin{tabular}{llrrrrrrr}
\toprule
Panel & Policy & $n_0$ & $m_{\rm init}$ & $R_{\rm exp}$ & $R_{\rm init}$ & $R_{\rm later}$ & $D_{\rm bin}$ & Overflow \% \\
\midrule
Linear & T-BSTOR-Vec & 4138.0 & 1102.0 & 67.2 & 20.6 & 261.6 & 145.5 & 0.000 \\
Linear & T-BSTOR-Mat & 2062.0 & 2072.3 & 33.5 & 40.4 & 112.4 & 53.6 & 0.000 \\
Linear & T-BSTOR-Slice & 2972.0 & 2069.6 & 48.2 & 40.0 & 126.9 & 56.5 & 0.000 \\
Tanh & T-BSTOR-Vec & 4018.0 & 1102.9 & 83.4 & 25.9 & 315.6 & 180.3 & 0.000 \\
Tanh & T-BSTOR-Mat & 2392.0 & 2070.5 & 49.7 & 50.2 & 120.0 & 61.3 & 0.000 \\
Tanh & T-BSTOR-Slice & 3134.0 & 2068.3 & 65.0 & 49.7 & 155.8 & 69.9 & 0.000 \\
Convex & T-BSTOR-Vec & 4198.0 & 2061.4 & 80.3 & 39.2 & 307.2 & 95.8 & 0.000 \\
Convex & T-BSTOR-Mat & 2126.0 & 2070.8 & 40.7 & 40.0 & 152.3 & 71.3 & 0.000 \\
Convex & T-BSTOR-Slice & 3054.0 & 2067.8 & 58.3 & 39.8 & 173.6 & 75.3 & 0.000 \\
Valley & T-BSTOR-Vec & 5002.0 & 2060.8 & 67.7 & 22.3 & 340.2 & 82.4 & 0.000 \\
Valley & T-BSTOR-Mat & 4920.0 & 2061.1 & 66.5 & 20.0 & 154.1 & 62.3 & 0.000 \\
Valley & T-BSTOR-Slice & 6914.0 & 2053.5 & 93.4 & 19.8 & 135.7 & 57.7 & 0.000 \\
Peak & T-BSTOR-Vec & 5102.0 & 1099.2 & 25.7 & 11.7 & 103.8 & 48.6 & 0.000 \\
Peak & T-BSTOR-Mat & 2590.0 & 2067.6 & 13.1 & 25.0 & 68.1 & 25.0 & 0.000 \\
Peak & T-BSTOR-Slice & 3632.0 & 1105.0 & 18.3 & 13.8 & 73.8 & 46.0 & 0.000 \\
Wave & T-BSTOR-Vec & 5140.0 & 1098.8 & 33.3 & 9.4 & 139.6 & 54.0 & 0.000 \\
Wave & T-BSTOR-Mat & 2518.0 & 1106.8 & 16.3 & 9.9 & 124.7 & 46.6 & 0.000 \\
Wave & T-BSTOR-Slice & 3512.0 & 1104.8 & 22.7 & 9.8 & 123.7 & 46.3 & 0.000 \\
\bottomrule
\end{tabular}
\end{table}

\begin{table}[t]
\centering\small
\caption{Frozen-fit and bin diagnostics for T-BSTOR policies. $s$ is the fitted Frobenius norm before unit normalization. $r_{\rm th}$ and $r_{\rm pr}$ are the rank after thresholding and after entrywise projection: matrix rank or sum of slice ranks, omitted for vectors. Bins counts occupied scalar bins. Numeric summaries average observed runs; zero and fallback columns count final zero fits and uniform-fallback runs out of all evaluation seeds. Dashes denote unavailable or inapplicable quantities. $d$ is sign-invariant direction error averaged over valid nonzero fits; parentheses give their count.}
\label{tab:comparison-v2-synthetic-fit-diagnostics}
\setlength{\tabcolsep}{3pt}
\begin{tabular}{llrrrrrrr}
\toprule
Panel & Policy & $s$ & $r_{\rm th}$ & $r_{\rm pr}$ & Bins & Zero & Fallback & $d$ (count) \\
\midrule
Linear & T-BSTOR-Vec & 0.04509 & -- & -- & 12.8 & 0/50 & 0/50 & 0.673 (50) \\
Linear & T-BSTOR-Mat & 0.02027 & 2.8 & 2.8 & 23.7 & 0/50 & 0/50 & 0.444 (50) \\
Linear & T-BSTOR-Slice & 0.02395 & 17.4 & 17.4 & 23.8 & 0/50 & 0/50 & 0.481 (50) \\
Tanh & T-BSTOR-Vec & 0.05618 & -- & -- & 12.8 & 0/50 & 0/50 & 0.661 (50) \\
Tanh & T-BSTOR-Mat & 0.03267 & 6.6 & 6.6 & 23.7 & 0/50 & 0/50 & 0.413 (50) \\
Tanh & T-BSTOR-Slice & 0.03615 & 24.8 & 24.8 & 23.8 & 0/50 & 0/50 & 0.483 (50) \\
Convex & T-BSTOR-Vec & 0.04427 & -- & -- & 23.6 & 0/50 & 0/50 & 0.677 (50) \\
Convex & T-BSTOR-Mat & 0.02051 & 3.1 & 3.1 & 23.8 & 0/50 & 0/50 & 0.445 (50) \\
Convex & T-BSTOR-Slice & 0.02421 & 18.0 & 18.0 & 23.7 & 0/50 & 0/50 & 0.486 (50) \\
Valley & T-BSTOR-Vec & 0.02533 & -- & -- & 23.9 & 0/50 & 0/50 & 0.804 (50) \\
Valley & T-BSTOR-Mat & 0.007656 & 1.9 & 1.9 & 23.7 & 0/50 & 0/50 & 0.505 (50) \\
Valley & T-BSTOR-Slice & 0.009414 & 8.4 & 8.4 & 23.7 & 0/50 & 0/50 & 0.478 (50) \\
Peak & T-BSTOR-Vec & 0.02574 & -- & -- & 12.9 & 0/50 & 0/50 & 0.797 (50) \\
Peak & T-BSTOR-Mat & 0.005133 & 1.1 & 1.1 & 24.0 & 0/50 & 0/50 & 0.620 (50) \\
Peak & T-BSTOR-Slice & 0.007434 & 7.5 & 7.5 & 12.9 & 0/50 & 0/50 & 0.626 (50) \\
Wave & T-BSTOR-Vec & 0.02476 & -- & -- & 12.8 & 0/50 & 0/50 & 0.777 (50) \\
Wave & T-BSTOR-Mat & 0.004731 & 1.1 & 1.1 & 12.9 & 0/50 & 0/50 & 0.617 (50) \\
Wave & T-BSTOR-Slice & 0.007078 & 7.0 & 7.0 & 12.8 & 0/50 & 0/50 & 0.626 (50) \\
\bottomrule
\end{tabular}
\end{table}

\endgroup

\clearpage
\section{CCLE-derived bandit experiments}
\label{app:ccle}
\suppressfloats[t]

We conduct semisynthetic bandit experiments using a fixed cell--compound--dose response block from the Cancer Cell Line Encyclopedia (CCLE) \citep{barretina2012ccle}. T-ESTOR has the lowest mean Linear regret among the compared methods, including G-LowTESTR with the correct linear link.

\subsection{Data and preprocessing}
\label{app:ccle-data}

\paragraph{Cohort and preprocessing.}
We use Broad's legacy pharmacological release dated February 24, 2015, retrieved on September 20, 2026.\footnote{Broad Institute distribution: \href{https://data.broadinstitute.org/ccle_legacy_data/pharmacological_profiling/CCLE_NP24.2009_Drug_data_2015.02.24.csv}{CCLE legacy pharmacological data}. Release counts need not equal those in the original article.} The file contains 93,251 observed activity values for 504 cell lines, 24 compounds, and eight concentrations. Restricting to the 282 cell lines observed at every compound--dose combination gives a $282\times24\times8$ complete-case tensor without imputation. Concentrations are ordered as $0.0025,0.008,0.025,0.08,0.25,0.8,2.53,8$ $\mu$M. We preserve raw reported activity, including negative values, without clipping, sign reversal, or replacement by an IC50 summary. We make no therapeutic interpretation of the response sign.

\subsection{Fixed bandit environment and evaluation}
\label{app:ccle-design}

\paragraph{Empirical coefficient and simulated feedback.}
We use a fixed, reproducible subset of 16 complete-case cell lines, selected before model fitting and policy comparison. Cell and compound axes are ordered lexicographically, and doses follow the concentration order above. For the resulting $16\times24\times8$ raw response block $Y$, set
\[
 \Theta_* = \frac{Y-\bar Y}{\|Y-\bar Y\|_1},\qquad
 s_0=\frac{\|\Theta_*\|_F}{\sqrt{11}}
     =0.006945307344.
\]
Here $\bar Y$ is the grand mean of the selected block, and the denominator is the entrywise $\ell_1$ norm. This coefficient is not projected onto an exact low-rank class. The DCT methods use a fixed orthonormal DCT-II transform along the dose axis, based on dose order without assuming equally spaced physical concentrations. Each of the $K=10$ fresh candidate tensors has independent entries distributed as $2\operatorname{Beta}(5,5)-1$. Rewards have independent Gaussian noise with standard deviation $s_0/2$. We use
\[
 f_{\rm Linear}(u)=u,\qquad
 f_{\rm Wave}(u)=\tfrac12u+
    \tfrac12s_0\{\cos(2u/s_0)-1\}.
\]
These actions are dense measurements of a CCLE-derived coefficient, not drug prescriptions; feedback is simulated, not replayed laboratory outcomes. The base score is $S_j(x)=8x_j/(1-x_j^2)$. Policies receive their current menus and selected noisy feedback; coefficients, true means, and direction diagnostics remain evaluator-only.

\paragraph{Methods and pairing.}
We compare 12 methods on Linear and ten on Wave at $T=20{,}000$ on ten shared trajectory seeds, totaling 220 trajectories. Methods use the same coefficient, scale, candidate menus, and potential-noise realizations within each seed. Cumulative pseudo-regret uses the best true mean in each realized menu and includes every exploration, initialization, and fallback round. The matrix representations unfold the cell, compound, or dose axis; Mat-Drug is the declared primary unfolding, and all three are reported.

The Gaussian sample-mean T-BSTOR variants use 4,000 uniform observations (2,000 independent pairs), 28 scalar bins, and confidence multiplier $c_{\rm conf}=0.25$. The common scalar configuration was selected on DCT Slice and transferred to the other representations; each representation fits its own direction. The dagger ($\dagger$) in the table and figure marks the DCT Slice variant using median-of-means scalar bins; the other T-BSTOR variants use the Gaussian sample-mean learner. GSTOR and ZoomSIB-UCB retain their own estimators and policies \citep{kang2026single,dey2026optimal}. G-LowTESTR retains tensor GLM estimation and full-coordinate LowGLM-UCB, including complementary-subspace regularization \citep{yi2024efficient}. The method assumes a supplied GLM link. We supply identity on both tasks: it is correct on Linear and misspecified on Wave; the true Wave link is not given to this policy. T-ESTOR and dense ESTOR use monotonicity and are evaluated only on Linear. Tuning opportunities and exploration schedules differ across families.

\paragraph{Epoch methods.}
T-ESTOR uses the DCT slice estimator with $c_\eta=16$ and $c_\tau=0.00015$. ESTOR uses the dense unpaired estimator of \citet{kang2026single}, with initial epoch size $T_0=50$ and coordinatewise clipping multiplier $c_\tau=1$. Each method retains its own doubling-epoch schedule. Their last completed fits use 8,192 observations (4,096 pairs) for T-ESTOR and 6,400 unpaired observations for ESTOR; terminal partial epochs continue with the previous fit. Both methods evaluate the selected-law score using their own estimated direction, an 8,192-point numerical projection oracle, and CDF floor $10^{-12}$. Numerical refinement checks do not provide a uniform tail-error certificate.

\paragraph{Selection and uncertainty.}
Development uses three seeds. Settings were fixed before each reported run; the ten-seed evaluation panel had been inspected during development and does not constitute independent confirmation. All ten seeds are retained. Bandit intervals use 2,000 paired trajectory-seed bootstrap resamples, conditional on the fixed coefficient and settings, without adjustment for multiple comparisons or adaptive development. Curve bands are pointwise, not simultaneous.

% Generated by experiments/export_realdata_appendix.py; do not edit.
\begin{table}[t]
\centering\small
\setlength{\tabcolsep}{5pt}
\caption{CCLE-calibrated cumulative pseudo-regret at $T=20{,}000$: means over ten shared trajectory seeds, with every round included (lower is better). T-ESTOR and ESTOR are monotone-only and inapplicable on Wave (---). G-LowTESTR assumes the identity link, correctly specified on Linear and misspecified on Wave. The seed panel was previously inspected; results are conditional on the fixed coefficient and settings.}
\label{tab:apprealdata-ccle-results}
\begin{tabular}{lrr}
\toprule
Method & Linear & Wave \\
\midrule
T-ESTOR & 57.139 & --- \\
ESTOR & 90.608 & --- \\
G-LowTESTR (identity link) & 59.173 & 42.648 \\
T-BSTOR-Slice-DCT & 102.385 & 46.390 \\
T-BSTOR-Slice-DCT$^\dagger$ & 110.678 & 49.549 \\
T-BSTOR-Slice-Identity & 107.808 & 50.074 \\
T-BSTOR-Mat-Cell & 101.514 & 45.905 \\
T-BSTOR-Mat-Drug & 106.483 & 48.126 \\
T-BSTOR-Mat-Dose & 104.045 & 46.188 \\
T-BSTOR-Vec & 122.918 & 54.333 \\
GSTOR & 116.139 & 55.453 \\
ZoomSIB-UCB & 159.043 & 65.639 \\
\bottomrule
\end{tabular}
\end{table}

\subsection{Bandit results}
\label{app:ccle-results}
\label{app:ccle-limits}

% Generated by experiments/export_realdata_appendix.py; do not edit.
\begin{figure}[t]
\centering
\includegraphics[width=\linewidth]{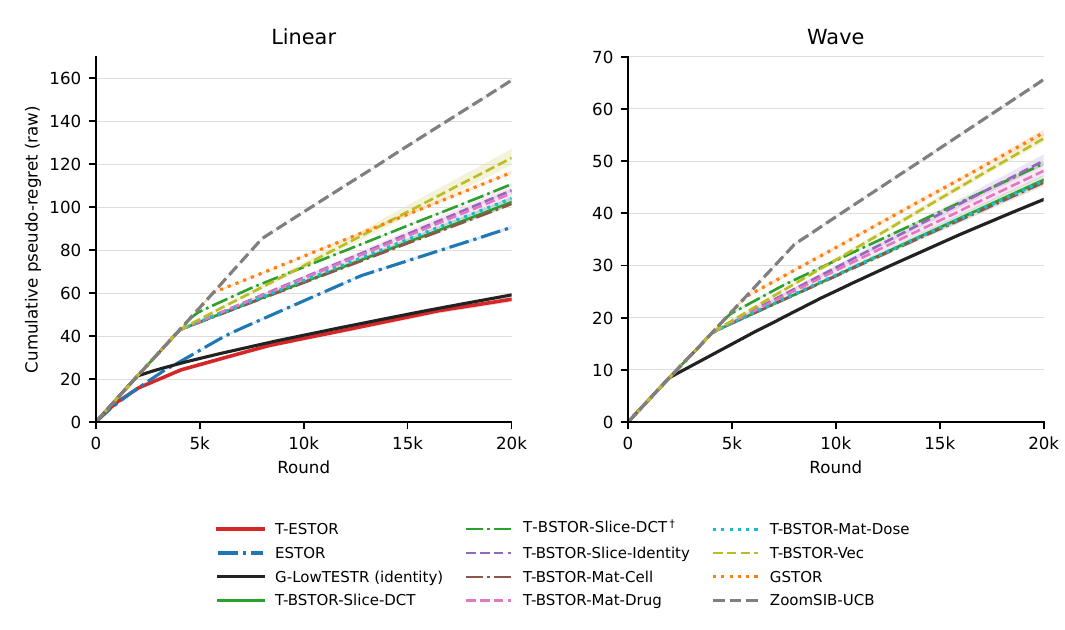}
\caption{CCLE-calibrated mean cumulative pseudo-regret over ten shared trajectory seeds on Linear and Wave. Shading gives pointwise 95\% paired-seed percentile intervals, not simultaneous bands. T-ESTOR and ESTOR are monotone-only; G-LowTESTR assumes identity on both links. Curves connect recorded cumulative-regret values. Results are conditional on the fixed coefficient and settings; intervals do not adjust for the previously inspected seed panel or adaptive configuration selection.}
\label{fig:apprealdata-ccle-regret}
\end{figure}

On Linear, T-ESTOR's mean regret is $57.139$, versus $59.173$ for G-LowTESTR with the correct linear link and $90.608$ for ESTOR (Table~\ref{tab:apprealdata-ccle-results}). Its paired difference from G-LowTESTR is $-2.034$ with interval $[-3.373,-0.746]$: a 3.44\% lower matched mean and lower regret on seven of ten seeds. Relative to ESTOR, the difference is $-33.470$ with interval $[-35.335,-31.775]$, a 36.94\% reduction in matched mean regret. These contrasts remain descriptive under the reused-panel qualifications.

Figure~\ref{fig:apprealdata-ccle-regret} shows regret on both links. T-BSTOR-Slice-DCT has lower mean final regret than Vec, GSTOR, and ZoomSIB-UCB on both tasks. On Wave, its mean is $46.390$, versus $55.453$ for GSTOR and $65.639$ for ZoomSIB-UCB, reductions of 16.3\% and 29.3\%. These gains support its usefulness against the two external unknown-link baselines in this setting. Cell unfolding ($45.905$), Dose unfolding, and identity-link G-LowTESTR ($42.648$) have lower means. T-ESTOR and ESTOR require monotonicity and are not applied to Wave.

G-LowTESTR's lower Wave regret does not imply knowledge of the true link. The Wave mean decomposes as $f_{\rm Wave}(u)=\tfrac12u-s_0\sin^2(u/s_0)$. Under the centrally symmetric exploration distribution, the even nonlinear term has zero linear cross-moment with the action. Thus the unpenalized population least-squares coefficient is $\Theta_*/2$, retaining the true index direction despite misspecification. This identity concerns the base sampling distribution, not the finite-sample regularized or adaptive fit, and does not guarantee the correct nonlinear action ranking. G-LowTESTR also uses 2,000 initial exploration rounds and continued online coefficient updates, whereas T-BSTOR-Slice-DCT uses 4,000 and then fixes its direction while learning scalar bins. These algorithm differences may contribute to the observed ranking; the experiment does not isolate their causal effects.

These results concern one complete-case CCLE coefficient that was not projected onto an exact low-rank class; they establish neither population generalization nor a uniform tensor-over-matrix advantage, and do not directly instantiate a small-rank regret theorem. Numerical scores, Gaussian sample-mean bins, and reduced confidence multipliers do not automatically inherit the exact-score and finite-variance MOM guarantees. Differences in estimators, tuning, and update schedules prevent attributing every regret gap solely to structure. The simulated feedback supports no claim about clinical decisions.

\stopcontents[appendix]
\end{document}